\def\eqref#1{equation~\ref{#1}}
\def\1{\bm{1}}
\DeclareMathAlphabet{\mathsfit}{\encodingdefault}{\sfdefault}{m}{sl}
\SetMathAlphabet{\mathsfit}{bold}{\encodingdefault}{\sfdefault}{bx}{n}
\newcommand{\R}{\mathbb{R}}
\DeclareMathOperator*{\argmin}{arg\,min}
\newtheorem{theorem}{Theorem}
\newtheorem{lemma}{Lemma}
\newtheorem{remark}{Remark}
\newtheorem{proposition}{Proposition}
\newtheorem{assumption}{Assumption}
\newenvironment{proofsketch}{%
  \proof}{\endproof}
\newcommand{\rvar}[1]{\mathrm{#1}}        
\newcommand{\rvec}[1]{\mathbf{#1}}        
\renewcommand{\vec}[1]{{\boldsymbol #1}}  
\newcommand{\mat}[1]{{\boldsymbol #1}}    
\newcommand{\EE}[2][]{\mathbb{E}_{#1}\left[#2\right]} 
\newcommand{\PP}[2][]{\mathbb{P}_{#1}\left[#2\right]} 
\newcommand{\A}{\mathcal{A}}
\newcommand{\D}{\mathcal{D}}
\newcommand{\F}{\mathcal{F}}
\newcommand{\M}{\mathcal{M}}
\newcommand{\N}{\mathcal{N}}
\renewcommand{\O}{\mathcal{O}} 
\renewcommand{\R}{\mathcal{R}} 
\renewcommand{\S}{\mathcal{S}} 
\newcolumntype{N}{>{\centering\arraybackslash}m{.6in}}
\newcolumntype{G}{>{\centering\arraybackslash}m{2in}}
\newenvironment{customthm}[1]
  {\innercustomthm}
  {\endinnercustomthm}
\newenvironment{customprop}[1]
  {\innercustomprop}
  {\endinnercustomprop}
\title{Solving General-Utility Markov Decision Processes in the Single-Trial Regime with\\Online Planning}
\author{Pedro P. Santos \\
INESC-ID \& Instituto Superior Técnico \\
Lisbon, Portugal \\
\texttt{pedro.pinto.santos@tecnico.ulisboa.pt} \\
\AND
Alberto Sardinha \\
PUC-Rio \\
Rio de Janeiro, Brazil \\
\texttt{sardinha@inf.puc-rio.br} \\
\And
Francisco S. Melo \\
INESC-ID \& Instituto Superior Técnico \\
Lisbon, Portugal \\
\texttt{fmelo@inesc-id.pt}
}
\begin{document}

\maketitle

\begin{abstract}
  In this work, we contribute the first approach to solve infinite-horizon discounted general-utility Markov decision processes (GUMDPs) in the single-trial regime, i.e., when the agent's performance is evaluated based on a single trajectory. First, we provide some fundamental results regarding policy optimization in the single-trial regime, investigating which class of policies suffices for optimality, casting our problem as a particular MDP that is equivalent to our original problem, as well as studying the computational hardness of policy optimization in the single-trial regime. Second, we show how we can leverage online planning techniques, in particular a Monte-Carlo tree search algorithm, to solve GUMDPs in the single-trial regime. Third, we provide experimental results showcasing the superior performance of our approach in comparison to relevant baselines.
\end{abstract}

\section{Introduction}
\label{sec:introduction}
Markov decision processes (MDPs) have found a wide range of applications in different domains such as inventory management \citep{dvoretzky_1952}, queueing control \citep{stidham_1978}, or optimal stopping \citep{chow1971great}. MDPs are also of key importance in the field of reinforcement learning (RL) \cite{sutton_1998}, where the agent-environment interaction is usually modeled by resorting to the framework of MDPs. In addition, recent years have seen significant progress in applying RL techniques to different domains \citep{mnih_2015,silver_2017,lillicrap_2016}, attesting to the flexibility of the MDP framework with respect to objective-specification. 

However, despite providing a flexible framework concerning objective-specification, previous research has shown that multiple relevant objectives cannot be easily expressed within the MDP framework \citep{abel2022expressivity}. Such objectives include, but not limited to, imitation learning \citep{hussein_2017,Osa_2018}, pure exploration problems \citep{hazan_2019}, risk-averse RL \citep{garcia_2015}, diverse skills discovery \citep{eysenbach2018diversity,achiam2018variational}, constrained MDPs \citep{altman_1999,efroni_2020}, and adversarial MDPs \citep{rosenberg_2019}. All aforementioned objectives can be cast under the framework of general-utility Markov decision processes (GUMDPs) \citep{santos_2024}. GUMDPs generalize the framework of MDPs by allowing the objective to be a non-linear function of the occupancy (the frequency of visitation of state-action pairs induced when running a given policy on the MDP). Recent works unified such objectives under the GUMDP framework and proposed algorithms to solve GUMDPs with convex objectives \citep{zhang2020variational,geist2022concave,zahavy_2021}.

Unfortunately, in GUMDPs, the performance of a given policy may depend on the number of trials/trajectories drawn to evaluate its performance \citep{mutti_2023,santos_2024}. In fact, the standard formulation of GUMDPs implicitly assumes the performance of a given policy is evaluated over an infinite number of trials/trajectories of interaction with the environment. This is problematic because: (i) the infinite trials assumption is violated in many practical application domains where the objective function depends on the empirical occupancy induced by a small or finite set of trajectories; and (ii) in general, the optimal policies produced by algorithms from prior research may perform poorly when evaluated on a limited number of trajectories, as demonstrated by \citet{mutti_2023}. To overcome this issue, previous research introduced a finite-trials formulation for GUMDPs where the objective function depends on the empirical occupancy induced by a finite set of trajectories \citep{mutti_2023,santos_2024}. Unfortunately, in the finite-horizon setting, \cite{mutti_2023} show that computing optimal policies for the finite-trials formulation of GUMDPs is computationally challenging, being significantly harder than its infinite trials counterpart. Specifically, the authors demonstrate that the problem can be reformulated as an ``extended MDP'' where the agent must keep track of the history of state-action pairs observed up to each timestep. \citet{mutti_2023} present preliminary results showing that optimal policies for the extended MDP, computed via dynamic programming techniques, outperform their infinite-trial counterparts. However, the state space of the extended MDP grows combinatorially with the horizon, limiting the scalability of the approach to very small problem instances.

In this work, we introduce the first approach for solving GUMDPs in the single-trial regime, i.e., when the agent's performance is evaluated based on a single trial/trajectory. We consider an infinite-horizon discounted setting, which has been greatly adopted by previous research in the field \citep{zahavy_2021,hazan_2019} and has found important applications in different domains where the lifetime of the agent is uncertain or infinite. We focus our attention to environments with discrete state and action spaces. Our key contributions are threefold. First, we establish fundamental results on policy optimization in the single-trial regime, addressing: (i) which class of policies suffices for optimality; (ii) how the problem can be cast as an ``occupancy MDP'' that is equivalent to our original problem; and (iii) the computational complexity of policy optimization in the single-trial regime. Technically, our results differ from \citet{mutti_2023} due to the inherent differences between infinite-horizon discounted occupancies and the finite-horizon occupancies considered by the previous work. Moreover, our occupancy MDP refines the extended MDP from \citet{mutti_2023}, preserves optimality guarantees, and is better suited for practical implementation. Second, we introduce a Monte-Carlo tree search (MCTS) algorithm to solve the occupancy MDP, effectively solving the GUMDP in the single-trial regime via online planning. Our approach provably retrieves the optimal action at each timestep for a sufficiently high number of iterations. Third, we present experimental results showcasing the superior performance of our approach over relevant baselines across diverse tasks and environments.

\section{Background}
\label{sec:background}

\subsection{Markov decision processes}
MDPs \citep{puterman2014markov} provide a mathematical framework to study sequential decision making and are formally defined as a tuple $\mathcal{M} = (\mathcal{S}, \mathcal{A}, \{\mat{P}^a : a \in \mathcal{A} \}, \vec{p_0}, c)$ where: $\mathcal{S}$ is the finite state space; $\mathcal{A}$ is the finite action space; $\{\mat{P}^a : a \in \mathcal{A} \}$ is a set of transition probability matrices $\mat{P}^a$, one for each action $a \in \mathcal{A}$; $\vec{p_0} \in \Delta(\mathcal{S})$ is the initial state distribution; and $c: \mathcal{S} \times \mathcal{A} \rightarrow \mathbb{R}$ is the cost function. For a given action $a \in \mathcal{A}$, each row of matrix $\mat{P}^a$ satisfies $P^a(s,\cdot) \in \Delta(\mathcal{S})$, encoding the probability of transition from state $s$ at the present timestep to any other state at the next timestep when choosing action $a$. The interaction takes place as follows: (i) an initial state $\rvar{s}_0$ is sampled from $\vec{p}_0$; (ii) at each step $t$, the agent observes the state of the environment $\rvar{s}_t \in \mathcal{S}$ and chooses an action $\rvar{a}_t \in \mathcal{A}$. Depending on the chosen action, the environment evolves to state $\rvar{s}_{t+1} \in \mathcal{S}$ with probability $P^{\rvar{a}_t}(\rvar{s}_t, \cdot)$, and the agent receives a random cost $\rvar{c}_t$ with expectation given by $c(\rvar{s}_t, \rvar{a}_t)$; and (iii) the interaction repeats infinitely.

A decision rule $\pi_t$ specifies the procedure for action selection at timestep $t$. A non-Markovian decision rule $\pi_t$, at each timestep $t$, maps the history of states and actions to a probability distribution over actions, i.e., $\pi_t : \mathcal{S} \times (\mathcal{S} \times \mathcal{A})^t \rightarrow \Delta(\mathcal{A})$. A Markovian decision rule does not take into account the entire history and, instead, maps the last state in the history to a distribution over actions, i.e., $\pi_t : \mathcal{S} \rightarrow \Delta(\mathcal{A})$. Both non-Markovian and Markovian decision rules can be deterministic if they consist of mappings of the type $\pi_t : \mathcal{S} \times (\mathcal{S} \times \mathcal{A})^t \rightarrow \mathcal{A}$ or $\pi_t : \mathcal{S} \rightarrow \mathcal{A}$, respectively.

A policy $\pi = (\pi_0, \pi_1, \ldots)$ is a sequence of decision rules, one for each timestep. If, for all timesteps, the decision rules are Markovian or non-Markovian, we say the policy is Markovian or non-Markovian, respectively. Similarly, if the decision rules are deterministic or stochastic for all timesteps, we say the policy is deterministic or stochastic, respectively. We denote the class of non-Markovian policies with $\Pi_{\text{NM}}$, the class of Markovian policies with $\Pi_{\text{M}}$, the class of non-Markovian deterministic policies with $\Pi^\text{D}_{\text{NM}}$, and the class of Markovian deterministic policies with $\Pi^\text{D}_{\text{M}}$. Finally, the class of stationary policies, $\Pi_{\text{S}}$, contains all policies such that the decision rule is the same for all timesteps. We let $\Pi_{\text{S}}^\text{D}$ denote the class of stationary deterministic policies. 

For a given policy $\pi \in \Pi_\text{NM}$, the interaction between the agent and the environment is a random process $(\rvar{s}_0, \rvar{a}_0, \rvar{s}_1, \rvar{a}_1, \ldots)$. We let $\rvar{h}_t = (\rvar{s}_0, \rvar{a}_0, \rvar{s}_1, \rvar{a}_1, \ldots, \rvar{s}_t)$ denote a random history up to (including) timestep $t$. We also denote with $h_t = (s_0,a_0,s_1,a_1, \ldots, s_t) \in \S \times (\S \times \A)^t$ a particular history up to timestep $t$. The random sequence $(\rvar{s}_0, \rvar{a}_0, \rvar{s}_1, \rvar{a}_1, \ldots)$ satisfies: (i) $\PP{\rvar{s}_0 = s} = p_0(s)$; (ii) $\PP{\rvar{s}_{t+1} = s' | \rvar{h}_{t}, \rvar{a}_t} = P^{\rvar{a}_t}(\rvar{s}_{t}, s')$; and (iii) $\PP[]{\rvar{a}_t = a |\rvar{h}_{t}} = \pi_t(a|\rvar{h}_{t})$. Let $(\Omega, \F, \mathbb{P}_\pi)$ be the probability space over the sequence of random variables $(\rvar{s}_0, \rvar{a}_0, \rvar{s}_1, \rvar{a}_1, \ldots)$ that satisfies conditions (i)-(iii) above \citep{lattimore_2020}, where $\F$ is a sigma algebra. We write specific trajectories as $\omega \in \Omega$, with $\omega = (s_0, a_0, s_1, a_1, \ldots)$. We denote with $\mathbb{P}_\pi\left[\rvar{s}_t = s, \rvar{a}_t = a | \rvar{s}_0 \sim \vec{p}_0\right]$ the probability of state-action pair $(s,a)$ at timestep $t$ under policy $\pi$. 
%

\paragraph{The infinite-horizon discounted setting.}
The discounted cumulative cost objective is 
$J_{\gamma}(\pi) = \mathbb{E}\left[\sum_{t=0}^\infty \gamma^t c(\rvar{s}_t, \rvar{a}_t) \right], \label{eq:discounted_objective}$
where $\gamma \in (0,1)$ is the discount factor and the expectation is taken over the random trajectory of state-action pairs $(\rvar{s}_0,\rvar{a}_0,\rvar{s}_1,\rvar{a}_1,\ldots)$ generated by the interaction between $\pi$ and the MDP. It is well-known that the class of stationary policies suffices for optimality \cite[Theo. 6.2.10]{puterman2014markov} and, hence, we aim to find the optimal policy, $\pi^*$, such that
$\pi^* = \argmin_{\pi \in \Pi_\text{S}} J_\gamma(\pi).$
The discounted state-action occupancy under policy $\pi$ is 
\begin{align}
\label{eq:discounted_state_action_occupancy}
d_{\pi}(s,a) &= (1-\gamma) \sum_{t=0}^\infty \gamma^t \mathbb{P}_\pi\left[\rvar{s}_t = s, \rvar{a}_t = a|\rvar{s}_0 \sim \vec{p}_0\right].
\end{align}
The expected discounted cumulative cost of policy $\pi$ can be written as
$J_{\gamma}(\pi) = \vec{c}^\top \vec{d}_{\pi},$
where
$\vec{d}_{\pi} = [d_{\pi}(s_0,a_0), \ldots, d_{\pi}(s_{|\mathcal{S}|},a_{|\mathcal{A}|})]^\top$
and
$\vec{c} = [c(s_0,a_0), \ldots, c(s_{|\mathcal{S}|},a_{|\mathcal{A}|})]^\top.$
Then, the problem of computing the optimal policy becomes
$\pi^* = \argmin_{\pi \in \Pi_\text{S}} \vec{c}^\top \vec{d}_{\pi}$,
which can be formulated as a linear program \citep{puterman2014markov}. 

\subsection{Monte-Carlo tree search}
\label{sec:background:MCTS}
MCTS \citep{browne_2012,silver_2017} is a sample-based planning algorithm to approximate optimal policies in MDPs through sequential tree-based search. The search tree alternates between decision nodes, representing agent actions, and chance nodes, representing stochastic environment transitions. At each iteration, MCTS builds and refines a search tree by alternating between four phases: selection, expansion, simulation, and backpropagation. In the selection phase, the algorithm recursively selects actions at decision nodes according to a tree policy, often based on upper confidence bounds, and samples successor states at chance nodes according to the environment’s dynamics, until it reaches a node that has not yet been fully expanded. Then, in the expansion phase, a new child node corresponding to an unvisited state-action pair is created. In the simulation phase, a rollout policy (typically random or heuristic) generates a trajectory from the expanded node to estimate a Monte Carlo return. Backpropagation then updates the statistics (e.g., mean value, visit counts) along the path traversed during the selection phase. MCTS converges asymptotically to the optimal action at the root under mild assumptions \citep{kocsis_2006}. \cite{shah_2020} and \cite{comer_2025} establish polynomial regret concentration for MCTS-based algorithms.

\subsection{General-utility Markov decision processes}
\label{sec:background:GUMDPs}
The GUMDP framework generalizes utility-specification by allowing the objective of the agent to be written in terms of the visitation frequency of state-action pairs. This is in contrast to the MDP framework, where the objective of the agent is encoded by the cost, a function of state-action pairs.

We define an infinite-horizon discounted GUMDP as a tuple $\mathcal{M}_f = (\mathcal{S}, \mathcal{A}, \{\mat{P}^a : a \in \mathcal{A} \}, \vec{p_0}, f)$ where $\mathcal{S}$, $\mathcal{A}$, $\{\mat{P}^a : a \in \mathcal{A} \}$, and $\vec{p_0}$ are defined in a similar way to the standard MDP formulation. The objective of the agent is encoded by $f: \Delta(\mathcal{S} \times \mathcal{A}) \rightarrow \mathbb{R}$, as a function of a state-action discounted occupancy $\vec{d}$, as defined in \eqref{eq:discounted_state_action_occupancy}. The objective is then to find
\begin{equation}
    \label{eq:gumdp_objective}
    \pi^* = \argmin_{\pi \in \Pi_\text{S}} f(\vec{d}_\pi).
\end{equation}
We highlight that, when $f$ is a linear function, we are under the standard MDP setting; if $f$ is convex, then we are under the convex MDP setting \citep{zahavy_2021}. In this work, we consider three different tasks, each associated with a particular (convex) objective function: (i) maximum state entropy exploration \citep{hazan_2019}, where $f(\vec{d}) = \vec{d}^\top\log(\vec{d})$; (ii) imitation learning \citep{abbeel_2004}, where $f(\vec{d}) = \| \vec{d} - \vec{d}_\beta\|_2^2$ and $\vec{d}_\beta \in \Delta(\S \times \A)$ is the occupancy induced by behavior policy $\beta$; and (iii) adversarial MDPs \citep{rosenberg_2019}, where $f(\vec{d}) = \max_{k \in \{1, \ldots, K\}} \vec{d}^\top \vec{c}_k$ and $\{\vec{c}_1, \ldots, \vec{c}_K\}$ is a set of $K$ cost vectors satisfying $c_k \in \mathbb{R}^{|\S||\A|}$. Nevertheless, our results apply to any task that can be modelled using the GUMDP framework. We refer to \cite{zahavy_2021} for a comprehensive list of the different objectives considered by previous works.

\subsection{GUMDPs in the single-trial regime}
In this work, we consider a different objective from the one introduced in \eqref{eq:gumdp_objective}. While \eqref{eq:gumdp_objective} depends on the \textit{expected} discounted occupancy, $\vec{d}_\pi$, the objective we herein introduce depends on the \textit{empirical} discounted occupancy induced by running a given policy on the GUMDP. This is particularly important, as practical applications often require identifying the policy that performs optimally when evaluated based on a single trajectory of interaction with the environment \citep{mutti_2023,santos_2024}. Furthermore, as we shall explain next, in GUMDPs the performance of a given policy may depend on the number of trajectories or trials used to evaluate it \citep{mutti_2023,santos_2024}.

\paragraph{Discounted empirical state-action occupancies}
We consider the setting in which the agent interacts with its environment over a single-trial, i.e., a single trajectory. For a given policy $\pi \in \Pi_\text{NM}$, we introduce the random vector $\rvec{d}^\pi : \Omega \rightarrow \Delta(\S \times \A)$, which corresponds to the empirical discounted state-action occupancy associated with the probability space $(\Omega, \F, \mathbb{P}_\pi)$, defined as
\begin{equation}
    \label{eq:estimator_discounted}
    \rvar{d}^\pi_\omega(s,a) = (1-\gamma) \sum_{t=0}^\infty \gamma^t \mathbf{1}(s_t=s,a_t=a),
\end{equation}
where $\mathbf{1}$ is the indicator function. It holds that $d_\pi = \mathbb{E}[\boldsymbol{\mathrm{d}}^\pi]$, for $d_\pi$ as introduced in \eqref{eq:discounted_state_action_occupancy}. In practice, it is common to truncate the trajectories of interaction between the agent and its environment. We denote by $H \in \mathbb{N}$ the truncation horizon and let the empirical truncated occupancy, $\rvec{d}^{\pi,H} : \Omega \rightarrow \Delta(\S \times \A)$, be defined as 
\begin{equation}
\rvar{d}^{\pi,H}_\omega(s,a) = \frac{1-\gamma}{1-\gamma^H} \sum_{t=0}^{H-1} \gamma^t \mathbf{1}(s_t=s,a_t=a)\label{eq:estimator_discounted_truncated}.
\end{equation}

\paragraph{Single-trial formulation for GUMDPs}
We now introduce objectives for GUMDPs that depend on \textit{empirical} discounted state-action occupancies. The \emph{single-trial} objective is defined as
\begin{equation}
    \label{eq:single_trial_objective}
    F_1(\pi) = \mathbb{E}\left[ f(\rvec{d}^\pi) \right],
\end{equation}
and we aim to find $\pi^* = \argmin_{\pi \in \Pi} F_1(\pi)$,
where $\Pi$ is an arbitrary policy class we specify later. The \emph{single-trial truncated} objective is defined as
\begin{equation}
    \label{eq:single_trial_truncated_objective}
    F_{1,H}(\pi) = \mathbb{E}\left[ f(\rvec{d}^{\pi,H}) \right].
\end{equation}
We note that the single-trial truncated objective is more general than the single-trial objective. In particular, $ F_{1,H}$ is equivalent to $F_{1}$ as $H \rightarrow \infty$. The \emph{infinite trials}\footnote{We call $F_\infty$ the \textit{infinite trials} objective because, as the number of sampled trajectories/trials approaches infinity, the mismatch between GUMDPs that depend on empirical and expected occupancies fades away.} objective, $F_\infty$, is defined as
\begin{equation*}
    \label{eq:infinite_trials_objective}
    F_\infty(\pi) = f(\vec{d}_\pi) = f\left(\mathbb{E}\left[\rvec{d}^\pi \right]\right),
\end{equation*}
and we aim to find $\pi^*_\infty = \argmin_{\pi \in \Pi_\text{S}} F_\infty(\pi)$. We note that $F_\infty$ is equivalent to the objective in \eqref{eq:gumdp_objective}, which depends on \textit{expected} occupancies. The fact that $\Pi_\text{S}$ suffices for optimality follows from results on the possible state-action occupancies induced by different classes of policies \citep{puterman2014markov}.

\paragraph{The mismatch between $F_1$ and $F_\infty$}
Previous works pointed out important differences between the single and infinite trials formulations for GUMDPs \citep{mutti_2023,santos_2024}. In particular, it has been shown that, in general, the performance of a given policy under the single and infinite trials formulations differs and, consequently, the optimal policy for each objective may also differ. This occurs because, since $f$ may be non-linear, it can happen that
$F_1(\pi) = \mathbb{E}\left[ f(\rvec{d}^\pi) \right] \neq f\left(\mathbb{E}\left[\rvec{d}^\pi \right]\right) = F_\infty(\pi)$.
We refer to \citet{santos_2024} for explicit lower bounds on the performance difference between $F_1$ and $F_\infty$. Naturally, when $f$ is linear, as it is the case in standard MDPs, then the single and infinite trials formulations become equivalent due to the linearity of the expectation. However, due to the mismatch between the single and infinite trials formulations, and given that the single-trial formulation is particularly relevant in practical applications where policy performance is assessed based on a single trajectory of interaction with the environment, we focus in this work on finding (approximately) optimal policies for the single-trial objective, $F_1$.

\section{Policy Optimization in the Single-Trial Regime}
\label{sec:policy_optimization}

In this section, we establish the fundamental results that underpin the development of online planning algorithms to solve GUMDPs in the single-trial regime. Specifically, we investigate: (i) which class of policies suffices for optimality; (ii) how we can focus on the truncated single-trial objective, $F_{1,H}$, to compute approximately optimal policies for the single trial objective, $F_1$; (iii) how we can cast our single-trial GUMDP problem as an MDP in which the agent keeps track of the accrued occupancy at every timestep of the interaction with the GUMDP; and (iv) the computational complexity of policy optimization in the single-trial regime. We let
$\mathrm{OptGap}(\pi) = F_{1}(\pi) - \min_{\pi' \in \Pi_{\text{NM}}} F_{1}(\pi')$
be the optimality gap of an arbitrary policy $\pi \in \Pi_\text{NM}$ with respect to the single-trial objective introduced in \eqref{eq:single_trial_objective}. Intuitively, the optimality gap measures how suboptimal a given policy $\pi$ is compared to the best policy. Throughout our work, we make use of the following assumption.

\begin{assumption}
    The objective function $f$ is $L$-Lipschitz with $L \in \mathbb{R}^+$, i.e., $|f(\vec{d_1}) - f(\vec{d_2}) | \le L \| \vec{d}_1 - \vec{d}_2 \|_1$ for any $\vec{d}_1, \vec{d}_2 \in \Delta(\S \times \A)$.
\end{assumption}

We refer to Appendix~\ref{appendix:lipschitz} for the Lipschitz constants of the objective functions considered.

\subsection{Non-Markovianity matters}
We start by investigating which class of policies suffices for optimality. We have the following result (proof in Appendix~\ref{appendix:B:theo_classes_of_policies_proof}).

\begin{theorem}
    \label{theo:classes_of_policies}
    There exists a GUMDP $\mathcal{M}_f$ with $\gamma \in (0,1)$ and $L$-Lipschitz convex objective such that (lower is better):
    \begin{enumerate}
        \item $F_{1}(\pi_\textnormal{S}) > F_{1}(\pi_\textnormal{M})$, for some $\pi_\textnormal{M} \in \Pi_\textnormal{M}$ and any $\pi_\textnormal{S} \in \Pi_\textnormal{S}$.
        \item $F_{1}(\pi_\textnormal{M}) > F_{1}(\pi_\textnormal{NM})$, for some $\pi_\textnormal{NM} \in \Pi_\textnormal{NM}$ and any $\pi_\textnormal{M} \in \Pi_\textnormal{M}$.
    \end{enumerate}
\end{theorem}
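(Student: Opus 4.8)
The plan is to construct two explicit counterexamples (or one GUMDP serving both parts) demonstrating strict separations between policy classes, since each claim is an existence statement. The natural strategy is to exploit the nonlinearity of $f$ together with the single-trial objective's sensitivity to the \emph{correlation structure} of the trajectory — something that stationary and Markovian policies cannot control but non-Markovian policies can. First I would design a small GUMDP (a handful of states, two actions) where the key tension is that a non-linear convex $f$ penalizes occupancy vectors that are ``spread out'' differently than their expectation would suggest. Because $F_1(\pi) = \EE{f(\rvec{d}^\pi)}$ involves the expectation of $f$ applied to the \emph{random} empirical occupancy, Jensen-type gaps appear, and crucially these gaps depend on how much randomness/variance the policy injects into $\rvec{d}^\pi$. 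A policy that can \emph{commit} to a consistent behavior across the trajectory (using memory) can reduce the variance of $\rvec{d}^\pi$, yielding a lower $\EE{f(\rvec{d}^\pi)}$ when $f$ is convex.

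\textbf{Part 1 (stationary vs.\ Markovian).} For the first separation, I would build an instance where a stationary policy is forced to randomize identically at every visit to a state, whereas a (time-inhomogeneous) Markovian policy can vary its action distribution across timesteps. The idea is to make the early-timestep choices (which dominate the discounted occupancy via the $\gamma^t$ weights) benefit from a different action distribution than later choices. Concretely, I expect to pick $f$ so that the optimal \emph{expected} occupancy is achievable, but under the single-trial objective the convex $f$ rewards concentrating mass deterministically early in the trajectory and hedging later; a stationary policy cannot do both with a single fixed decision rule, incurring strictly higher $F_1$. I would then verify the strict inequality by direct computation of $\EE{f(\rvec{d}^\pi)}$ for the optimal stationary policy against the constructed Markovian one.

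\textbf{Part 2 (Markovian vs.\ non-Markovian).} For the second separation, I would exploit that memory lets the agent \emph{condition future actions on past realized states}, thereby correlating decisions so as to steer the empirical occupancy toward regions where $f$ is small. The canonical construction: an MDP where, after a stochastic branching step, the agent should take a ``corrective'' action that depends on \emph{which} branch was actually realized — information a Markovian policy discards once it conditions only on the current state (if the current state alone does not reveal the relevant history). I would choose $f$ convex (e.g.\ a squared-distance or entropy objective as in the examples of Section~\ref{sec:background:GUMDPs}) and an MDP whose state aliasing hides the needed branch information at decision time, so that only a history-dependent rule can realize the variance-minimizing (hence $f$-minimizing) empirical occupancy. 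Again I would close the argument by evaluating $F_1$ explicitly for the best Markovian policy versus the memory-based one.

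\textbf{The main obstacle} I anticipate is the \emph{quantitative} verification rather than the conceptual design: because $F_1$ is an expectation of $f$ over the infinite-horizon discounted empirical occupancy \eqref{eq:estimator_discounted}, the random vector $\rvec{d}^\pi$ has support over infinitely many trajectories, and computing $\EE{f(\rvec{d}^\pi)}$ in closed form requires either truncating at a small horizon (e.g.\ collapsing the dynamics so only the first one or two steps carry occupancy mass, making later terms negligible or deterministic) or engineering an absorbing structure so the occupancy is supported on finitely many values. I would therefore favor constructions with a short ``effective horizon'' — for instance absorbing states reached after one or two transitions — so that $\rvec{d}^\pi$ takes only finitely many values and the strict inequalities reduce to a finite, checkable computation. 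Ensuring $f$ remains genuinely convex and $L$-Lipschitz on $\Delta(\S\times\A)$ while still producing a strict gap is the delicate balancing act, but the examples from Section~\ref{sec:background:GUMDPs} (squared distance to a target occupancy, or entropy) provide ready-made convex Lipschitz candidates to instantiate.
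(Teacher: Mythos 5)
Your two conceptual mechanisms are exactly the ones the paper uses: for Part 1, a convex $f$ plus Jensen's inequality makes the variance injected by stationary randomization strictly costly compared to deterministic time-alternation, and for Part 2, a stochastic initial branch whose identity is hidden by state aliasing (all branches funnel into the same state) lets only a history-dependent policy apply the path-wise correction that balances the empirical occupancy. The paper instantiates this with a three-state oscillator ($s^1/s^2 \to s^0 \to s^1/s^2 \to \cdots$, initial state uniform over $\{s^1,s^2\}$) and the quadratic objective $f(\vec{d}) = d(s^1)^2 + d(s^2)^2$, with the non-stationary policy alternating $a^1/a^2$ deterministically and the non-Markovian policy choosing its alternation phase based on the realized initial state. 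So the design is the same; the issue is with your proposed execution.

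The genuine gap is your plan to force a short effective horizon via absorbing states so that $\rvec{d}^\pi$ has finite support. That shortcut can work for Part 2 (branch, funnel state, then two absorbing states yields finitely many trajectories), but it \emph{cannot} work for Part 1: if every non-absorbing state is reachable at only one timestep (a layered/tree structure, which is what a one- or two-step absorbing construction gives you), then any Markovian policy $(\pi_0,\pi_1,\ldots)$ is replicated exactly by the stationary policy $\pi_\text{S}(\cdot|s) = \pi_{t(s)}(\cdot|s)$, where $t(s)$ is the unique timestep at which $s$ occurs, and no strict separation exists. Separating $\Pi_\text{S}$ from $\Pi_\text{M}$ requires a state visited at multiple timesteps with different optimal behavior at each visit, and under time-homogeneous dynamics that forces a cycle, hence occupancy mass spread over infinitely many timesteps. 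This is precisely why the paper's construction is an infinite oscillator, and why its proof needs the extra technical step you do not supply: condition on the first few steps, apply Jensen only to the infinite tail so that the tail collapses to a scalar parameter (e.g.\ $\pi_\text{S}(a^1|s^0)\gamma^6/(1-\gamma^2)$), and then verify the resulting finite-dimensional strict inequalities symbolically. Without that tail-handling device, your Part 1 never reduces to the ``finite, checkable computation'' you are counting on, and the strict inequality is left unestablished.
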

The result above shows that, in general, the class of stationary policies is strictly dominated by the class of non-stationary policies, which is, in turn, strictly dominated by the class of non-Markovian policies. Hence, non-Markovianity matters, and we must focus our attention on history-dependent policies. Our Theo.~\ref{theo:classes_of_policies} extends the result in \cite{mutti_2023}, which considers finite-horizon GUMDPs, to the infinite-horizon discounted setting. 

\subsection{Computing (approximately) optimal policies by resorting to $F_{1,H}$}
The result below (proof in Appendix~\ref{appendix:B:truncation_proof}) establishes that the optimality gap $\mathrm{OptGap}(\pi)$ of any policy $\pi \in \Pi_\text{NM}$ can be upper bounded, up to a constant, by the optimality gap of policy $\pi$ for the single-trial truncated objective.

\begin{proposition}[Optimality gap decomposition]
    \label{proposition:optimality_gap_decomposition}
    For arbitrary $\pi \in \Pi_\textnormal{NM}$, it holds that
    \begin{equation}
        \mathrm{OptGap}(\pi) \le \underbrace{ F_{1,H}(\pi) - \min_{\pi_H \in \Pi_\textnormal{NM}} \left\{F_{1,H}(\pi_H) \right\} }_{\text{$=\mathrm{OptGap}_H(\pi)$}} + 8 L \gamma^H,
    \end{equation}
    where $\mathrm{OptGap}_H(\pi)$ is the optimality gap of policy $\pi$ under the single-trial truncated objective with horizon $H$.
\end{proposition}

Intuitively, the proposition above shows that we can resort to the single-trial truncated objective, as introduced in \eqref{eq:single_trial_truncated_objective}, to find an approximately optimal policy for the original objective defined in \eqref{eq:single_trial_objective}, up to any desired tolerance. In particular, if $\pi$ is the optimal policy for the single-trial truncated objective, i.e., $\mathrm{OptGap}_H(\pi)=0$, then it holds that $\mathrm{OptGap}(\pi) \le 8 L \gamma^H$, which can be made arbitrarily small by tuning our truncation horizon $H$. Therefore, we focus our attention on finding
\begin{equation}
    \pi^* = \argmin_{\pi \in \Pi_\text{NM}} F_{1,H}(\pi) = \argmin_{\pi \in \Pi_\text{NM}} \EE{f(\rvec{d}^{\pi,H})}, \label{eq:truncated_gumdp_objective}
\end{equation}
in order to keep the truncated optimality gap term $\mathrm{OptGap}_H(\pi)$ low, which we investigate in the next section.

\subsection{The occupancy MDP: Casting $\M_f$ as a standard MDP}
\label{sec:occupancy_mdp}
To derive our planning algorithms for solving GUMDPs in the single-trial truncated setting, we derive a finite-horizon MDP based on the original GUMDP. In particular, we consider the occupancy MDP defined by the tuple $\mathcal{M}_{\text{O}} = \{\mathcal{S}_\text{O}, \mathcal{A}_\text{O}, \{\mat{P}_\text{O}^a\}, \vec{p}_{0,\text{O}}, c_\text{O}, H\}$, where $\S_\text{O} = \mathcal{S} \times \O$ is the discrete state space and
\begin{align*}
    \O &= \Bigg\{ \vec{o} \in \mathbb{R}^{|\S||\A|} : o(s,a) = \sum_{t=0}^{l-1} \gamma^t \mathbf{1}(s_t = s, a_t = a), \forall s \in \S, a \in \A,\\ &\qquad (s_0,a_0,\ldots, s_l) \in \S \times (\S \times \A)^l, 1 \le l \le H-1 \Bigg\}
    \bigcup \left\{ [0,\ldots, 0] \in \mathbb{R}^{|\S||\A|} \right\}.
\end{align*}
We denote a state of the occupancy MDP with the tuple $\{s,\vec{o}\}$, where $s \in \S$ is a state from the original GUMDP and $\vec{o} \in \O$ is a $|\S||\A|$-dimensional vector that keeps track of the running occupancy of the agent up to a given timestep. Intuitively, the running occupancy records the empirical occupancy, as defined in \eqref{eq:estimator_discounted}, observed by the agent up to any timestep. We let $\A_\text{O} = \A$ be the action space. We define $\vec{p}_{0,\text{O}}$ such that $p_{0,\text{O}}(\{s,\vec{o}\}) = p_0(s)$ if $\vec{o} = [0,\ldots,0]$ and zero otherwise. The dynamics are as follows: (i) component $\rvar{s}_{t+1} \sim P^{\rvar{a}_t}(\cdot | \rvar{s}_t)$ evolves according to the dynamics of the original GUMDP; and (ii) the running occupancy evolves deterministically as $o_{t+1}(s,a) = \gamma^t + o_t(s,a)$ if $s = \rvar{s}_t$ and $a = \rvar
{a}_t$, and $o_{t+1}(s,a) = o_t(s,a)$ otherwise. We emphasize that we do not need to incorporate the timestep in the state of the occupancy MDP since it can be inferred from the running occupancy by summing its entries. Finally, $H \in \mathbb{N}$ denotes the horizon of the MDP and the cost function $c_\text{O} : \S \times \O \rightarrow \mathbb{R}$ is defined as 
\begin{equation*}
    c_\text{O}(\{s,\vec{o}\}) = \begin{cases}
    0 & \text{if $t < H $}, \\
    f\left(\frac{1-\gamma}{1-\gamma^H} \vec{o}\right) & \text{if $t = H $}.
  \end{cases}
\end{equation*}
Stationary policies $ \pi_\text{O} \in \Pi_\text{S}$ for $\M_\text{O}$ are mappings of the type $\pi_\text{O} : \S \times \O \rightarrow \Delta(\A)$. We let the cumulative cost under $\M_\text{O}$ be 
\begin{equation}
    \label{eq:occupancy_MDP_objective}
    J_\text{O}(\pi_\text{O}) = \EE{\sum_{t=0}^H c_\text{O}(\{\rvar{s}_t, \rvec{o}_t\})} = \EE{c_\text{O}(\{\rvar{s}_H, \rvec{o}_H\})},
\end{equation}
where the expectation above is taken with respect to the random sequence of states $(\{\rvar{s}_0, \rvec{o}_0\}, \ldots, \{\rvar{s}_H, \rvec{o}_H\})$ under policy $\pi_\text{O}$. We let $J_\text{O}^* = \min_{\pi_\text{O} \in \Pi_\text{S}^\text{D}} J_\text{O}(\pi_\text{O})$ be the optimal cumulative cost for $\M_\text{O}$. We also note that the occupancy MDP possesses well-defined (optimal) value and action-value functions, which can be shown to satisfy standard Bellman equations (Appendix~\ref{appendix:B:value_action_value_funcs_occupancy_mdp}).

We present the following result, relating states in $\M_\text{O}$ to histories in $\M_f$ (proof in Appendix~\ref{appendix:B:one_to_one_mapping_histories_states_proof}).

\begin{proposition}[One-to-one mapping between histories in $\mathcal{M}_f$ and states in $\mathcal{M}_\text{O}$]\label{proposition:one_to_one_mapping_histories_states}
    There exists a one-to-one mapping between histories $h_l = (s_0,a_0,s_1,a_1, \ldots, s_l) \in \S \times (\S \times \A)^l$ in $\M_f$, with $0 \le l \le H-1$, and states $\{s,\vec{o}\} \in \S \times \O$ in $\M_\text{O}$.
\end{proposition}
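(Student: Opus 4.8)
The plan is to exhibit the bijection explicitly rather than argue abstractly. I would define the forward map $\Phi$ sending a history $h_l = (s_0,a_0,\ldots,s_l)$ to the occupancy-MDP state $\Phi(h_l) = \{s_l, \vec{o}(h_l)\}$, where $\vec{o}(h_l)$ is the running occupancy with entries $\sum_{t=0}^{l-1}\gamma^t \mathbf{1}(s_t=s,a_t=a)$ generated by the first $l$ state-action pairs of the history (with the empty history $h_0=(s_0)$ mapping to $\{s_0,\vec{0}\}$). By the very definition of $\O$, this $\Phi$ is well-defined, i.e.\ $\Phi(h_l)\in\S\times\O$. I would then prove that $\Phi$ is a bijection by establishing surjectivity and injectivity separately.

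Surjectivity is immediate from the construction of $\O$: given any $\{s,\vec{o}\}\in\S\times\O$, either $\vec{o}=\vec{0}$, in which case $h_0=(s)$ is a preimage, or $\vec{o}$ is by definition the occupancy generated by some pair-sequence $(s_0,a_0,\ldots,s_{l-1},a_{l-1})$; appending the final state $s$ yields a history $h_l$ with $\Phi(h_l)=\{s,\vec{o}\}$. Note that the final state is left completely unconstrained by $\vec{o}$, which is precisely why a state of $\M_\text{O}$ must carry $s$ alongside $\vec{o}$. Before turning to injectivity I would record one useful fact: the length $l$ is recoverable from $\vec{o}$ alone, since $\sum_{s,a} o(s,a) = \sum_{t=0}^{l-1}\gamma^t = (1-\gamma^l)/(1-\gamma)$, which is strictly increasing in $l$ because $\gamma\in(0,1)$. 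Hence any two histories with the same image share the same length.

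The main obstacle is injectivity: showing that $\vec{o}$ determines not just the length but the entire \emph{ordered} sequence of state-action pairs. My approach would be to decode the sequence greedily from the discount structure. Having recovered $l$, identify the pair $(s_0,a_0)$ visited at $t=0$ as the one whose entry carries the weight $\gamma^0=1$, subtract that contribution, divide the remaining vector by $\gamma$ to obtain the length-$(l-1)$ occupancy of the shifted sequence $(s_1,a_1),\ldots,(s_{l-1},a_{l-1})$, and recurse. This argument is clean when $\gamma<1/2$: there the pair at $t=0$ is the \emph{unique} entry with $o(s,a)\ge 1$, since any pair never occurring at $t=0$ contributes at most $\sum_{t\ge1}\gamma^t = \gamma/(1-\gamma)<1$, so every peeling step is unambiguous and the recursion reconstructs $h_l$ uniquely.

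I expect the genuine difficulty to lie in pushing this decoding through for \emph{all} $\gamma\in(0,1)$. Once $\gamma\ge 1/2$ the leading weight no longer dominates the tail, so the ``largest entry'' heuristic breaks, and one must instead argue that no two disjoint index sets $T_1,T_2\subseteq\{0,\ldots,l-1\}$ satisfy $\sum_{t\in T_1}\gamma^t = \sum_{t\in T_2}\gamma^t$, i.e.\ that the discounted weights admit a unique grouping. This is the crux where the positional structure of the discount factor must do the real work, and it requires care: such coincidences do occur for isolated algebraic values of $\gamma$ (for instance $\gamma+\gamma^2=1$ at $\gamma=(\sqrt{5}-1)/2$ produces two distinct histories with identical occupancy and final state). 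I would therefore flag that the cleanest rigorous route is either to work under $\gamma<1/2$ or to supply an explicit argument ruling out these coincidences for the discount factor at hand.
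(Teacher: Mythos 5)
Your setup coincides with the paper's proof: the same explicit map, surjectivity immediate from the definition of $\O$, and recovery of the length $l$ from $\sum_{s,a} o(s,a) = (1-\gamma^l)/(1-\gamma)$. The divergence is entirely in how injectivity is handled, and there your skepticism is vindicated. The paper reduces injectivity, entrywise, to the claim that $\sum_{t=0}^{l-1} c_t \gamma^t = 0$ with $c_t \in \{-1,0,1\}$ forces every $c_t = 0$, and justifies this by invoking its Lemma~\ref{lemma:linear_independence_of_exp_funcs} (linear independence of exponentials, proved via a Vandermonde system) instantiated at the single point $x=1$ with $\lambda_t = t\ln\gamma$. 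That inference is invalid: linear independence of the functions $e^{\lambda_0 x},\ldots,e^{\lambda_{l-1}x}$ means that a combination vanishing \emph{identically in $x$} has zero coefficients --- the lemma's own proof differentiates repeatedly in $x$, which presupposes exactly this --- and it says nothing about a combination of fixed coefficients vanishing at one particular value of $x$, which is all the application provides.

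Your golden-ratio example is therefore a genuine counterexample to the proposition as stated, not merely a caveat. With $\gamma = (\sqrt{5}-1)/2$ one has $1 = \gamma + \gamma^2$, so (taking $H \ge 4$, a single action $a$, two states $s \ne s'$, and any fully connected dynamics) the distinct length-$3$ histories $(s,a,s',a,s',a,x)$ and $(s',a,s,a,s,a,x)$ induce the identical running occupancy --- value $1$ at both $(s,a)$ and $(s',a)$, zero elsewhere --- and share the final state $x$. The natural map is thus not injective; and since $\O$ is by construction the image of that map, $|\S \times \O|$ is then strictly smaller than the number of histories, so no bijection between these two finite sets exists at all and the proposition fails for this $\gamma$. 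The statement is salvageable under a restriction on the discount factor: it holds whenever $\gamma$ is not a root of a nonzero polynomial with coefficients in $\{-1,0,1\}$, hence in particular for all $\gamma < 1/2$ (your peeling argument, which rests on the dominance $\gamma^{t_0} > \sum_{t > t_0} \gamma^t$ valid exactly in that regime) and for all but countably many $\gamma \in (0,1)$, e.g., any transcendental $\gamma$. Your proposal, completed in that regime, is a correct proof of the corrected statement --- which is more than the paper's argument delivers --- and the needed restriction propagates to the results that rely on this proposition, notably the policy correspondence underlying Theorem~\ref{theo:gumdp_equivalence_occupancy_MDP}.
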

An important conclusion that follows from the result above is that there exists a one-to-one mapping between non-Markovian policies for $\M_f$ and stationary policies for $\M_\text{O}$. This holds because every state in $\M_\text{O}$ is uniquely associated with a particular history in $\M_f$ (and vice versa). With this in mind, we now state the following result (proof in Appendix~\ref{appendix:B:equivalence_occupancy_MDP_proof}), which connects the problem of solving the occupancy MDP and the problem of solving the single-trial truncated GUMDP objective.

\begin{theorem}[Solving $\M_f$ is ``equivalent'' to solving $\M_\text{O}$]
    \label{theo:gumdp_equivalence_occupancy_MDP}
    The problem of finding a policy $\pi \in \Pi_\textnormal{NM}$ satisfying $\mathrm{OptGap}_H(\pi) \le \epsilon$, for any $\epsilon \in \mathbb{R}_0^+$, can be reduced to the problem of finding a policy $\pi_\text{O} \in \Pi_\textnormal{S}$ satisfying $ J_\text{O}(\pi_\text{O}) -  J_\text{O}^* \le \epsilon$. In particular, if $\pi^*_\text{O} = \argmin_{\pi_\text{O} \in \Pi_\textnormal{S}} J_\text{O}(\pi_\text{O})$, then the corresponding non-Markovian policy $\pi$ in $\M_f$ satisfies $\mathrm{OptGap}_H(\pi) = 0$. Finally, it holds that
    $\mathrm{OptGap}_H(\pi) = J_\text{O}(\pi_\text{O}) - J^*_\text{O}$,
    where $\pi_O$ is the stationary policy for $\M_\text{O}$ associated with the non-Markovian policy $\pi$ for $\M_f$.
\end{theorem}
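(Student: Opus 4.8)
The plan is to turn the one-to-one mapping of Proposition~\ref{proposition:one_to_one_mapping_histories_states} into a value-preserving bijection between non-Markovian policies $\pi \in \Pi_\text{NM}$ for $\M_f$ and stationary policies $\pi_\text{O} \in \Pi_\text{S}$ for $\M_\text{O}$, and then to show that the two objectives coincide under this bijection, i.e.\ $F_{1,H}(\pi) = J_\text{O}(\pi_\text{O})$. First I would make the policy correspondence explicit: given $\pi_\text{O}$, for each history $h_l = (s_0,a_0,\ldots,s_l)$ with $0 \le l \le H-1$ let $\{s_l,\vec{o}_l\}$ be its image under Proposition~\ref{proposition:one_to_one_mapping_histories_states}, and define the decision rule $\pi_l(\cdot \mid h_l) = \pi_\text{O}(\cdot \mid \{s_l,\vec{o}_l\})$; conversely any $\pi \in \Pi_\text{NM}$ determines $\pi_\text{O}$ on every reachable occupancy state. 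Since the underlying mapping is a bijection, this policy correspondence is also one-to-one.

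The crux is showing the objectives agree. I would first record the pathwise identity $\frac{1-\gamma}{1-\gamma^H}\rvec{o}_H = \rvec{d}^{\pi,H}$, which holds along every trajectory because the running occupancy accumulates exactly as $o_H(s,a) = \sum_{t=0}^{H-1}\gamma^t \mathbf{1}(s_t=s,a_t=a)$, matching the truncated empirical occupancy of \eqref{eq:estimator_discounted_truncated} up to the factor $\frac{1-\gamma}{1-\gamma^H}$. Consequently the terminal cost of $\M_\text{O}$ satisfies $c_\text{O}(\{s_H,\vec{o}_H\}) = f\!\left(\frac{1-\gamma}{1-\gamma^H}\vec{o}_H\right) = f(\rvec{d}^{\pi,H})$ on each sample path, while all intermediate costs vanish. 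It then remains to show that the law over the state-action sequence $(s_0,a_0,s_1,a_1,\ldots)$ induced by running $\pi_\text{O}$ on $\M_\text{O}$ coincides with the law $\mathbb{P}_\pi$ induced by running $\pi$ on $\M_f$. I would establish this by induction on $t$: the occupancy coordinate $\vec{o}_t$ is a deterministic function of the prefix $h_t$, the state coordinate evolves under the identical kernel $P^a$ in both processes, and by construction $\pi_\text{O}(\cdot \mid \{s_t,\vec{o}_t\}) = \pi_t(\cdot \mid h_t)$. Marginalizing out the deterministic occupancy coordinate then gives $J_\text{O}(\pi_\text{O}) = \EE{c_\text{O}(\{\rvar{s}_H,\rvec{o}_H\})} = \EE{f(\rvec{d}^{\pi,H})} = F_{1,H}(\pi)$.

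Finally I would translate this objective identity into the regret statement. Because the bijection preserves objective values and ranges over all of $\Pi_\text{NM}$ (respectively all of $\Pi_\text{S}$), we get $\min_{\pi'\in\Pi_\text{NM}} F_{1,H}(\pi') = \min_{\pi_\text{O}\in\Pi_\text{S}} J_\text{O}(\pi_\text{O})$. Since $\M_\text{O}$ is a finite-horizon MDP (the elapsed timestep is recoverable from $\vec{o}$ by summing its entries), deterministic policies suffice for optimality by standard MDP theory \cite{puterman2014markov}, so $\min_{\pi_\text{O}\in\Pi_\text{S}} J_\text{O}(\pi_\text{O}) = \min_{\pi_\text{O}\in\Pi_\text{S}^\text{D}} J_\text{O}(\pi_\text{O}) = J_\text{O}^*$. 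Combining these yields $\R_H(\pi) = F_{1,H}(\pi) - \min_{\pi'\in\Pi_\text{NM}} F_{1,H}(\pi') = J_\text{O}(\pi_\text{O}) - J_\text{O}^*$, from which the reduction statement and the special case $\R_H(\pi)=0$ for the optimal $\pi_\text{O}^*$ follow immediately.

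The main obstacle I anticipate is the measure-matching step: rigorously arguing that $\pi_\text{O}$ on $\M_\text{O}$ and $\pi$ on $\M_f$ induce the same law on state-action sequences. The subtlety is that $\M_\text{O}$ carries the extra (deterministic) occupancy coordinate, so one must verify that projecting it out recovers exactly $\mathbb{P}_\pi$; the induction is conceptually clean but requires care to ensure that the occupancy bookkeeping and the reindexing via Proposition~\ref{proposition:one_to_one_mapping_histories_states} line up at every timestep. A secondary point needing attention is justifying that stochastic stationary policies cannot outperform deterministic ones in $\M_\text{O}$, so that the minimum over $\Pi_\text{S}$ indeed equals $J_\text{O}^*$.
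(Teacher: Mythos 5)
Your proposal is correct and takes essentially the same approach as the paper: both turn Proposition~\ref{proposition:one_to_one_mapping_histories_states} into a value-preserving bijection between $\Pi_\textnormal{NM}$ and $\Pi_\textnormal{S}$, use the pathwise identity $c_\text{O}(\{s_H,\vec{o}_H\}) = f\left(\rvec{d}^{\pi,H}_\omega\right)$, and prove $F_{1,H}(\pi) = J_\text{O}(\pi_\text{O})$ by marginalizing out the deterministic occupancy coordinate --- the paper carries this out as an explicit factorization of the trajectory probabilities $\mathbb{P}^\text{O}_{\pi_\text{O}}[\omega_\text{O}]$ into $\mathbb{P}_\pi[\omega]$ times indicators of the deterministic occupancy evolution, which is precisely your induction-on-laws argument written out as a sum. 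Your explicit justification that $\min_{\pi_\text{O}\in\Pi_\textnormal{S}} J_\text{O}(\pi_\text{O}) = J_\text{O}^*$ (deterministic stationary policies suffice) handles a step the paper leaves implicit in its final display, but this is a presentational refinement rather than a different route.
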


Intuitively, the result above tells us that it suffices to search for an approximately stationary optimal policy for $\M_\text{O}$, since such a policy corresponds to a non-Markovian policy that is approximately optimal for $\M_f$. In particular, an approximately optimal policy for $\M_\text{O}$ can be seen as a non-Markovian policy for $\M_f$ that compresses the history up to any timestep into a running occupancy. This result demonstrates that maintaining the running occupancy up to any timestep is sufficient to achieve optimal behavior in the single-trial truncated-horizon regime \eqref{eq:truncated_gumdp_objective}. 

\begin{remark}[Deterministic policies suffice for optimality]
    Since the class of policies $\Pi_\textnormal{S}^\textnormal{D}$ suffices for optimality in standard MDPs \citep{puterman2014markov}, we know that at least one stationary deterministic policy $\pi_\text{O} \in \Pi_\textnormal{S}^\textnormal{D}$ for $\M_\text{O}$ satisfies $J_\text{O}(\pi_\text{O}) -  J_\text{O}^* = 0$, i.e., $\pi_\text{O}$ is optimal for $\M_\text{O}$. In light of Theo.~\ref{theo:gumdp_equivalence_occupancy_MDP} and Prop.~\ref{proposition:one_to_one_mapping_histories_states}, this implies that the corresponding non-Markovian policy $\pi \in \Pi_\textnormal{NM}$ for $\M_f$, which is deterministic, satisfies $\mathrm{OptGap}_H(\pi) = 0$, i.e., $\pi$ is optimal for $\M_f$. Thus, we can focus our attention on deterministic non-Markovian policies when solving $\M_f$ with objective $F_{1,H}$, i.e., for any GUMDP and horizon $H \in \mathbb{N}$, it holds that
    $\min_{\pi \in \Pi_\textnormal{NM}} F_{1,H}(\pi) = \min_{\pi \in \Pi_\textnormal{NM}^\textnormal{D}} F_{1,H}(\pi).$
\end{remark}

Given Theo.~\ref{theo:gumdp_equivalence_occupancy_MDP}, we consider planning algorithms to solve the occupancy MDP. Unfortunately, solving the occupancy MDP poses some challenges: (i) the cost function of the occupancy MDP is rather sparse since it is only non-zero at the last timestep; (ii) the size of the state space of the occupancy MDP grows combinatorially with $H$ since every state in the occupancy MDP is associated with a possible history in $\M_f$; and (iii) every state in the occupancy MDP is visited at most once per trajectory. Therefore, before investigating how we can solve the occupancy MDP in Sec.~\ref{sec:online_planning}, we take a closer look at how hard it is, from a worst-case perspective, to compute the optimal policy in GUMDPs in the single-trial regime. 

It is worth noting that the occupancy MDP is conceptually related to the extended MDP proposed by \citet{mutti_2023} for the case of undiscounted finite-horizon GUMDPs. While the extended MDP explicitly tracks the full history up to the current timestep, we show that this information can be compressed into a running occupancy without sacrificing optimality guarantees. Since there exists a one-to-one mapping between states of the occupancy MDP and histories, the size of the state space of both formulations is equivalent. However, the compressed representation used by the occupancy MDP is more amenable to practical implementations since the running occupancy can be incrementally updated as the agent interacts with its environment. Despite these similarities, a key distinction lies in the setting: \citet{mutti_2023} consider the finite-horizon undiscounted setting, whereas we focus on the discounted setting. Discounting plays a crucial role in our analysis (e.g., Proposition~\ref{proposition:one_to_one_mapping_histories_states}) and it remains unclear whether similar results hold in the undiscounted case. This highlights a fundamental difference between our work and that of \citet{mutti_2023}.

\subsection{Hardness result for policy optimization in the single-trial regime}
\label{sec:computational_complexity}
In the previous section, we established that it suffices to search over the class of policies $\Pi_\text{NM}^\text{D}$ in order to attain optimal policies for any GUMDP and horizon $H \in \mathbb{N}$ with respect to objective $F_{1,H}(\pi) = \EE{f(\rvec{d}^{\pi,H})}$. We now show that there exist GUMDPs for which solving
$\pi^* = \argmin_{\pi \in \Pi_\text{NM}^\text{D}} F_{1,H}(\pi)$
can be computationally hard. More precisely, we prove that the problem of deciding whether there exists a policy $\pi \in \Pi_\textnormal{NM}^\textnormal{D}$ such that $F_{1,H}(\pi) \le \lambda$, where $\lambda \in \mathbb{R}$ is a threshold value, is NP-Hard.

\begin{theorem}[NP-Hardness of policy optimization in the single-trial regime] \label{theo:np-hard-result}
    Given a GUMDP with objective $F_{1,H}$ and a threshold value $\lambda \in \mathbb{R}$, it is NP-Hard to determine whether there exists a policy $\pi \in \Pi_\textnormal{NM}^\textnormal{D}$ satisfying $F_{1,H}(\pi) \le \lambda$.
\end{theorem}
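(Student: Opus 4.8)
The plan is to prove NP-Hardness via a polynomial-time reduction from a known NP-complete problem. The natural candidates are problems whose structure matches the single-trial objective: since the objective $F_{1,H}(\pi) = \EE{f(\rvec{d}^{\pi,H})}$ depends on the empirical occupancy induced by a single trajectory, and since the agent must make history-dependent choices to shape that occupancy, I would reduce from a combinatorial problem where feasible solutions correspond to trajectories and the constraint threshold $\lambda$ encodes the decision criterion. The most promising source problems are \textsc{3-SAT}, \textsc{Hamiltonian Path}, or — given that adversarial MDPs with $f(\vec{d}) = \max_k \vec{d}^\top \vec{c}_k$ are already in the paper's vocabulary — a problem like \textsc{Set Cover} or \textsc{Min-Max} partition whose objective mirrors the $\max$ structure. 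I would likely engineer a \emph{deterministic} GUMDP (all $\mat{P}^a$ deterministic), so that each policy $\pi \in \Pi_\textnormal{NM}^\textnormal{D}$ induces a single deterministic trajectory and hence a single deterministic occupancy vector; this collapses the expectation in $F_{1,H}$ and makes the correspondence between policies and combinatorial objects exact and transparent.

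First I would fix the reduction target and describe the gadget construction: given an instance of the source problem (say a \textsc{3-SAT} formula with variables $x_1,\ldots,x_n$ and clauses $C_1,\ldots,C_m$), I would build a GUMDP whose state space and transitions force any length-$H$ trajectory to encode a truth assignment — for instance, a ``chain'' of variable-gadget states where at each stage the deterministic policy's action selects $x_i = \text{true}$ or $x_i = \text{false}$, and the resulting path deposits mass in particular coordinates of the running occupancy $\vec{o}$. Second, I would choose the objective $f$ (exploiting that the statement allows any Lipschitz $f$, and that convex/piecewise-linear choices like the adversarial $\max_k \vec{d}^\top \vec{c}_k$ are in scope) together with cost vectors $\vec{c}_k$ so that $f(\rvec{d}^{\pi,H}) \le \lambda$ holds \emph{if and only if} the encoded assignment satisfies all clauses. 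Concretely, I would place one cost vector per clause whose inner product with the occupancy is small exactly when at least one satisfying literal was chosen, so that the $\max$ over clauses stays below $\lambda$ precisely for satisfying assignments. Third, I would verify the reduction runs in polynomial time (the state space, being built from the variable/clause gadgets rather than the full combinatorial occupancy set $\O$, stays polynomial in $n$ and $m$) and that the Lipschitz assumption is met by the chosen $f$.

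The two-direction correctness argument is then routine: a satisfying assignment yields a trajectory, hence a deterministic policy, with $F_{1,H}(\pi) \le \lambda$; conversely any policy achieving $F_{1,H}(\pi) \le \lambda$ must, by the gadget design, trace out an occupancy decodable into a satisfying assignment. Because the GUMDP is deterministic, $F_{1,H}(\pi) = f(\rvec{d}^{\pi,H})$ for the unique induced trajectory, so there is no averaging to obscure the equivalence.

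\textbf{The hard part} will be designing the gadgets and the discounting bookkeeping so that distinct satisfying/unsatisfying trajectories produce occupancy vectors that are cleanly \emph{separated} by the threshold $\lambda$. The discount factor $\gamma$ weights each visit by $\gamma^t$, so the same literal choice contributes different occupancy mass depending on \emph{when} it occurs along the trajectory; I must either force every variable-decision to happen at a fixed, assignment-independent timestep (e.g., a rigid chain with padding so variable $x_i$ is always decided at step $i$), or renormalize so that the clause-satisfaction test is insensitive to timing. Getting this timing-alignment right — while keeping the construction polynomial and keeping the gap between the ``satisfiable'' and ``unsatisfiable'' occupancy values strictly on opposite sides of $\lambda$ — is the crux; the remaining verification of the Lipschitz bound and the polynomial-size claim is comparatively mechanical.
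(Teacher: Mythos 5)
Your proposal is correct and would go through, and its skeleton is essentially the one the paper uses: a \emph{deterministic} chain GUMDP in which every $\pi \in \Pi_\textnormal{NM}^\textnormal{D}$ induces exactly one trajectory (collapsing the expectation in $F_{1,H}$), a binary decision at each chain state encoding a combinatorial choice, linear functionals of the occupancy whose entries are rescaled to undo discounting, and a threshold test. The paper reduces from subset sum rather than 3-SAT: states $s_0, \ldots, s_N$ form a rigid chain, the two actions are ``include''/``not-include'', and the objective is the smooth strictly convex quadratic $f(\vec{d}) = (\vec{n}^\top \vec{d} - k)^2$ with $\lambda = 0$, where the entry of $\vec{n}$ for $(s_i, a_{\text{include}})$ carries the factor $\frac{1-\gamma^H}{(1-\gamma)\gamma^{i}} n_i$ — exactly the renormalization fix you flagged as the crux, and note that in a rigid chain the timing-alignment problem you worried about resolves itself automatically, since every trajectory visits $s_i$ at timestep $i$ regardless of the policy. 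Your 3-SAT instantiation with one adversarial cost vector per clause (entries $-\frac{1-\gamma^H}{(1-\gamma)\gamma^{i}}$ on literal-satisfying state-action pairs, so clause $k$ is satisfied iff $\vec{d}^\top\vec{c}_k \le -1$, with $\lambda = -1$) is equally valid and has a genuine advantage the paper's proof lacks: 3-SAT gives \emph{strong} NP-hardness, whereas subset sum is only weakly NP-complete (pseudo-polynomially solvable), so the paper's hardness instance necessarily involves binary-encoded numbers in the objective. Conversely, your $\max_k \vec{d}^\top\vec{c}_k$ objective is convex but nonsmooth, so it would not support the paper's follow-up remark that hardness persists \emph{even for smooth, strictly convex} objectives — the main thing the quadratic choice buys.
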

\begin{proofsketch}
    (Complete proof in Appendix~\ref{appendix:B:np_hard_result_proof}) We reduce the subset sum problem to the policy existence problem in GUMDPs with objective $F_{1,H}$. The subset sum problem asks whether, given a set $\N = \{n_0, n_1, \ldots, n_{N-1}\}$ of $N$ non-negative integers and a target sum $k \in \mathbb{N}$, there exists a subset of $\N$ whose elements sum to $k$.
    We map every instance of the subset sum problem as a GUMDP such that at each state $s_i$, for $i \in \{0, \ldots, N-1\}$, the policy $\pi \in \Pi_\text{NM}^\text{D}$ needs to decide between selecting: (i) $a_{\text{include}}$, thereby including $n_i$ in the sum; or (ii) $a_{\text{not-include}}$, thereby excluding $n_i$ from the sum. Then, we set $H \ge N$ and let $f(\vec{d}) = (\vec{n}^\top\vec{d} - k)^2$, where $\vec{d}$ denotes a discounted occupancy that captures information regarding the actions selected by the agent at each state $s_i$. We construct vector $\vec{n} \in \mathbb{R}^{|\S||\A|}$ such that $\vec{n}^\top\vec{d}$ equals the sum of the numbers selected by the policy. With this construction, the objective satisfies $f(\vec{d}) = 0$ if and only if the sum of the selected numbers equals $k$. 
    By setting $\lambda=0$, we are asking whether there exists a policy such that $F_{1,H}(\pi) \le 0$. Since $f(\vec{d}) = 0$ if and only if the selected numbers sum to $k$, the reduction is complete.
\end{proofsketch}
We note that the objective function $f$ used in the proof of the result above is Lipschitz and (strictly) convex. Thus, our result shows that, even for smooth convex objectives, the computational hardness of computing the optimal policy in the single-trial regime is NP-Hard. \cite{mutti_2023} present a hardness result for the single-trial optimization problem in the case of undiscounted finite-horizon GUMDPs. Our theorem extends this result to the discounted case. In addition, our proof is significantly simpler - a one-step reduction - compared to the NP-hardness argument in \citet{mutti_2023}, which relies on complexity results for partially observable MDPs. Furthermore, our result is more informative, as it shows that the hardness persists even when the objective $f$ is smooth and convex.

With the above hardness result in mind, we next explore how to develop practical planning algorithms for our problem. Naturally, in a worst-case sense, these algorithms may require a non-polynomial number of steps to retrieve the optimal policy. Nevertheless, our results show it is possible to develop practical algorithms that are superior in comparison to relevant baselines.

\section{Online Planning for GUMDPs in the Single-Trial Regime}
\label{sec:online_planning}
In this section, we investigate how we can solve the occupancy MDP introduced in the previous section by resorting to online planning techniques.

As previously shown, solving a GUMDP in the single-trial setting is closely related to solving a corresponding occupancy MDP. This connection allows us to employ an online planning approach in which, at any timestep $t \in \{0, \ldots, H-1\}$ of the interaction with the occupancy MDP, the algorithm receives the current state $\{s_t, \vec{o}_t\}$. The online planner then expands a look-ahead search tree where the root node corresponds to state $\{s_t, \vec{o}_t\}$. After a given number of iterations, the planning algorithm selects an action to execute in the environment; depending on the selected action, the environment evolves to a new state, and the process repeats until timestep $H$. This online planning strategy is particularly effective, as it allows computational resources to be focused on computing (approximately) optimal actions only along the specific trajectory experienced by the agent. This avoids the prohibitive cost of computing an optimal policy for every state of the occupancy MDP. 

\subsection{Monte-Carlo tree search for GUMDPs in the single-trial regime}
We employ an MCTS algorithm to solve the occupancy MDP. As described in Sec.~\ref{sec:background:MCTS}, the search tree of the online planning algorithm comprises decision and chance nodes. In the context of the occupancy MDP, each decision node corresponds to an action $a \in \A$, while each chance node corresponds to a given state $\{s, \vec{o}\} \in \S \times \O$ of the occupancy MDP.
At timestep $t$ of the interaction, the MCTS algorithm builds a planning tree rooted at the current state $\{s_t, \vec{o}_t\}$, following the four phases outlined in Sec.~\ref{sec:background:MCTS} at each iteration. We provide the following two remarks, which further characterize the convergence and regret of our proposed MCTS-based approach for solving GUMDPs in the single-trial regime. To derive the two results below, we make the following assumption.
\begin{assumption} \label{assumption:bounded_objective}
    The objective function satisfies $f_\text{min} \le f(\vec{d}) \le f_\text{max}$, for any $\vec{d} \in \Delta(\S \times \A)$.
\end{assumption}

\begin{remark}[Convergence]
    Under Assumption~\ref{assumption:bounded_objective}, for any horizon $H \in \mathbb{N}$, the MCTS algorithm provably solves the occupancy MDP as the number of iterations of the algorithm per timestep grows to infinity. More precisely, for any horizon $H \in \mathbb{N}$, we have that $\mathrm{OptGap}_H(\pi_\text{MCTS}) =0$ as the number of iterations of the algorithm per timestep grows to infinity, where we let $\pi_\text{MCTS}$ be the policy induced by the MCTS algorithm at each timestep. This result follows from our Theo.~\ref{theo:gumdp_equivalence_occupancy_MDP} and Theo. 6 in \citet{kocsis_2006} by rescaling the objective function to lie in the $[0,1]$ interval. Thus, from Prop.~\ref{proposition:optimality_gap_decomposition}, $\mathrm{OptGap}(\pi_\text{MCTS}) \le 8 \frac{L}{f_\text{max} - f_\text{min}} \gamma^H$.
\end{remark}

\begin{remark}[Regret bound]
    For a given number of episodes $n$ and a sequence of policies $(\pi_1, \ldots, \pi_n)$, let $\R(\pi_1, \ldots, \pi_n) = \frac{1}{n} \sum_{i=1}^n \textrm{OptGap}(\pi_i)$ be the expected regret for the sequence of policies and $\R_H(\pi_1, \ldots, \pi_n) = \frac{1}{n} \sum_{i=1}^n \textrm{OptGap}_H(\pi_i)$ the truncated expected regret for the sequence of policies. Under Assumption~\ref{assumption:bounded_objective}, for any horizon $H$, there exists an MCTS-based algorithm such that, when applied to solve the occupancy MDP with $n$ expansion steps, yields a sequence of policies $(\pi_1, \ldots, \pi_n)$ satisfying $ \R_H(\pi_1, \ldots, \pi_n) \le O(1/\sqrt{n})$. Furthermore, from Prop.~\ref{proposition:optimality_gap_decomposition}, $ \R(\pi_1, \ldots, \pi_n) \le O(1/\sqrt{n} + \gamma^H)$. This result follows from our Theo.~\ref{theo:gumdp_equivalence_occupancy_MDP} and the works of \cite{shah_2020} and \cite{comer_2025}, particularly Theo. 1 in \cite{shah_2020} and Theo. 1 in \cite{comer_2025}, which establish polynomial regret bounds for MCTS-based algorithms.
\end{remark}

\section{Experimental Results}
\label{sec:exp_results}
In this section, we empirically assess the performance of the proposed MCTS-based algorithm for solving GUMDPs in the single-trial setting. Below, we provide a brief description of the considered tasks and environments. We refer to Appendix~\ref{appendix:C} for a complete description of our experiments.

\paragraph{Tasks, environments, baselines, and experimental methodology}
We consider three tasks: (i) maximum state entropy exploration \citep{hazan_2019}; imitation learning \citep{abbeel_2004}; and (iii) adversarial MDPs \citep{rosenberg_2019}.
We consider two sets of environments. The first set consists of the illustrative GUMDPs depicted in Fig.~\ref{fig:illustrative_gumdps}, each associated with one of the tasks. The second set of environments come from the OpenAI Gym library \citep{openai_gym}. We consider the FrozenLake (FL), Taxi, and mountaincar (MC) environments. The framework of GUMDPs is defined over discrete state spaces; hence, we discretized the MC environment using a $10 \times 10$ grid with equally-spaced bins. For the FL, Taxi, and MC environments, the task of imitation learning consists in imitating an approximately optimal policy. We let $\gamma = 0.9$ and set $H=100$ for the illustrative GUMDPs and $H=200$ for the other environments. We perform 10 runs per experimental setting. We consider two baselines: (i) a random policy, $\pi_\text{Random}$; and (ii) the optimal policy for the infinite trials formulation \eqref{eq:gumdp_objective}, $\pi_\text{Solver}^*$, calculated by solving a constrained optimization problem with objective $f$ via a standard optimization solver. We denote the policy induced by our MCTS algorithm as $\pi_\text{MCTS}$ and consider 4000 iterations per timestep (results with other numbers of iterations in the Appendix~\ref{appendix:C}). Our code can be found \href{https://github.com/PPSantos/gumdps-finite-trials-optimization-public}{here}

\paragraph{Experimental results discussion}
We present our experimental results in Tab.~\ref{table:exp_results}. As seen, across nearly all experimental settings, $\pi_\text{MCTS}$ outperformed the baselines, showcasing the superior performance of our approach (the only exception is for the Taxi environment under the imitation learning task where the performance of $\pi_\text{MCTS}$ is similar to that of $\pi_\text{Solver}^*$). We highlight the gains attained by $\pi_\text{MCTS}$ in comparison to the infinite trials policy, $\pi_\text{Solver}^*$.

\begin{figure*}[t]
    \centering
    \begin{subfigure}[b]{0.34\textwidth}
        \centering
        \includegraphics[height=1.5cm]{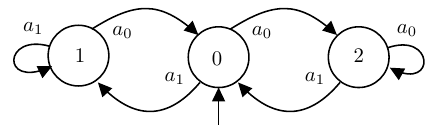}
        \caption{\centering $\mathcal{M}_{f,1}$ (Entropy max.) and \newline $\mathcal{M}_{f,3}$ (Adversarial MDP) .}
    \end{subfigure}
    \hfill
    \begin{subfigure}[b]{0.22\textwidth}
        \centering
        \includegraphics[height=1.5cm]{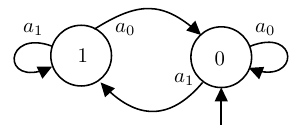}
        \caption{$\mathcal{M}_{f,2}$ (Imitation learning).}
    \end{subfigure}
    \hfill
    \begin{subfigure}[b]{0.40\textwidth}
        \centering
        \includegraphics[height=2.7cm]{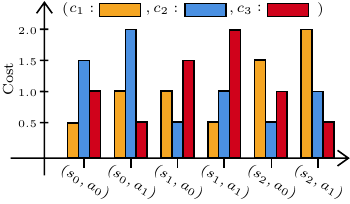}
        \caption{$\mathcal{M}_{f,3}$ costs (Adversarial MDP).}
    \end{subfigure}
    \caption{Illustrative GUMDPs. $\mathcal{M}_{f,1}$ and $\mathcal{M}_{f,3}$ share the same dynamics but differ in the objective function. In all GUMDPs, the chosen action succeeds with $90\%$ probability and, with $10\%$ probability, the agent randomly moves to any of the states. The behavior policy for $\M_{f,2}$ is $\beta(a_0|s_0) = 0.8$ and $\beta(a_0|s_1) = 0.2$. In (c), we plot the three cost functions, $c_1, c_2$ and $c_3$, of the adversarial MDP.
    }
    \label{fig:illustrative_gumdps}
\end{figure*}

\begin{table}[t]
\caption{Mean single-trial objective, $F_{1,H}(\pi)$, obtained by different policies, across tasks and environments. Values in parentheses correspond to the $90\%$ mean conf. interval. Lower is better.}
\label{table:exp_results}
\centering
\noindent
\resizebox{\linewidth}{!}{%
\begin{tabular}{N N N N N N N N N N}\toprule
\multicolumn{1}{N }{\textbf{}} & \multicolumn{4}{c }{\textbf{Maximum state entropy exploration}} & \multicolumn{4}{c }{\textbf{Imitation learning}} & \multicolumn{1}{N }{\textbf{Adversarial MDP}} \\  
\cmidrule(lr){2-5} \cmidrule(ll){6-9} \cmidrule(ll){10-10}
\multicolumn{1}{ l }{\textbf{Policy}} & $\M_{f,1}$ & FL & Taxi & MC & $\M_{f,2}$ & FL & Taxi & MC & $\M_{f,3}$ \\
\cmidrule(rr){1-1}\cmidrule(lr){2-5}\cmidrule(ll){6-9} \cmidrule(ll){10-10} 
\multicolumn{1}{ l }{$\pi_\text{Random}$} & 0.12 \tiny{(-0.04,+0.04)} & 0.51 \tiny{(-0.03,+0.03)} & 0.65 \tiny{(-0.01,+0.01)} & 0.72 \tiny{(-0.02,+0.02)} & 0.05 \tiny{(-0.02,+0.02)} & 0.07 \tiny{(-0.02,+0.02)} & 0.08 \tiny{(-0.01,+0.01)} & 0.18 \tiny{(-0.04,+0.04)} & 1.23 \tiny{(-0.02,+0.02)} \\
\cmidrule(rr){1-1}\cmidrule(lr){2-5}\cmidrule(ll){6-9} \cmidrule(ll){10-10} 
\multicolumn{1}{ l }{$\pi^*_\text{Solver}$} & 0.05 \tiny{(-0.02,+0.02)} & 0.48 \tiny{(-0.03,+0.03)} & 0.63 \tiny{(-0.01,+0.01)} & 0.70 \tiny{(-0.03,+0.03)} & 0.02 \tiny{(-0.01,+0.01)} & 0.05 \tiny{(-0.02,+0.02)} & \textbf{0.05} \tiny{(-0.001,+0.002)} & 0.07 \tiny{(-0.01,+0.02)} & 1.17 \tiny{(-0.02,+0.02)} \\
\cmidrule(rr){1-1}\cmidrule(lr){2-5}\cmidrule(ll){6-9} \cmidrule(ll){10-10} 
\multicolumn{1}{ l }{$\pi_\text{MCTS}$} & \textbf{0.01} \tiny{(-0.01,+0.01)} & \textbf{0.40} \tiny{(-0.03,+0.03)} & \textbf{0.59} \tiny{(-0.0,+0.0)} & \textbf{0.61} \tiny{(-0.02,+0.01)} &  \textbf{0.002} \tiny{(-0.001,+0.001)}  & \textbf{0.02} \tiny{(-0.005,+0.006)} & \textbf{0.05} \tiny{(-0.002,+0.002)} & \textbf{0.04} \tiny{(-0.01,+0.01)} & \textbf{1.07} \tiny{(-0.003,+0.004)} \\ 
\bottomrule
\end{tabular}
}
\end{table}

\section{Conclusion \& Limitations}
\label{exp:conclusion}
In this work, we contribute with the first approach to solve infinite-horizon discounted GUMDPs in the single-trial regime. In Sec.~\ref{sec:policy_optimization}, we provided the fundamental results underpinning policy optimization in the discounted single-trial regime. Then, in Secs.~\ref{sec:online_planning} and \ref{sec:exp_results}, we explored how we can resort to online planning techniques, in particular MCTS, to solve discounted GUMDPs in the single-trial regime. Our work takes a first step towards a broader application of GUMDPs in real-world settings where the agent's performance is typically evaluated under a single trial. The key limitations of our approach to solve GUMDPs in the single-trial regime are: (i) the MCTS algorithm requires a simulator of the environment to sample transitions; and (ii) the size of the matrix that keeps track of the running occupancy may be impractical for GUMDPs with large state and action spaces. We believe such limitations should be addressed by future work, for example, by investigating methods to compress the running occupancy. 

\subsection*{Acknowledgments}
This work was supported by Portuguese national funds through the Portuguese Fundação para a Ciência e a Tecnologia (FCT) under projects UID/50021/2025 and UID/PRR/50021/2025 (INESC-ID multi-annual funding), as well as AI-PackBot (project number 14935, LISBOA2030-FEDER-00854700). Pedro P. Santos acknowledges the FCT PhD grant 2021.04684.BD. Alberto Sardinha acknowledges the CNPq Research Productivity Fellowship (PQ), with reference 312699/2025-5. The authors also thank the lab managers at GAIPS for the support provided when running the computational experiments of this work.

\bibliography{main}
\bibliographystyle{iclr2026_conference}

\newpage 
\appendix
\section{Lipschitz constants} \label{appendix:lipschitz}

\begin{table}[h]
\centering
\caption{Common objective functions found in the GUMDPs literature. In $(\dagger)$ we assume $\vec{d}$ is lower bounded by $\epsilon$ satisfying $0 < \epsilon < e^{-2}$.}
\label{tab:objective_functions}
\begin{tabular}{c c c }
\toprule
\textbf{Task} & \textbf{Objective ($f(\vec{d})$)} & \textbf{Lipschitz constant ($L$)} \\ \cmidrule(ll){1-1} \cmidrule(ll){2-2} \cmidrule(ll){3-3}
MDPs/RL & $\vec{d}^\top \vec{c}, \quad \vec{c} \in \mathbb{R}^{|\S||\A|}$ & $\max_{s,a} |c(s,a)|$ \\ \cmidrule(ll){1-1} \cmidrule(ll){2-2} \cmidrule(ll){3-3}
Pure exploration & $\vec{d}^\top\log(\vec{d})$ & $|\log(\epsilon) + 1 |\; (\dagger)$ \\ \cmidrule(ll){1-1} \cmidrule(ll){2-2} \cmidrule(ll){3-3}
Imitation learning & $\| \vec{d} - \vec{d}_\beta \|_2^2, \quad \vec{d}_\beta \in \Delta(\S \times \A)$ & $4$ 
\\ \cmidrule(ll){1-1} \cmidrule(ll){2-2} \cmidrule(ll){3-3}
Adversarial MDPs & $\max_{k \in \{1, \ldots, K\}} \vec{d}^\top \vec{c}_k$ & $\max_{k \in \{1, \ldots, K\}} \left\{\max_{s,a} |c_k(s,a)| \right\}$ \\
\bottomrule
\end{tabular}
\end{table}

\paragraph{Objective function $f(\vec{d}) = \vec{c}^\top \vec{d}$}
It holds that
\begin{equation*}
    |f(\vec{d}_1) - f(\vec{d}_2)| = |\vec{c}^\top (\vec{d}_1 - \vec{d}_2)| = \sum_{s,a} |c(s,a)| |d_1(s,a) - d_2(s,a)| \le \max_{s,a} |c(s,a)| \| \vec{d}_1 - \vec{d}_2\|_1.
\end{equation*}

\paragraph{Objective function $f(\vec{d}) = \vec{d}^\top \log(\vec{d})$}
We assume $\vec{d}$ is lower bounded by $\epsilon$, i.e., $d(s,a) \ge\epsilon$ with $0 < \epsilon < e^{-2}$ for all $s \in \S, a \in \A$. We let $f(\vec{d}) = \sum_{s,a} g(d(s,a))$, for $g(x) = x \log(x)$. We note that, $g'(x) = \log(x) + 1$ and it holds for any $x \in [\epsilon,1]$ that $|g'(x)| \le |\log(\epsilon) + 1|$. Thus, for any $x_1, x_2 \in [\epsilon,1]$ we have that

\begin{align*}
    | g(x_1) - g(x_2)| &= \left| \int_{x_2}^{x_1} g'(x) dx \right| = \left| \int_{\min\{x_1,x_2\}}^{\max\{x_1,x_2\}} g'(x) dx \right|\\
    &\le \int_{\min\{x_1,x_2\}}^{\max\{x_1,x_2\}} \left|g'(x) \right| dx \le \int_{\min\{x_1,x_2\}}^{\max\{x_1,x_2\}} \left|\log(\epsilon) + 1 \right| dx \\
    &= \left|\log(\epsilon) + 1 \right| |x_1 - x_2|.
\end{align*}

Thus, for any $\vec{d}_1, \vec{d}_2 \in \Delta(\S\times \A)$ lower bounded by $0 < \epsilon < e^{-2}$, it holds that
\begin{align*}
    |f(\vec{d}_1) - f(\vec{d}_1)| &= \left|\sum_{s,a} \left( g(d_1(s,a)) -g(d_2(s,a)) \right) \right| \\
    &\overset{(a)}{\le} \sum_{s,a} \left| g(d_1(s,a)) -g(d_2(s,a)) \right| \\
    &\le \sum_{s,a} \left|\log(\epsilon) + 1 \right| |d_1(s,a) - d_2(s,a)| \\
    &= \left|\log(\epsilon) + 1 \right| \| \vec{d}_1 - \vec{d}_2 \|_1
\end{align*}
were (a) follows from the triangular inequality.

\paragraph{Objective function $f(\vec{d}) = \| \vec{d} - \vec{d}_\beta\|_2^2$}
It holds that $\nabla f(\vec{d}) = 2 (\vec{d}- \vec{d}_\beta)$. Now,
\begin{equation*}
    \max_{\vec{d} \in \Delta(\S \times \A) } \| \nabla f(\vec{d}) \|_1 = 2 \max_{\vec{d} \in \Delta(\S \times \A) } \| \vec{d} - \vec{d}_\beta \|_1 \le 2 \max_{\vec{d}_1, \vec{d}_2 \in \Delta(\S \times \A) } \| \vec{d}_1 - \vec{d}_2 \|_1 = 4.
\end{equation*}
Since the function $f$ is continuous and differentiable over the simplex, which is compact, it holds that $L = 4$ is a valid Lipschitz constant as it corresponds to an upper bound on the maximum magnitude of the gradient of $f$ over $\Delta(\S \times \A)$.

\section{Supplementary materials for Sec.~\ref{sec:policy_optimization}}
\label{appendix:B}

\subsection{Proof of Theorem~\ref{theo:classes_of_policies}}
\label{appendix:B:theo_classes_of_policies_proof}
\begin{customthm}{\ref{theo:classes_of_policies}}
    There exists a GUMDP $\mathcal{M}_f$ with $\gamma \in (0,1)$ and $L$-Lipschitz convex objective such that:
    \begin{enumerate}
        \item $F_{1}(\pi_\textnormal{S}) > F_{1}(\pi_\textnormal{M})$, for some $\pi_\textnormal{M} \in \Pi_\textnormal{M}$ and any $\pi_\textnormal{S} \in \Pi_\textnormal{S}$.
        \item $F_{1}(\pi_\textnormal{M}) > F_{1}(\pi_\textnormal{NM})$, for some $\pi_\textnormal{NM} \in \Pi_\textnormal{NM}$ and any $\pi_\textnormal{M} \in \Pi_\textnormal{M}$.
    \end{enumerate}
\end{customthm}

\begin{proof}
    To prove our result, we consider the GUMDP depicted in Fig.~\ref{fig:policy_classes_optimality_gumdp}. To simplify our proof, we consider state-dependent occupancies and denote an occupancy for the GUMDP above with the vector $\vec{d} = [d(s^0), d(s^1), d(s^2)]$. The objective function is $f(\vec{d}) = \vec{d}^\top \mat{A} \vec{d}$, where $A =\text{diag}([0,1, 1])$, which is Lipschitz (over $\Delta(\S)$) and convex. Under any trajectory it holds that $d(s^0) = (1-\gamma) \sum_{t=0}^\infty \gamma^{2t+1}= \frac{(1-\gamma)\gamma}{1-\gamma^2}$. Hence, it holds that, under any trajectory, $d(s^1) + d(s^2) = 1 - \frac{(1-\gamma)\gamma}{1-\gamma^2} = \frac{1-\gamma}{1-\gamma^2}$. Thus, we focus our attention to the value of the occupancy at state $s^1$, $d(s^1)$, and let $d(s^2) = \frac{1-\gamma}{1-\gamma^2} - d(s^1)$. With this, we can define our objective as a function of $d(s^1)$ only by letting $f(d(s^1)) = d(s^1)^2 + (\frac{1-\gamma}{1-\gamma^2} - d(s^1))^2$.

    \begin{figure}[h]
        \centering
        \includegraphics[width=0.25\linewidth]{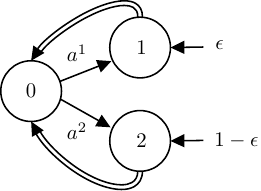}
        \caption{Illustration of the GUMDP used in the proof of Theo.~\ref{theo:classes_of_policies} with $\S = \{s^0, s^1, s^2\}$ and $\A = \{a^1,a^2\}$. The distribution of initial states is $p_0(s^0)=0, p_0(s^1) = \epsilon, p_0(s^2) = 1 - \epsilon$, where we set $\epsilon = 1/2$. All transitions are deterministic and in states $s^1$ and $s^2$ any of the actions takes the agent back to state $s^0$.}
        \label{fig:policy_classes_optimality_gumdp}
    \end{figure}

    For the first part of the Theorem it holds, for any $\pi_\text{S} \in \Pi_\text{S}$, that
    \begingroup
    \allowdisplaybreaks
    \begin{align*}
        F_{1}(\pi_\text{S}) &= \mathbb{E}\left[ f(\rvec{d}^{\pi_\text{S}}) \right] \\
        &\overset{(a)}{=} \mathbb{E} \left[ f(\rvar{d}^{\pi_\text{S}}_{\rvar{s}_0,\rvar{a}_0,\rvar{s}_1,\rvar{a}_1, \ldots}(s^1)) \Bigg| \substack{\rvar{s}_0 \sim p_0, \rvar{a}_0 \sim \pi_\text{S}(\cdot|\rvar{s}_0), \\ \rvar{s}_1 \sim P^{\rvar{a}_0}(\cdot|s_0), \rvar{a}_1 \sim \pi_\text{S}(\cdot|\rvar{s}_1), \\ \ldots, \\ \rvar{s}_3 \sim P^{\rvar{a}_2}(\cdot|s_2), \rvar{a}_3 \sim \pi_\text{S}(\cdot|\rvar{s}_3), \ldots} \right] \\
        &\overset{(b)}{=} \mathbb{E} \left[ \mathbb{E} \left[ f(\rvar{d}^{\pi_\text{S}}_{\rvar{s}_0,\rvar{a}_0,\rvar{s}_1,\rvar{a}_1, \ldots}(s^1)) \Bigg| \substack{\rvar{a}_5 \sim \pi_\text{S}(\cdot|s^0),\\ \rvar{s}_6 \sim P^{\rvar{a}_5}(\cdot|s^0), \ldots} \right] \Bigg| \substack{\rvar{s}_0 \sim p_0, \rvar{a}_0 \sim \pi_\text{S}(\cdot|\rvar{s}_0), \\ \rvar{s}_1 \sim P^{\rvar{a}_0}(\cdot|s_0), \rvar{a}_1 \sim \pi_\text{S}(\cdot|\rvar{s}_1), \\ \ldots, \\ \rvar{s}_4 \sim P^{\rvar{a}_3}(\cdot|s_3), \rvar{a}_4 \sim \pi_\text{S}(\cdot|\rvar{s}_4)} \right] \\
        &\overset{(c)}{=} \mathbb{E} \left[ \mathbb{E} \left[ f\left( (1-\gamma) \sum_{t=0}^4 \gamma^t \mathbf{1}(\rvar{s}_t = s^1) + \gamma^5 \Tilde{\rvar{d}}^{\pi_\text{S}}_{\rvar{a}_5,\rvar{s}_6,\rvar{a}_6, \ldots}(s^1)\right) \Bigg| \substack{\rvar{a}_5 \sim \pi_\text{S}(\cdot|s^0),\\ \rvar{s}_6 \sim P^{\rvar{a}_5}(\cdot|s^0), \ldots} \right] \Bigg| \substack{\rvar{s}_0 \sim p_0, \rvar{a}_0 \sim \pi_\text{S}(\cdot|\rvar{s}_0), \\ \rvar{s}_1 \sim P^{\rvar{a}_0}(\cdot|s_0), \rvar{a}_1 \sim \pi_\text{S}(\cdot|\rvar{s}_1), \\ \ldots, \\ \rvar{s}_4 \sim P^{\rvar{a}_3}(\cdot|s_3), \rvar{a}_4 \sim \pi_\text{S}(\cdot|\rvar{s}_4)} \right] \\
        &\overset{(d)}{\ge} \mathbb{E} \left[ f\left( (1-\gamma) \sum_{t=0}^4 \gamma^t \mathbf{1}(\rvar{s}_t = s^1) + \gamma^5 \mathbb{E} \left[ \Tilde{\rvar{d}}^{\pi_\text{S}}_{\rvar{a}_5,\rvar{s}_6,\rvar{a}_6, \ldots}(s^1) \Bigg| \substack{\rvar{a}_5 \sim \pi_\text{S}(\cdot|s^0),\\ \rvar{s}_6 \sim P^{\rvar{a}_5}(\cdot|s^0), \ldots} \right] \right) \Bigg| \substack{\rvar{s}_0 \sim p_0, \rvar{a}_0 \sim \pi_\text{S}(\cdot|\rvar{s}_0), \\ \rvar{s}_1 \sim P^{\rvar{a}_0}(\cdot|s_0), \rvar{a}_1 \sim \pi_\text{S}(\cdot|\rvar{s}_1), \\ \ldots, \\ \rvar{s}_4 \sim P^{\rvar{a}_3}(\cdot|s_3), \rvar{a}_4 \sim \pi_\text{S}(\cdot|\rvar{s}_4)} \right] \\
        &= \mathbb{E} \Bigg[ f\Bigg( (1-\gamma) \left( \mathbf{1}(\rvar{s}_0 = s^1) + \gamma^2 \mathbf{1}(\rvar{s}_2 = s^1) + \gamma^4 \mathbf{1}(\rvar{s}_4 = s^1) \right) \\
        &\quad \quad \quad \quad + (1-\gamma) \mathbb{E} \left[ \sum_{t=5}^\infty \gamma^t \mathbf{1}(\rvar{s}_t = s^1) \Bigg| \substack{\rvar{a}_5 \sim \pi_\text{S}(\cdot|s^0),\\ \rvar{s}_6 \sim P^{\rvar{a}_5}(\cdot|s^0), \ldots} \right] \Bigg) \Bigg| \substack{\rvar{s}_0 \sim p_0, \rvar{a}_0 \sim \pi_\text{S}(\cdot|\rvar{s}_0), \\ \rvar{s}_1 \sim P^{\rvar{a}_0}(\cdot|s_0), \rvar{a}_1 \sim \pi_\text{S}(\cdot|\rvar{s}_1), \\ \ldots, \\ \rvar{s}_4 \sim P^{\rvar{a}_3}(\cdot|s_3), \rvar{a}_4 \sim \pi_\text{S}(\cdot|\rvar{s}_4)} \Bigg] \\
        &\overset{(e)}{=} \mathbb{E} \Bigg[ f\Bigg( (1-\gamma) \left( \mathbf{1}(\rvar{s}_0 = s^1) + \gamma^2 \mathbf{1}(\rvar{s}_2 = s^1) + \gamma^4 \mathbf{1}(\rvar{s}_4 = s^1) \right) \\
        &\quad \quad \quad \quad + (1-\gamma) \pi_\text{S}(a^1|s^0) \frac{\gamma^6}{1-\gamma^2} \Bigg) \Bigg| \substack{\rvar{s}_0 \sim p_0, \rvar{a}_0 \sim \pi_\text{S}(\cdot|\rvar{s}_0), \\ \rvar{s}_1 \sim P^{\rvar{a}_0}(\cdot|s_0), \rvar{a}_1 \sim \pi_\text{S}(\cdot|\rvar{s}_1), \\ \ldots, \\ \rvar{s}_4 \sim P^{\rvar{a}_3}(\cdot|s_3), \rvar{a}_4 \sim \pi_\text{S}(\cdot|\rvar{s}_4)} \Bigg],
    \end{align*}
    \endgroup
    where in (a) we emphasized that the random vector $\rvec{d}^{\pi_\text{S}}$ depends on random variables $\rvar{s}_0, \rvar{a}_0, \rvar{s}_1, \rvar{a}_1, \ldots$; (b) follows from the fact that $\rvar{s}_5 = s^0$ with probability one for trajectories drawn from the GUMDP, i.e., for any trajectory the state at timestep $5$ is always $s^0$, and thus we can split and simplify the expectation. In step (c), we let $\Tilde{\rvec{d}}$ be the random vector defined as in \eqref{eq:estimator_discounted} for the GUMDP depicted in Fig.~\ref{fig:policy_classes_optimality_gumdp}, but where $p_0(s^0) = 1$ and zero otherwise. Step (d) follows from Jensen's inequality since, for any values $(s_0, a_0, s_1, a_1, \ldots, s_4, a_4) \in (\S \times \A)^5$ the random variables of the outer expectation can take, it holds that $\mathbb{E}[g(\Tilde{\rvec{d}}^{\pi_\text{S}})] \ge g(\mathbb{E}[\Tilde{\rvec{d}}^{\pi_\text{S}}])$ where we let $g(x) = f( (1-\gamma) \sum_{t=0}^4 \gamma^t \mathbf{1}(s_t = s^1) + \gamma^5 x)$, which is convex. Finally, step (e) follows from the fact that
    \begingroup
    \allowdisplaybreaks
    \begin{align*}
        \mathbb{E} \left[ \sum_{t=5}^\infty \gamma^t \mathbf{1}(\rvar{s}_t = s^1) \Bigg|
 \substack{\rvar{a}_5 \sim \pi_\text{S}(\cdot|s^0),\\ \rvar{s}_6 \sim P^{\rvar{a}_5}(\cdot|s^0), \ldots} \right] &=  \sum_{t=5}^\infty \gamma^t \PP[\pi_\text{S}]{\rvar{s}_t = s^1} \\
        &= \gamma^5 \cdot 0 + \gamma^6 \pi_\text{S}(a^1|s^0) + \gamma^7 \cdot 0 + \gamma^8 \pi_\text{S}(a^1|s^0) + \ldots \\
        &= \pi_\text{S}(a^1|s^0) \left(\gamma^6 + \gamma^8  + \ldots \right)\\
        &= \pi_\text{S}(a^1|s^0) \sum_{t=0}^\infty \gamma^{2t+6}\\
        &= \pi_\text{S}(a^1|s^0) \frac{\gamma^6}{1-\gamma^2}.
    \end{align*}
    \endgroup

    We can now explicitly write the expectation in the last step above, yielding, for any $\pi_\text{S} \in \Pi_\text{S}$ and while letting $\epsilon = 1/2$,
    \begingroup
    \allowdisplaybreaks
    \begin{align*}
        F_{1}(\pi_\text{S}) &\ge \frac{1}{2} \pi_\text{S}(a^1|s^0)^2 \Bigg[ f\Bigg((1-\gamma)\left(1+\gamma^2+\gamma^4+\pi_\text{S}(a^1|s^0)\frac{\gamma^6}{1-\gamma^2}\right)\Bigg) \\
        &\qquad  \qquad \qquad + f\Bigg((1-\gamma)\left(\gamma^2+\gamma^4+\pi_\text{S}(a^1|s^0)\frac{\gamma^6}{1-\gamma^2}\right)\Bigg) \Bigg] \\
        &\qquad + \frac{1}{2} \pi_\text{S}(a^1|s^0) (1-\pi_\text{S}(a^1|s^0)) \Bigg[ f\Bigg((1-\gamma)\left(1+\gamma^2+\pi_\text{S}(a^1|s^0)\frac{\gamma^6}{1-\gamma^2}\right)\Bigg) \\
        &\qquad  \qquad \qquad + f\Bigg((1-\gamma)\left(1+\gamma^4+\pi_\text{S}(a^1|s^0)\frac{\gamma^6}{1-\gamma^2}\right)\Bigg) \\
        &\qquad  \qquad \qquad + f\Bigg((1-\gamma)\left(\gamma^2+\pi_\text{S}(a^1|s^0)\frac{\gamma^6}{1-\gamma^2}\right)\Bigg) \\
        &\qquad  \qquad \qquad + f\Bigg((1-\gamma)\left(\gamma^4+\pi_\text{S}(a^1|s^0)\frac{\gamma^6}{1-\gamma^2}\right)\Bigg) \Bigg] \\
        &\qquad + \frac{1}{2} (1-\pi_\text{S}(a^1|s^0))^2 \Bigg[ f\Bigg((1-\gamma)\left(1+\pi_\text{S}(a^1|s^0)\frac{\gamma^6}{1-\gamma^2}\right)\Bigg) \\
        &\qquad \qquad \qquad + f\Bigg((1-\gamma)\left(\pi_\text{S}(a^1|s^0)\frac{\gamma^6}{1-\gamma^2}\right)\Bigg) \Bigg].
    \end{align*}
    \endgroup

    In summary, $F_{1}(\pi_\text{S})$ is lower bounded by the expression above for any policy $\pi_\text{S} \in \Pi_\text{S}$. Since $f$ is a quadratic function, the lower bound above is also a quadratic function with respect to variable $\pi_\text{S}(a^1|s^0)$. Thus, we can calculate the minimizer of the lower bound above by computing the gradient with respect to $\pi_\text{S}(a^1|s^0)$ and setting it to zero. It can be checked that, for any $\gamma \in (0,1)$, $\pi_\text{S}(a^1|s^0) = 1/2$ minimizes the lower bound above (we provide below a snippet of Mathematica code that supports this claim). This implies that, for any $\pi_\text{S} \in \Pi_\text{S}$, $F_{1}(\pi_\text{S})$ is lower bounded by the expression above evaluated at $\pi_\text{S}(a^1|s^0) = 1/2$, i.e.,
    \begin{equation*}
        F_{1}(\pi_\text{S}) \ge \frac{2 - 2 \gamma^2 + 2 \gamma^4 - 2 \gamma^6 + 2 \gamma^8 - 2 \gamma^{10} + \gamma^{12}}{2 (1 + \gamma)^2}.
    \end{equation*}

    Now, consider the non-stationary policy $\pi_\text{M} \in \Pi_\text{M}$ that deterministically selects $a^1$ at timesteps $t=3,7,11,\ldots$ and deterministically selects $a^2$ at timesteps $t=1,5,9,\ldots$. It holds that
    \begingroup
    \allowdisplaybreaks
    \begin{align*}
        F_{1}(\pi_\text{M}) &= \mathbb{E}\left[ f(\rvec{d}^{\pi_\text{M}}) \right] \\
        &= \mathbb{E} \left[ f\left((1-\gamma) \sum_{t=0}^\infty \gamma^t \mathbf{1}(\rvar{s}_t = s^1) \right) \right] \\
        &= \epsilon f\left( (1-\gamma)(1+0+0+0+\gamma^4+0+0+0+\gamma^8+\ldots)\right) \\
        &\quad + (1-\epsilon) f\left( (1-\gamma)(0+0+0+0+\gamma^4+0+0+0+\gamma^8+\ldots)\right) \\
        &= \epsilon f\left( (1-\gamma) \left( 1 + \sum_{t=0}^\infty \gamma^{4t+4}\right)\right) + (1-\epsilon) f\left( (1-\gamma) \sum_{t=0}^\infty \gamma^{4t+4}\right) \\
        &\overset{(a)}{=} \frac{1}{2}f\left( (1-\gamma)\left(1+\frac{\gamma^4}{1-\gamma^4}\right)\right) + \frac{1}{2} f\left( \frac{(1-\gamma)\gamma^4}{1-\gamma^4}\right) \\
        &= \frac{1+ \gamma^2 - \gamma^6 + \gamma^8}{(1-\gamma)^2(1+\gamma^2)^2},
    \end{align*}
    \endgroup
    where in (a) we let $\epsilon = 1/2$ and simplified the sums.

    To conclude, it holds, for any $\pi_\text{S} \in \Pi_\text{S}$ and $\gamma \in (0,1)$, that
    \begin{equation*}
        F_{1}(\pi_\text{S}) \ge \frac{2 - 2 \gamma^2 + 2 \gamma^4 - 2 \gamma^6 + 2 \gamma^8 - 2 \gamma^{10} + \gamma^{12}}{2 (1 + \gamma)^2} > \frac{1+ \gamma^2 - \gamma^6 + \gamma^8}{(1-\gamma)^2(1+\gamma^2)^2} = F_{1}(\pi_\text{M}),
    \end{equation*}
    which can be verified using a software for symbolic/algebraic computation such as Mathematica \citep{mathematica}. We provide a snippet of the Mathematica code we used below.

    \begin{tiny}
    \begin{myverbatim}[title={Snippet of Mathematica code to support the proof that $F_1(\pi_\text{S}) > F_1(\pi_\text{M}), \forall \pi_\text{S}$.}]
[In]: f[o_, g_] := o^2 + (((1 - g)/(1 - g^2)) - o)^2
[In]: h[x_, g_] := (1/2)*x^2*(f[(1 - g) (1 + g^2 + g^4 + x*(g^6/(1 - g^2))), g] +
        f[(1 - g) (g^2 + g^4 + x*(g^6/(1 - g^2))), g]) +
    (1/2)*x (1 - x)*(f[(1 - g) (1 + g^2 + x*(g^6/(1 - g^2))), g] +
        f[(1 - g) (1 + g^4 + x*(g^6/(1 - g^2))), g] +
        f[(1 - g) (g^2 + x*(g^6/(1 - g^2))), g] +
        f[(1 - g) (g^4 + x*(g^6/(1 - g^2))), g]) +
    (1/2)*(1 - x)^2 (f[(1 - g) (1 + x*(g^6/(1 - g^2))), g] +
        f[(1 - g) (x*(g^6/(1 - g^2))), g])
[In]: Simplify[Solve[D[h[x, g], x] == 0, x]]
[Out]: {{x -> 1/2}}
[In]: a[g_] = (1/2)*f[(1 - g)*(1 + g^4/(1 - g^4)), g] + (1/2)*f[(1 - g)*(g^4/(1 - g^4)), g]
[In]: Reduce[a[g] < h[1/2, g]]
[Out]: g < 0 || 0 < g < 1 || g > 1
    \end{myverbatim}
    \end{tiny}

    For the second part of the Theorem it holds, for any $\pi_\text{M} = (\pi_0, \pi_1, \pi_2, \ldots) \in \Pi_\text{M}$, that
    \begingroup
    \allowdisplaybreaks
    \begin{align*}
        F_{1}(\pi_\text{M}) &= \mathbb{E}\left[ f(\rvec{d}^{\pi_\text{M}}) \right] \\
        &\overset{(a)}{=} \mathbb{E} \left[ f(\rvar{d}^{\pi_\text{M}}_{\rvar{s}_0,\rvar{a}_0,\rvar{s}_1,\rvar{a}_1, \ldots}(s^1)) \Bigg| \substack{\rvar{s}_0 \sim p_0, \rvar{a}_0 \sim \pi_0(\cdot|\rvar{s}_0), \\ \rvar{s}_1 \sim P^{\rvar{a}_0}(\cdot|s_0), \rvar{a}_1 \sim \pi_1(\cdot|\rvar{s}_1), \\ \rvar{s}_2 \sim P^{\rvar{a}_1}(\cdot|s_1), \rvar{a}_2 \sim \pi_2(\cdot|\rvar{s}_2), \ldots} \right] \\
        &\overset{(b)}{=} \mathbb{E} \left[ \mathbb{E} \left[ f(\rvar{d}^{\pi_\text{M}}_{\rvar{s}_0,\rvar{a}_0,\rvar{s}_1,\rvar{a}_1, \ldots}(s^1)) \Bigg| \substack{\rvar{a}_3 \sim \pi_3(\cdot|s^0),\\ \rvar{s}_4 \sim P^{\rvar{a}_3}(\cdot|s^0), \ldots} \right] \Bigg| \substack{\rvar{s}_0 \sim p_0, \rvar{a}_0 \sim \pi_0(\cdot|\rvar{s}_0), \\ \rvar{s}_1 \sim P^{\rvar{a}_0}(\cdot|s_0), \rvar{a}_1 \sim \pi_1(\cdot|\rvar{s}_1), \\ \rvar{s}_2 \sim P^{\rvar{a}_1}(\cdot|s_1), \rvar{a}_2 \sim \pi_2(\cdot|\rvar{s}_2)} \right] \\
        &\overset{(c)}{\ge} \mathbb{E} \left[ f\left(\mathbb{E} \left[ \rvar{d}^{\pi_\text{M}}_{\rvar{s}_0,\rvar{a}_0,\rvar{s}_1,\rvar{a}_1, \ldots}(s^1))\Bigg| \substack{\rvar{a}_3 \sim \pi_3(\cdot|s^0),\\ \rvar{s}_4 \sim P^{\rvar{a}_3}(\cdot|s^0), \ldots} \right] \right) \Bigg| \substack{\rvar{s}_0 \sim p_0, \rvar{a}_0 \sim \pi_0(\cdot|\rvar{s}_0), \\ \rvar{s}_1 \sim P^{\rvar{a}_0}(\cdot|s_0), \rvar{a}_1 \sim \pi_1(\cdot|\rvar{s}_1), \\ \rvar{s}_2 \sim P^{\rvar{a}_1}(\cdot|s_1), \rvar{a}_2 \sim \pi_2(\cdot|\rvar{s}_2)} \right] \\
        &= \mathbb{E} \left[ f\left(\mathbb{E} \left[ (1-\gamma) \sum_{t=0}^\infty \gamma^t \mathbf{1}(\rvar{s}_t = s^1) \Bigg| \substack{\rvar{a}_3 \sim \pi_3(\cdot|s^0),\\ \rvar{s}_4 \sim P^{\rvar{a}_3}(\cdot|s^0), \ldots} \right] \right) \Bigg| \substack{\rvar{s}_0 \sim p_0, \rvar{a}_0 \sim \pi_0(\cdot|\rvar{s}_0), \\ \rvar{s}_1 \sim P^{\rvar{a}_0}(\cdot|s_0), \rvar{a}_1 \sim \pi_1(\cdot|\rvar{s}_1), \\ \rvar{s}_2 \sim P^{\rvar{a}_1}(\cdot|s_1), \rvar{a}_2 \sim \pi_2(\cdot|\rvar{s}_2)} \right] \\
        &= \mathbb{E} \Bigg[ f\Bigg( (1-\gamma) \Bigg(\mathbf{1}(\rvar{s}_0 = s^1) + \mathbf{1}(\rvar{s}_2 = s^1) \\
        &\quad \quad \quad \quad + \mathbb{E} \left[ \sum_{t=3}^\infty \gamma^t \mathbf{1}(\rvar{s}_t = s^1) \Bigg| \substack{\rvar{a}_3 \sim \pi_3(\cdot|s^0),\\ \rvar{s}_4 \sim P^{\rvar{a}_3}(\cdot|s^0), \ldots} \Bigg] \Bigg) \Bigg) \Bigg| \substack{\rvar{s}_0 \sim p_0, \rvar{a}_0 \sim \pi_0(\cdot|\rvar{s}_0), \\ \rvar{s}_1 \sim P^{\rvar{a}_0}(\cdot|s_0), \rvar{a}_1 \sim \pi_1(\cdot|\rvar{s}_1), \\ \rvar{s}_2 \sim P^{\rvar{a}_1}(\cdot|s_1), \rvar{a}_2 \sim \pi_2(\cdot|\rvar{s}_2)} \right],
    \end{align*}
    \endgroup
    where in (a) we emphasized that the random vector $\rvec{d}^{\pi_\text{M}}$ depends on random variables $\rvar{s}_0, \rvar{a}_0, \rvar{s}_1, \rvar{a}_1, \ldots$. Step (b) follows from the fact that $\rvar{s}_3 = s^0$ with probability one for trajectories drawn from the GUMDP, i.e., for any trajectory the state at timestep $3$ is always $s^0$, and thus we can split and simplify the expectation. Step (c) follows from Jensen's inequality, following similar steps as those for the first part of the Theorem. Now, it holds that
    \begingroup
    \allowdisplaybreaks
    \begin{align*}
        \mathbb{E} \left[ \sum_{t=3}^\infty \gamma^t \mathbf{1}(\rvar{s}_t = s^1) \Bigg| \substack{\rvar{a}_3 \sim \pi_3(\cdot|s^0),\\ \rvar{s}_4 \sim P^{\rvar{a}_3}(\cdot|s^0), \ldots} \right] &=  \sum_{t=3}^\infty \gamma^t \PP[\pi_\text{M}]{\rvar{s}_t = s^1} \\
        &= \gamma^3 \cdot 0 + \gamma^4 \pi_3(a^1|s^0) + \gamma^5 \cdot 0 + \gamma^6 \pi_5(a^1|s^0) + \ldots \\
        &= \sum_{t=2}^\infty \gamma^{2t} \pi_{2t-1}(a^1|s^0).
    \end{align*}
    \endgroup
    For any policy $\pi_\text{M}$, it holds that $\sum_{t=2}^\infty \gamma^{2t} \pi_{2t-1}(a^1|s^0) \in [0,\frac{\gamma^4}{1-\gamma^2}]$. Hence, if we replace expression $\mathbb{E} \left[ \sum_{t=3}^\infty \gamma^t \mathbf{1}(\rvar{s}_t = s^1) \Bigg| \substack{\rvar{a}_3 \sim \pi_3(\cdot|s^0),\\ \rvar{s}_4 \sim P^{\rvar{a}_3}(\cdot|s^0), \ldots} \right]$ with $c \frac{\gamma^4}{1-\gamma^2}$, for $c \in [0,1]$, and show that
    \begin{equation*}
        \mathbb{E} \left[ f\left( (1-\gamma) \left(\mathbf{1}(\rvar{s}_0 = s^1) + \mathbf{1}(\rvar{s}_2 = s^1) + c \frac{\gamma^4}{1-\gamma^2}\right) \right) \Bigg| \substack{\rvar{s}_0 \sim p_0, \rvar{a}_0 \sim \pi_0(\cdot|\rvar{s}_0), \\ \rvar{s}_1 \sim P^{\rvar{a}_0}(\cdot|s_0), \rvar{a}_1 \sim \pi_1(\cdot|\rvar{s}_1), \\ \rvar{s}_2 \sim P^{\rvar{a}_1}(\cdot|s_1), \rvar{a}_2 \sim \pi_2(\cdot|\rvar{s}_2)} \right]
    \end{equation*}
    is strictly lower bounded by $F_{1}(\pi_\text{NM})$ for a given $\pi_\text{NM} \in \Pi_\text{NM}$, for any $\pi_0,\pi_1,\pi_2 \in \Pi_\text{S}$ and $c \in [0,1]$, this implies that $F_{1}(\pi_\text{NM})$ is strictly lower than that of any possible $\pi \in \Pi_\text{M}$. For any $\pi_0,\pi_1,\pi_2 \in \Pi_\text{S}$ and $c \in [0,1]$, the expectation in the expression above can be simplified as
    \begingroup
    \allowdisplaybreaks
    \begin{align*}
        &\mathbb{E} \left[ f\left( (1-\gamma) \left(\mathbf{1}(\rvar{s}_0 = s^1) + \mathbf{1}(\rvar{s}_2 = s^1) + c \frac{\gamma^4}{1-\gamma^2}\right) \right) \Bigg| \substack{\rvar{s}_0 \sim p_0, \rvar{a}_0 \sim \pi_0(\cdot|\rvar{s}_0), \\ \rvar{s}_1 \sim P^{\rvar{a}_0}(\cdot|s_0), \rvar{a}_1 \sim \pi_1(\cdot|\rvar{s}_1), \\ \rvar{s}_2 \sim P^{\rvar{a}_1}(\cdot|s_1), \rvar{a}_2 \sim \pi_2(\cdot|\rvar{s}_2)} \right]\\
        &= \epsilon \pi_1(a^1|s^0) f\left((1-\gamma)\left(1+\gamma^2 + c \frac{\gamma^4}{1-\gamma^2} \right)\right) \\
        &\quad + \epsilon (1-\pi_1(a^1|s^0)) f\left((1-\gamma)\left(1 + c \frac{\gamma^4}{1-\gamma^2} \right)\right) \\
        &\quad + (1-\epsilon) \pi_1(a^1|s^0) f\left((1-\gamma)\left(\gamma^2 + c \frac{\gamma^4}{1-\gamma^2} \right)\right) \\
        &\quad + (1-\epsilon) (1-\pi_1(a^1|s^0)) f\left((1-\gamma)\left(c \frac{\gamma^4}{1-\gamma^2} \right)\right) \\
        &\overset{(a)}{=} \frac{1}{2} \pi_1(a^1|s^0) \left( f\left((1-\gamma)\left(1+\gamma^2 + c \frac{\gamma^4}{1-\gamma^2} \right)\right) + f\left((1-\gamma)\left(\gamma^2 + c \frac{\gamma^4}{1-\gamma^2} \right)\right) \right) \\
        &\quad + \frac{1}{2} (1-\pi_1(a^1|s^0)) \left( f\left((1-\gamma)\left(1 + c \frac{\gamma^4}{1-\gamma^2} \right)\right) + f\left((1-\gamma)\left(c \frac{\gamma^4}{1-\gamma^2} \right)\right) \right),
    \end{align*}
    \endgroup
    where in (a) we let $\epsilon = 1/2$.

    Now consider the non-markovian policy $\pi_\text{NM} \in \Pi_\text{NM}$ that: (i) if $s_0 = s^1$, then at timesteps $t=1,5,9, \ldots$ deterministically selects action $a^2$ and at timesteps $t=3,7,11, \ldots$ deterministically action $a^1$; (ii) if $s_0 = s^2$, then at timesteps $t=1,5,9, \ldots$ deterministically selects action $a^1$ and at timesteps $t=3,7,11, \ldots$ deterministically action $a^2$. We have that
    \begingroup
    \allowdisplaybreaks
    \begin{align*}
        F_{1}(\pi_\text{NM}) &= \mathbb{E}\left[ f(\rvec{d}^{\pi_\text{NM}}) \right] \\
        &= \mathbb{E}\left[ f\left((1-\gamma) \sum_{t=0}^\infty \gamma^t \mathbf{1}(\rvar{s}_t = s^1) \right) \right] \\
        &= \epsilon f\left( (1-\gamma)(1+0+0+0+\gamma^4+0+0+0+\gamma^8+\ldots)\right) \\
        &\quad + (1-\epsilon) f\left( (1-\gamma)(0+0+\gamma^2+0+0+0+\gamma^6+0+0+0+\gamma^{10}+\ldots)\right) \\
        &= \epsilon f\left( (1-\gamma) \sum_{t=0}^\infty \gamma^{4t}\right) + (1-\epsilon) f\left( (1-\gamma) \sum_{t=0}^\infty \gamma^{4t+2}\right) \\
        &\overset{(a)}{=} \frac{1}{2}f\left( \frac{1-\gamma}{1-\gamma^4}\right) + \frac{1}{2} f\left( \frac{(1-\gamma)\gamma^2}{1-\gamma^4}\right)
    \end{align*}
    \endgroup
    where in (a) we let $\epsilon = 1/2$ and simplified the sums.

    We now need to verify that, for any $\pi_1(a^1|s^0) \in [0,1]$, $c \in [0,1]$ and $\gamma \in (0,1)$,
    \begingroup
    \allowdisplaybreaks
    \begin{align*}
        & \pi_1(a^1|s^0) \underbrace{\frac{1}{2} \left( f\left((1-\gamma)\left(1+\gamma^2 + c \frac{\gamma^4}{1-\gamma^2} \right)\right) + f\left((1-\gamma)\left(\gamma^2 + c \frac{\gamma^4}{1-\gamma^2} \right)\right) \right) }_{(i)} \\
        &\quad + (1-\pi_1(a^1|s^0)) \underbrace{\frac{1}{2} \left( f\left((1-\gamma)\left(1 + c \frac{\gamma^4}{1-\gamma^2} \right)\right) + f\left((1-\gamma)\left(c \frac{\gamma^4}{1-\gamma^2} \right)\right) \right)}_{(ii)}\\
        &> \frac{1}{2}f\left( \frac{1-\gamma}{1-\gamma^4}\right) + \frac{1}{2} f\left( \frac{(1-\gamma)\gamma^2}{1-\gamma^4}\right) = F_{1}(\pi_\text{NM}).
    \end{align*}
    \endgroup

    As can be seen, the expression on the left-hand side of the inequality above corresponds to a weighted combination (with weights $\pi_1(a^1|s^0)$ and $1-\pi_1(a^1|s^0)$) of components (i) and (ii). By resorting to a software for symbolic/algebraic computation such as Mathematica \citep{mathematica} it can be shown that $(i) > \frac{1}{2}f\left( \frac{1-\gamma}{1-\gamma^4}\right) + \frac{1}{2} f\left( \frac{(1-\gamma)\gamma^2}{1-\gamma^4}\right) = F_{1}(\pi_\text{NM})$ and $(ii) > \frac{1}{2}f\left( \frac{1-\gamma}{1-\gamma^4}\right) + \frac{1}{2} f\left( \frac{(1-\gamma)\gamma^2}{1-\gamma^4}\right) = F_{1}(\pi_\text{NM})$ for any $c \in [0,1]$ and $\gamma \in (0,1)$. We provide the snippets of the Mathematica code we used below. This implies that the weighted combination satisfies the inequality above for any $\pi_1(a^1|s^0) \in [0,1]$ and the conclusion follows.

    \begin{tiny}
    \begin{myverbatim}[title={Snippet of Mathematica code to attest that $(i) > F_1(\pi_\text{NM})$.}]
    [In]: f[o_, g_] := o^2 + (((1 - g)/(1 - g^2)) - o)^2
    [In]: m[c_, g_] := (1/2)*(f[(1 - g)*(1 + g^2 + c*(g^4/(1 - g^2))), g] +
        f[(1 - g)*(g^2 + c*(g^4/(1 - g^2))), g])
    [In]: n[g_] := (1/2)*f[(1 - g)/(1 - g^4), g] + (1/2)*f[((1 - g)*g^2)/(1 - g^4), g]
    [In]: Reduce[m[c, g] > n[g]]
    [Out]: (c < 1/2 && (g < -1 || -1 < g < 0 || g > 0)) || 
        (c == 1/2 && (g < -1 || -1 < g < 0 || 0 < g < 1 || g > 1)) ||
        (c > 1/2 && (g < -1 || -1 < g < 0 || g > 0))
    \end{myverbatim}
    \end{tiny}

    \begin{tiny}
    \begin{myverbatim}[title={Snippet of Mathematica code to attest that $(ii) > F_1(\pi_\text{NM})$.}]
    [In]: f[o_, g_] := o^2 + (((1 - g)/(1 - g^2)) - o)^2
    [In]: m[c_, g_] := (1/2)*(f[(1 - g)*(1 + c*(g^4/(1 - g^2))), g] + 
        f[(1 - g)*(c*(g^4/(1 - g^2))), g])
    [In]: n[g_] := (1/2)*f[(1 - g)/(1 - g^4), g] + (1/2)*f[((1 - g)*g^2)/(1 - g^4), g]
    [In]: Reduce[m[c, g] > n[g]]
    [Out]: (c < 1/2 && (g < -1 || -1 < g < 0 || g > 0)) ||
        (c == 1/2 && (g < -1 || -1 < g < 0 || 0 < g < 1 || g > 1)) ||
        (c > 1/2 && (g < -1 || -1 < g < 0 || g > 0))
    \end{myverbatim}
    \end{tiny}
\end{proof}

\subsection{Proof of Proposition~\ref{proposition:optimality_gap_decomposition}}
\label{appendix:B:truncation_proof}

\begin{lemma}
    \label{lemma:upper_bound_finite_vs_infinite_omega}
    For any $\omega \in \Omega$, $\pi \in \Pi_\textnormal{NM}$ and $H \in \mathbb{N}$ it holds that
    $\left| f(\rvec{d}^{\pi}_\omega) - f(\rvec{d}^{\pi,H}_\omega) \right| \le 2 L \gamma^H.$
\end{lemma}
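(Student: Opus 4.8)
The plan is to invoke the Lipschitz assumption to reduce the claim to a bound on the $\ell_1$ distance between the two occupancy vectors along a single fixed trajectory. Fixing $\omega = (s_0,a_0,s_1,a_1,\ldots)$ and applying the $L$-Lipschitz property of $f$ (Assumption~1), I would first write
$$\left| f(\rvec{d}^{\pi}_\omega) - f(\rvec{d}^{\pi,H}_\omega) \right| \le L \left\| \rvec{d}^{\pi}_\omega - \rvec{d}^{\pi,H}_\omega \right\|_1,$$
so that it suffices to establish $\left\| \rvec{d}^{\pi}_\omega - \rvec{d}^{\pi,H}_\omega \right\|_1 \le 2\gamma^H$. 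Note that this bound is purely deterministic: it holds for \emph{every} $\omega$, which is why no expectation or probabilistic argument is needed.

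Next I would compute the entrywise difference directly from the definitions in \eqref{eq:estimator_discounted} and \eqref{eq:estimator_discounted_truncated}. Splitting the infinite sum defining $\rvar{d}^{\pi}_\omega(s,a)$ into its first $H$ terms and its tail, and using the identity $1 - \frac{1}{1-\gamma^H} = -\frac{\gamma^H}{1-\gamma^H}$, the per-coordinate difference decomposes as $\rvar{d}^{\pi}_\omega(s,a) - \rvar{d}^{\pi,H}_\omega(s,a) = B(s,a) - A(s,a)$, where
$$A(s,a) = \frac{(1-\gamma)\gamma^H}{1-\gamma^H} \sum_{t=0}^{H-1} \gamma^t \mathbf{1}(s_t=s,a_t=a) \ge 0$$
captures the renormalization of the first $H$ terms and
$$B(s,a) = (1-\gamma) \sum_{t=H}^\infty \gamma^t \mathbf{1}(s_t=s,a_t=a) \ge 0$$
captures the discarded tail. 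Isolating these two nonnegative contributions is the conceptual heart of the argument.

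To finish, since $A$ and $B$ are nonnegative I would bound $|B(s,a) - A(s,a)| \le A(s,a) + B(s,a)$ by the triangle inequality, giving $\left\| \rvec{d}^{\pi}_\omega - \rvec{d}^{\pi,H}_\omega \right\|_1 \le \sum_{s,a} A(s,a) + \sum_{s,a} B(s,a)$. The key structural observation is that along a single trajectory exactly one state-action pair is visited at each timestep, so $\sum_{s,a} \mathbf{1}(s_t=s,a_t=a) = 1$ for every $t$. Summing over $(s,a)$ therefore collapses each term into a geometric series, yielding $\sum_{s,a} A(s,a) = \frac{(1-\gamma)\gamma^H}{1-\gamma^H}\cdot\frac{1-\gamma^H}{1-\gamma} = \gamma^H$ and $\sum_{s,a} B(s,a) = (1-\gamma)\cdot\frac{\gamma^H}{1-\gamma} = \gamma^H$, whose sum is $2\gamma^H$.

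I do not anticipate a genuine obstacle, as the argument is elementary once the two sources of discrepancy (reweighting mass and tail mass) are separated. The only point meriting mild care is the sign bookkeeping in the split and verifying that the renormalization factor $\frac{1}{1-\gamma^H}$ contributes exactly $\gamma^H$ rather than something larger. Crucially, I would bound $|B-A|$ by $A+B$ instead of trying to exploit cancellation between the two nonnegative quantities; this keeps the computation clean and independent of the particular trajectory $\omega$, which is precisely what the lemma requires.
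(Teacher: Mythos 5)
Your proof is correct and follows essentially the same route as the paper's: after the Lipschitz reduction, both arguments split the difference into the renormalization discrepancy on the first $H$ terms and the discarded tail, and bound each piece by $\gamma^H$ using the fact that exactly one state-action pair is visited per timestep (you phrase this as $\sum_{s,a}\mathbf{1}(s_t=s,a_t=a)=1$, the paper as $\|\rvec{d}^\pi_{\omega,t}\|_1=1$). The only cosmetic difference is that you apply the triangle inequality entrywise and then swap summation order, whereas the paper applies it directly to the vector-valued sum over timesteps.
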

\begin{proof}
    For any $\omega \in \Omega$, $\pi \in \Pi_\text{NM}$ and $H \in \mathbb{N}$ it holds that
    \begingroup
    \allowdisplaybreaks
    \begin{align*}
        \left| f(\rvec{d}^{\pi}_\omega) - f(\rvec{d}^{\pi,H}_\omega) \right| &\overset{\text{(a)}}{\le} L  \left\| \rvec{d}^{\pi}_\omega - \rvec{d}^{\pi,H}_\omega \right\|_1 \\
        &\overset{\text{(b)}}{=} L \left\| (1-\gamma) \sum_{t=0}^\infty \gamma^t \rvec{d}^\pi_{\omega,t} - \frac{(1-\gamma)}{1-\gamma^H} \sum_{t=0}^{H-1} \gamma^t \rvec{d}^{\pi}_{\omega,t} \right\|_1 \\
        &= L \left\| \frac{(1-\gamma)}{1-\gamma^H} \sum_{t=0}^{H-1} \gamma^t \left( (1-\gamma^H) \rvec{d}^\pi_{\omega,t} - \rvec{d}^\pi_{\omega,t} \right) + (1-\gamma) \sum_{t=H}^\infty \gamma^t \rvec{d}^\pi_{\omega,t} \right\|_1 \\
         &\overset{\text{(c)}}{\le} L \left( \frac{(1-\gamma)}{1-\gamma^H} \sum_{t=0}^{H-1} \gamma^t \left\|(1-\gamma^H) \rvec{d}^\pi_{\omega,t} - \rvec{d}^\pi_{\omega,t} \right\|_1 + (1-\gamma) \sum_{t=H}^\infty \gamma^t \| \rvec{d}^\pi_{\omega,t} \|_1 \right) \\
        &= L \frac{(1-\gamma)}{1-\gamma^H} \gamma^H \sum_{t=0}^{H-1} \gamma^t \left\| \rvec{d}^\pi_{\omega,t} \right\|_1 + L \gamma^H \\
        &= 2 L \gamma^H,
    \end{align*}
    \endgroup
    where: (a) is due to the $L$-Lipschitz assumption; in (b) we used $\rvec{d}^\pi_\omega = (1-\gamma) \sum_{t=0}^\infty \gamma^t \rvec{d}^\pi_{\omega,t}$ where $\rvec{d}^\pi_{\omega,t}(s,a) = \mathbf{1}(s_t=s, a_t = a)$ is the empirical occupancy induced by the trajectory $\omega$ at timestep $t$ and $\rvec{d}^{\pi,H}_\omega = (1-\gamma) /(1-\gamma^H)\sum_{t=0}^{H-1} \gamma^t \rvec{d}^\pi_{\omega,t}$. Step (c) follows from the triangular inequality.
\end{proof}

\begin{lemma}
    \label{lemma:lipschitz_bound}
    If $f$ is $L$-Lipschitz then it holds, for arbitrary $\pi \in \Pi_\textnormal{NM}$ and $H \in \mathbb{N}$, that
    \begin{equation*}
        |F_{1}(\pi) - F_{1,H}(\pi)| \le 2 L \gamma^H.
    \end{equation*}
\end{lemma}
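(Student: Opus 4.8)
The plan is to reduce this statement to the pointwise (per-trajectory) bound already established in Lemma~\ref{lemma:upper_bound_finite_vs_infinite_omega}, by passing the inequality through the expectation. Recalling that $F_{1}(\pi) = \mathbb{E}\left[ f(\rvec{d}^\pi) \right]$ and $F_{1,H}(\pi) = \mathbb{E}\left[ f(\rvec{d}^{\pi,H}) \right]$, where both expectations are taken over the same probability space $(\Omega, \F, \mathbb{P}_\pi)$, I would first use linearity of expectation to write the difference of the two objectives as a single expectation,
\[
    F_{1}(\pi) - F_{1,H}(\pi) = \mathbb{E}\left[ f(\rvec{d}^\pi) - f(\rvec{d}^{\pi,H}) \right].
\]

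Next I would move the absolute value inside the expectation using $\left| \mathbb{E}[X] \right| \le \mathbb{E}\left[ |X| \right]$ (the triangle inequality for integrals, i.e., Jensen's inequality applied to the convex map $x \mapsto |x|$), and then invoke Lemma~\ref{lemma:upper_bound_finite_vs_infinite_omega}, which bounds the integrand by $2 L \gamma^H$ uniformly over every $\omega \in \Omega$. Chaining these two steps gives
\[
    \left| F_{1}(\pi) - F_{1,H}(\pi) \right| \le \mathbb{E}\left[ \left| f(\rvec{d}^\pi) - f(\rvec{d}^{\pi,H}) \right| \right] \le \mathbb{E}\left[ 2 L \gamma^H \right] = 2 L \gamma^H,
\]
where the final equality holds because the bound $2 L \gamma^H$ is a deterministic constant independent of $\omega$, so its expectation is itself.

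There is essentially no hard step here: all of the analytical content — the splitting of the discounted sum into the truncated head and the geometric tail, the handling of the renormalization factor $(1-\gamma)/(1-\gamma^H)$, and the use of the $L$-Lipschitz assumption — has already been absorbed into Lemma~\ref{lemma:upper_bound_finite_vs_infinite_omega}, so this result is a direct consequence. The only technical point I would verify is that both $\omega \mapsto f(\rvec{d}^\pi_\omega)$ and $\omega \mapsto f(\rvec{d}^{\pi,H}_\omega)$ are integrable, so that the expectations are well-defined and linearity applies; this is immediate since $f$ is Lipschitz, hence continuous, on the compact simplex $\Delta(\S \times \A)$ and therefore bounded, making both random variables bounded and measurable.
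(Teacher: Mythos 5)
Your proof is correct and follows essentially the same route as the paper's: combine the two expectations by linearity, pull the absolute value inside via $|\mathbb{E}[X]| \le \mathbb{E}[|X|]$, and then apply the uniform per-trajectory bound of Lemma~\ref{lemma:upper_bound_finite_vs_infinite_omega}. Your added remark on integrability (boundedness of $f$ on the compact simplex) is a minor refinement the paper leaves implicit, but it does not change the argument.
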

\begin{proof}
    It holds that, for arbitrary $\pi \in \Pi_\text{NM}$ and $H \in \mathbb{N}$,
    \begingroup
    \allowdisplaybreaks
    \begin{align*}
        |F_{1}(\pi) - F_{1,H}(\pi)| &= \left| \EE{f(\rvec{d}^\pi)} - \EE{f(\rvec{d}^{\pi,H})} \right|\\
        &= \left| \EE{f(\rvec{d}^\pi) - f(\rvec{d}^{\pi,H})} \right|\\
        &\overset{\text{(a)}}{\le} \EE{ \left| f(\rvec{d}^\pi) - f(\rvec{d}^{\pi,H}) \right| }\\
        &= \sum_{\omega \in \Omega} \mathbb{P}_\pi\left[ \omega \right] |  f(\rvec{d}^\pi_\omega) - f(\rvec{d}^{\pi,H}_\omega)| \\
        &\overset{\text{(b)}}{\le} 2 L \gamma^H,
    \end{align*}
    \endgroup
    where: (a) follows from $|\mathbb{E}[X]| \le \mathbb{E}[|X|]$; and (b) is due to Lemma~\ref{lemma:upper_bound_finite_vs_infinite_omega}.
\end{proof}

\begin{lemma}
    \label{lemma:policy_optimization_discounted_non_markovian}
    For every GUMDP $\mathcal{M}_f$ with $L$-Lipschitz $f$ and $H \in \mathbb{N}$, if $\pi^* = \argmin_{\pi \in \Pi_\textnormal{NM}} F_{1,H}(\pi)$, then it holds that
    $\mathrm{OptGap}(\pi^* ) \le 4 L \gamma^H.$
\end{lemma}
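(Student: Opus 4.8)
The plan is to treat $F_{1,H}$ as a surrogate for $F_1$ and to exploit the uniform approximation guarantee already established in Lemma~\ref{lemma:lipschitz_bound}. First I would fix an $F_1$-optimal non-Markovian policy $\pi^\dagger \in \argmin_{\pi \in \Pi_\textnormal{NM}} F_1(\pi)$, so that by the very definition of regret $\R(\pi^*) = F_1(\pi^*) - F_1(\pi^\dagger)$. The whole task then reduces to upper bounding $F_1(\pi^*) - F_1(\pi^\dagger)$.

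Next I would assemble a short three-term chain that brackets the optimality of $\pi^*$ with two applications of Lemma~\ref{lemma:lipschitz_bound}. Since $\pi^*$ minimizes $F_{1,H}$ over $\Pi_\textnormal{NM}$ by hypothesis, we have $F_{1,H}(\pi^*) \le F_{1,H}(\pi^\dagger)$. Applying Lemma~\ref{lemma:lipschitz_bound} once to pass from $F_1(\pi^*)$ to $F_{1,H}(\pi^*)$, and once to pass from $F_{1,H}(\pi^\dagger)$ back to $F_1(\pi^\dagger)$, yields the chain $F_1(\pi^*) \le F_{1,H}(\pi^*) + 2L\gamma^H \le F_{1,H}(\pi^\dagger) + 2L\gamma^H \le F_1(\pi^\dagger) + 4L\gamma^H$. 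Subtracting $F_1(\pi^\dagger)$ from both ends gives exactly $\R(\pi^*) \le 4L\gamma^H$, which is the claimed bound; the two $2L\gamma^H$ error terms simply add.

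The argument is essentially routine, so I do not anticipate a genuine obstacle; the only point meriting a remark is the existence of the two minimizers over the (possibly infinite) class $\Pi_\textnormal{NM}$. The statement already posits $\pi^* = \argmin_{\pi \in \Pi_\textnormal{NM}} F_{1,H}(\pi)$, so I would mirror this by assuming $\pi^\dagger$ is likewise attained. Should attainment fail, the same inequalities go through verbatim with infima in place of minima together with an arbitrarily small slack that can be driven to zero, since both objectives are bounded (as $f$ is $L$-Lipschitz on the compact simplex $\Delta(\S \times \A)$). The key structural fact making everything work is that Lemma~\ref{lemma:lipschitz_bound} holds \emph{uniformly} over $\Pi_\textnormal{NM}$ with a policy-independent constant $2L\gamma^H$, which is precisely what allows the surrogate-optimization telescoping to close.
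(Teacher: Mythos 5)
Your proof is correct and is essentially the paper's own argument: both rest on two applications of Lemma~\ref{lemma:lipschitz_bound} (the uniform $2L\gamma^H$ bound) combined with the optimality of $\pi^*$ for $F_{1,H}$, the paper phrasing this as ``$F_{1,H}-2L\gamma^H$ lower bounds $F_1$, hence the minima are within $2L\gamma^H$,'' while you route the same inequalities through an explicit $F_1$-minimizer $\pi^\dagger$. Your additional remark on replacing minima by infima when attainment fails is a harmless (and valid) strengthening, not a departure from the paper's approach.
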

\begin{proof}
    As shown in Lemma ~\ref{lemma:lipschitz_bound}, $|F_{1}(\pi) - F_{1,H}(\pi)| \le 2L\gamma^H$, for arbitrary $\pi$. From such inequality, we can infer that $F_{1,H}(\pi) - 2L\gamma^H \le F_{1}(\pi), \forall \pi \in \Pi_\text{NM}$, i.e., function $F_{1,H}(\pi) - 2L\gamma^H$ lower bounds function $F_{1}(\pi)$. We provide a visual illustration of $F_{1}$ and $F_{1,H}$ in Fig.~\ref{fig:gumdps_optimization_discounted_proof}. Let $\pi^* = \argmin_{\pi} F_{1,H}(\pi)$.
    It holds that
     $$F_{1,H}(\pi^*) - 2L\gamma^H = \min_\pi F_{1,H}(\pi) - 2L\gamma^H \overset{\text{(a)}}{\le} \min_\pi F_{1}(\pi) \overset{\text{(b)}}{\le} F_{1}(\pi^*), $$
    where (a) follows from the fact that $F_{1,H}(\pi) - 2L\gamma^H$ lower bounds $F_{1}(\pi)$; and (b) from the fact that $\min_\pi F_{1}(\pi) \le F_{1}(\pi'), \forall \pi'$ (from the definition of a minimum). We illustrate the inequalities above in Fig.~\ref{fig:gumdps_optimization_discounted_proof}. Finally, we note that 
    \begin{align*}
        F_{1}(\pi^*) - \left(F_{1,H}(\pi^*) - 2L\gamma^H \right) &= F_{1}(\pi^*) - F_{1,H}(\pi^*) + 2L\gamma^H \\
        &\le | F_{1}(\pi^*) - F_{1,H}(\pi^*) | + 2L\gamma^H\\
        &\le 4L\gamma^H.
    \end{align*}
    The above implies that
    $$ \mathrm{OptGap}(\pi^*) = F_{1}(\pi^*) - \min_\pi F_{1}(\pi) \le 4L\gamma^H ,$$
    as illustrated in Fig.~\ref{fig:gumdps_optimization_discounted_proof}.
    
    \begin{figure}[h]
        \centering
        \includegraphics[height=3.5cm]{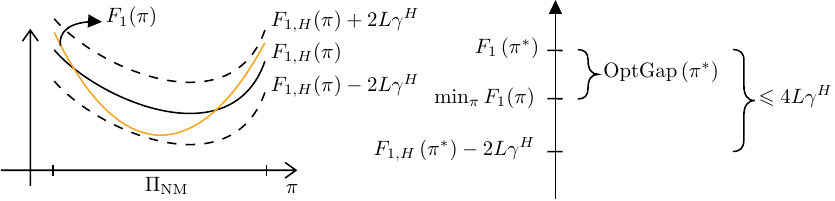}
        \caption{Illustration of objectives $F_{1}$ and $F_{1,H}$, as well as the relation between different quantities of interest for the proof.}
    \label{fig:gumdps_optimization_discounted_proof}
    \end{figure}
\end{proof}

\begin{customprop}{\ref{proposition:optimality_gap_decomposition}}[Optimality gap decomposition]
    For arbitrary $\pi \in \Pi_\textnormal{NM}$, it holds that
    \begin{equation}
        \mathrm{OptGap}(\pi) \le \underbrace{ F_{1,H}(\pi) - \min_{\pi_H \in \Pi_\textnormal{NM}} \left\{ F_{1,H}(\pi_H) \right\} }_{\text{$=\mathrm{OptGap}_H(\pi)$}} + 8 L \gamma^H,
    \end{equation}
    where $\mathrm{OptGap}_H(\pi)$ is the optimality gap of policy $\pi$ under the single-trial truncated objective with horizon $H$.  
\end{customprop}
\begin{proof}
    Let $\pi^*_H = \argmin_{\pi \in \Pi_\text{NM}} F_{1,H}(\pi)$, i.e., $\pi^*_H$ is optimal with respect to the truncated objective. It holds that,
    \begingroup
    \allowdisplaybreaks
    \begin{align*}
        \mathrm{OptGap}(\pi) &= \EE{f(\rvec{d}^\pi)} - \min_{\pi' \in \Pi_{\text{NM}}} \EE{f(\rvec{d}^{\pi'})}\\
        &= \left|\EE{f(\rvec{d}^\pi)} - \min_{\pi' \in \Pi_{\text{NM}}} \EE{f(\rvec{d}^{\pi'})} \right|\\
        &\overset{\text{(a)}}{\le} \left|\EE{f(\rvec{d}^\pi)} - \EE{f(\rvec{d}^{\pi_H^*})} \right| + \left| \EE{f(\rvec{d}^{\pi_H^*})} - \min_{\pi' \in \Pi_{\text{NM}}} \EE{f(\rvec{d}^{\pi'})} \right|\\
        &\overset{\text{(b)}}{\le} \left|\EE{f(\rvec{d}^\pi)} - \EE{f(\rvec{d}^{\pi_H^*})} \right| + 4 L \gamma^H \\
        &\overset{\text{(c)}}{\le} \left|\EE{f(\rvec{d}^\pi)} - \EE{f(\rvec{d}^{\pi,H})} \right| + \left| \EE{f(\rvec{d}^{\pi,H})} - \EE{f(\rvec{d}^{\pi_H^*})} \right| + 4 L \gamma^H \\
        &\overset{\text{(d)}}{\le} 2L\gamma^H + \left| \EE{f(\rvec{d}^{\pi,H})} - \EE{f(\rvec{d}^{\pi_H^*})} \right| + 4 L \gamma^H \\
        &\overset{\text{(e)}}{\le} \left| \EE{f(\rvec{d}^{\pi,H})} - \EE{f(\rvec{d}^{\pi_H^*,H})} \right| + \left| \EE{f(\rvec{d}^{\pi_H^*,H})} - \EE{f(\rvec{d}^{\pi_H^*})} \right| + 6 L \gamma^H \\
        &\overset{\text{(f)}}{\le} \left| \EE{f(\rvec{d}^{\pi,H})} - \EE{f(\rvec{d}^{\pi_H^*,H})} \right| + 8 L \gamma^H \\
        &= \EE{f(\rvec{d}^{\pi,H})} - \min_{\pi_H \in \Pi_\text{NM}} \left\{\EE{f(\rvec{d}^{\pi_H,H})} \right\} + 8 L \gamma^H \\
        &= F_{1,H}(\pi) - \min_{\pi_H \in \Pi_\text{NM}} \left\{F_{1,H}(\pi_H) \right\} + 8 L \gamma^H 
    \end{align*}
    \endgroup
    where (a) follows from adding and subtracting $\EE{f(\rvec{d}^{\pi_H^*})}$ and applying the triangular inequality; (b) follows from Lemma \ref{lemma:policy_optimization_discounted_non_markovian}; (c) follows from adding and subtracting $\EE{f(\rvec{d}^{\pi,H})}$ and applying the triangular inequality; (d) follows from Lemma \ref{lemma:lipschitz_bound}; (e) follows from adding and subtracting $\EE{f(\rvec{d}^{\pi_H^*,H})}$ and applying the triangular inequality; and (f) follows from Lemma \ref{lemma:lipschitz_bound}.
\end{proof}

\subsection{The occupancy MDP: Value and action-value functions}
\label{appendix:B:value_action_value_funcs_occupancy_mdp}
%

For a given policy $\pi_\text{O} \in \Pi_\text{S}$, the interaction between the agent and the occupancy MDP gives rise to a random process $(\{\rvar{s}_0,\rvec{o}_0\}, \rvar{a}_0, \{\rvar{s}_1,\rvec{o}_1\}, \rvar{a}_1, \ldots, \{\rvar{s}_{H}, \rvec{o}_H\})$ such that:
\begin{enumerate}
    \item $\PP{\{\rvar{s}_0,\rvec{o}_0\} = \{s_0,\vec{o}_0\}} = p_{0,\text{O}}(\{s_0,\vec{o}_0\})$
    \item $\PP{\{\rvar{s}_{t+1},\rvec{o}_{t+1}\} = \{s', \vec{o}'\} | \{\rvar{s}_0,\rvec{o}_0\}, \rvar{a}_0, \ldots, \{\rvar{s}_{t}, \rvec{o}_t\}, \rvar{a}_t} = P_\text{O}^{\rvar{a}_t}(\{\rvar{s}_{t}, \rvec{o}_t\}, \{\rvar{s}', \rvec{o}'\})$
    \item $\PP[]{\rvar{a}_t = a | \{\rvar{s}_0,\rvec{o}_0\}, \rvar{a}_0, \ldots, \{\rvar{s}_{t}, \rvec{o}_t\}} = \pi_\text{O}(a|\{\rvar{s}_{t}, \rvec{o}_t\})$
\end{enumerate}

We let $(\Omega_\text{O}, \F_\text{O}, \mathbb{P}_{\pi_\text{O}}^O)$ be the probability space over the sequence of random variables $(\{\rvar{s}_0,\rvec{o}_0\}, \rvar{a}_0, \{\rvar{s}_1,\rvec{o}_1\}, \rvar{a}_1, \ldots, \{\rvar{s}_{H}, \rvec{o}_H\})$ that satisfies conditions 1-3 above. We write specific trajectories as $\omega_\text{O} \in \Omega_\text{O}$, with $\omega_\text{O} = (\{s_0,\vec{o}_0\}, a_0, \{s_1,\vec{o}_1\}, a_1, \ldots, \{s_{H}, \vec{o}_H\})$. We highlight that the probability of a given trajectory $\omega_\text{O} \in \Omega_\text{O}$ under stationary policy $\pi_\text{O} \in \Pi_\text{S}$ can be calculated as 
\begin{align*}
    \mathbb{P}^\text{O}_{\pi_\text{O}}(\omega_\text{O}) &= p_{0,\text{O}}(\{s_0,\vec{o}_0\}) \cdot \pi_\text{O}(a_0|\{s_0,\vec{o}_0\}) \cdot P_\text{O}^{a_0}(\{\rvar{s}_0,\rvec{o}_0\}, \{\rvar{s}_1,\rvec{o}_1\}) \ldots \\
    &\qquad \qquad \cdot \pi_\text{O}(a_{H-1}|\{s_{H-1},\vec{o}_{H-1}\}) \cdot P_\text{O}^{a_{H-1}}(\{s_{H-1},\vec{o}_{H-1}\}, \{s_{H},\vec{o}_{H}\}).
\end{align*}

To streamline our notation we introduce the mapping $\sigma : \S \times \O \times \A \rightarrow \O$ that describes the evolution of component $\vec{o}$ of the state, defined as 
\begin{equation*}
    \left[\sigma(s,\vec{o},a)\right]_{s',a'} = \begin{cases}
    o(s,a) + \gamma^t & \text{if $s'=s, a'=a$}, \\
    o(s,a) & \text{otherwise},
  \end{cases}
\end{equation*}
where $[o]_{s',a'}$ denotes the value of entry $s',a'$ for vector $\vec{o}$, i.e., $[o]_{s',a'} = o(s',a')$.

The \textit{value function} under $\M_\text{O}$, for any $t \in \{0,\ldots, H\}$, is defined as
\begin{align}
    V^{\pi_\text{O}}_t(\{s,\vec{o}\}) &= \EE[\pi_\text{O}]{ \sum_{t'=t}^H c_\text{O}(\{\rvar{s}_{t'}, \rvec{o}_{t'}\}) \bigg| \{\rvar{s}_{t},\rvec{o}_{t}\} = \{s, \vec{o}\}} \label{eq:occupancy_mdp_value_function} \\
    &= \EE[\pi_\text{O}]{ c_\text{O}(\{\rvar{s}_H,\rvec{o}_{H}\}) \big| \{\rvar{s}_{t},  \rvec{o}_{t} \} = \{s, \vec{o}\}},
\end{align}
and the \textit{optimal value function}, for any $t \in \{0,\ldots, H\}$, as $V^*_t(\{s, \vec{o}\}) = \min_{\pi_\text{O} \in \Pi_\text{S}^\text{D}} V^{\pi_\text{O}}_t(\{s,\vec{o}\})$. The \textit{action-value function} under $\M_\text{O}$, for any $t \in \{0,\ldots, H-1\}$, is defined as
\begin{align*}
    Q^{\pi_\text{O}}_t(\{s,\vec{o}\}, a) &= \EE[\pi_\text{O}]{ \sum_{t'=t}^H c_\text{O}(\{\rvar{s}_{t'}, \rvec{o}_{t'}\}) \bigg| \{\rvar{s}_{t}, \rvec{o}_{t} \} = \{ s, \vec{o} \}, \rvar{a}_{t} = a}\\
    &= \EE[\pi_\text{O}]{ c_\text{O}(\{\rvar{s}_H, \rvec{o}_{H}\}) \big| \{\rvar{s}_{t}, \rvec{o}_{t}  \} = \{s, \vec{o} \}, \rvar{a}_{t} = a},
\end{align*}
and the \textit{optimal action-value function}, for any $t \in \{0,\ldots, H-1\}$, as $Q^*_t(\{s,\vec{o}\},a) = \min_{\pi_\text{O} \in \Pi_\text{S}^\text{D}} Q^{\pi_\text{O}}_t(\{s,\vec{o}\},a)$. We emphasize again that subscript $t$ can be dropped from $V^{\pi_\text{O}}_t(\{s,\vec{o}\})$, $V^*_t(\{s,\vec{o}\})$, $Q^{\pi_\text{O}}_t(\{s,\vec{o}\}, a)$ and $Q^*_t(\{s,\vec{o}\}, a)$ as it can be inferred from $\vec{o}$. Finally, we note that value functions, optimal value functions and optimal action-value functions satisfy the following set of Bellman equations:
\begin{align*}
    V^{\pi_\text{O}}_t(\{s,\vec{o}\}) &= \sum_{a \in \A} \pi_\text{O}(a|\{s, \vec{o}\}) \left( \sum_{s' \in \mathcal{S}} P^a(s'|s) V^{\pi_\text{O}}_{t+1}(\{s', \sigma(s,\vec{o},a)\})\right), \quad \forall t \in \{0, \ldots, H-1\}\\
    V^*_t(\{s,\vec{o}\}) &= \min_{a \in \A} \left\{ \sum_{s' \in \mathcal{S}} P^a(s'|s) V^*_{t+1}(\{s', \sigma(s,\vec{o},a)\})\right\}, \quad \forall t \in \{0, \ldots, H-1\}\\
    Q^*_t(\{s,\vec{o}\},a) &= \sum_{s' \in \mathcal{S}} P^a(s'|s) V^*_{t+1}(\{s', \sigma(s,\vec{o},a)\}) , \quad \forall t \in \{0, \ldots, H-1\}\\
    Q^*_t(\{s,\vec{o}\},a) &= \sum_{s' \in \mathcal{S}} P^a(s'|s) \min_{a' \in \A} \left\{Q^*_{t+1}(\{s', \sigma(s,\vec{o},a)\}, a')\right\}, \quad \forall t \in \{0, \ldots, H-2\}.
\end{align*}

\subsection{Proof of Proposition~\ref{proposition:one_to_one_mapping_histories_states}}
\label{appendix:B:one_to_one_mapping_histories_states_proof}

\begin{lemma}[Linear independence of exponential functions over $\mathbb{R}$]
    \label{lemma:linear_independence_of_exp_funcs}
    For any $ x \in \mathbb{R}$, $L \in \mathbb{N}$,  $c_0, \ldots, c_{L-1} \in \mathbb{R}$,  and distinct $\lambda_0, \ldots, \lambda_{L-1} \in \mathbb{R}$, if $\sum_{t=0}^{L-1} c_t e^{\lambda_t x} = 0$ then $c_0=c_1= \ldots = c_{L-1} = 0$, i.e., the exponentials $e^{\lambda_0 x}, \ldots, e^{\lambda_{L-1} x}$ are linearly independent over $\mathbb{R}$.
\end{lemma}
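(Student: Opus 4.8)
The plan is to prove the statement by reading the hypothesis as the assertion that the function $g(x) = \sum_{t=0}^{L-1} c_t e^{\lambda_t x}$ vanishes identically on $\mathbb{R}$, which is precisely what linear independence over $\mathbb{R}$ of these functions demands. I would proceed by induction on the number of exponentials $L$. For the base case $L=1$, the identity $c_0 e^{\lambda_0 x} = 0$ for all $x$, together with $e^{\lambda_0 x} > 0$, immediately yields $c_0 = 0$.

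For the inductive step I would assume the claim for $L-1$ distinct exponents. Since the $\lambda_t$ are distinct, I can relabel them so that $\lambda_0 > \lambda_1 > \cdots > \lambda_{L-1}$. Dividing the identity $\sum_{t=0}^{L-1} c_t e^{\lambda_t x} = 0$ by the strictly positive quantity $e^{\lambda_0 x}$ gives $c_0 + \sum_{t=1}^{L-1} c_t e^{(\lambda_t - \lambda_0) x} = 0$ for every $x \in \mathbb{R}$. Because $\lambda_t - \lambda_0 < 0$ for each $t \ge 1$, every exponential $e^{(\lambda_t - \lambda_0)x} \to 0$ as $x \to +\infty$, and since the sum is finite I may pass to the limit term by term to conclude $c_0 = 0$. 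The identity then reduces to $\sum_{t=1}^{L-1} c_t e^{\lambda_t x} = 0$ for all $x$, a linear combination of $L-1$ exponentials with distinct exponents, so the inductive hypothesis forces $c_1 = \cdots = c_{L-1} = 0$, completing the induction.

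An alternative I would keep in reserve is the Vandermonde argument: differentiating the identity $k$ times and evaluating at $x=0$ yields $\sum_{t=0}^{L-1} c_t \lambda_t^k = 0$ for $k = 0, \ldots, L-1$, a homogeneous linear system in the $c_t$ whose coefficient matrix is the Vandermonde matrix $(\lambda_t^k)_{k,t}$. Distinctness of the $\lambda_t$ makes this matrix invertible, so its only solution is $c_0 = \cdots = c_{L-1} = 0$.

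I do not expect a genuine obstacle here, since this is a classical fact. The only points requiring care are (i) reading the quantifier correctly — the vanishing must hold for all $x$, not a single $x$, as otherwise nontrivial relations exist — and (ii) justifying the passage to the limit, which is immediate because a finite sum of terms each tending to zero tends to zero. In the context of Proposition~\ref{proposition:one_to_one_mapping_histories_states}, I anticipate this lemma being applied with $\lambda_t = \log \gamma$-type exponents arising from the discounting, where distinctness of the relevant powers of $\gamma$ is what ultimately distinguishes occupancies associated with different histories.
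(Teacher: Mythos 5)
Your proof is correct, but your primary argument takes a genuinely different route from the paper's. The paper differentiates the identity $L-1$ times, assembles the resulting equations $\sum_{t} c_t \lambda_t^k e^{\lambda_t x} = 0$, $k=0,\ldots,L-1$, into a linear system whose coefficient matrix is the Vandermonde matrix in the $\lambda_t$, and invokes invertibility (distinct nodes) to force each $c_t e^{\lambda_t x}$, hence each $c_t$, to vanish; your ``reserve'' Vandermonde argument evaluated at $x=0$ is essentially this proof, in fact marginally cleaner since the unknowns become the $c_t$ themselves rather than $c_t e^{\lambda_t x}$. Your main argument instead orders the exponents $\lambda_0 > \lambda_1 > \cdots > \lambda_{L-1}$, divides by the dominant exponential, and lets $x \to +\infty$ to peel off one coefficient at a time by induction. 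What each approach buys: yours is more elementary (no differentiation, no determinant facts, only limits of a finite sum), while the paper's is algebraic, extracts all coefficients simultaneously, and would extend verbatim to distinct \emph{complex} exponents (replacing ``positive'' by ``nonzero''), whereas your induction leans on the real ordering of the $\lambda_t$. Your point (i) about quantifiers is also well taken and not pedantic: the lemma is true only under the reading ``if the sum vanishes for \emph{all} $x$,'' which both proofs implicitly require (the paper's because differentiation needs the identity to hold in a neighborhood), yet the paper's statement literally quantifies a single $x$, and its subsequent application in Proposition~\ref{proposition:one_to_one_mapping_histories_states} invokes the lemma at the single point $x=1$ --- there the conclusion must really be carried by the additional structure of the coefficients ($c_t \in \{-1,0,1\}$) rather than by linear independence of the exponential functions alone, so your observation flags a real gap in how the lemma interfaces with its downstream use.
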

\begin{proof}
    Let $f(x) = \sum_{t=0}^{L-1} c_t e^{\lambda_t x}$. It holds that $\frac{d^k f}{d x^k} = \sum_{t=0}^{L-1} c_t \lambda_t^{k} e^{\lambda_t x}$. We can repeatedly differentiate $f$ to obtain the following set of equalities:
    \begingroup
    \allowdisplaybreaks
    \begin{align*}
        \sum_{t=0}^{L-1} c_t e^{\lambda_t x} &= 0,\\
        \sum_{t=0}^{L-1} c_t \lambda_t e^{\lambda_t x} &= 0,\\
        \sum_{t=0}^{L-1} c_t \lambda_t^2 e^{\lambda_t x} &= 0,\\
        &\ldots\\
        \sum_{t=0}^{L-1} c_t \lambda_t^{L-1} e^{\lambda_t x} &= 0.
    \end{align*}
    \endgroup
    The set of equations above can be rearranged as follows
    \begin{equation}
        \begin{bmatrix}
            1 & 1  & \dots  & 1 \\
            \lambda_0 & \lambda_1 & \dots  & \lambda_{L-1} \\
            \lambda_0^2 & \lambda_1^2 & \dots  & \lambda_{L-1}^2 \\
            \vdots & \vdots & \ddots & \vdots \\
            \lambda_0^{L-1} & \lambda_1^{L-1} & \dots  & \lambda_{L-1}^{L-1}
        \end{bmatrix}
        \begin{bmatrix}
            c_0 e^{\lambda_0 x} \\
            c_1 e^{\lambda_1 x} \\
            c_2 e^{\lambda_2 x} \\
            \vdots \\
            c_{L-1} e^{\lambda_{L-1} x}
        \end{bmatrix}
        =
        \begin{bmatrix}
            0 \\
            0 \\
            0 \\
            \vdots \\
            0
        \end{bmatrix}.
    \end{equation}
    The square matrix above is known as the Vandermonde matrix and, since all $\lambda_0, \ldots, \lambda_{L-1}$ are distinct, the matrix has a non-zero determinant (hence it is invertible). Therefore, multiplying the equality above by the inverse of the Vandermonde matrix on the left we obtain
    \begin{equation}
        \begin{bmatrix}
            c_0 e^{\lambda_0 x} \\
            c_1 e^{\lambda_1 x} \\
            c_2 e^{\lambda_2 x} \\
            \vdots \\
            c_{L-1} e^{\lambda_{L-1} x}
        \end{bmatrix}
        =
        \begin{bmatrix}
            0 \\
            0 \\
            0 \\
            \vdots \\
            0
        \end{bmatrix}.
    \end{equation}
    Since functions $e^{\lambda_0 x}, \ldots, e^{\lambda_{L-1} x}$ are always positive, we have that $c_0 = c_1 = \ldots = c_{L-1} = 0$.
\end{proof}

\begin{customprop}{\ref{proposition:one_to_one_mapping_histories_states}}[One-to-one mapping between histories in $\mathcal{M}_f$ and states in $\mathcal{M}_\text{O}$]
    There exists a one-to-one mapping between histories $h_l = (s_0,a_0,s_1,a_1, \ldots, s_l) \in \S \times (\S \times \A)^l$ in $\M_f$, with $0 \le l \le H-1$, and states $\{s,\vec{o}\} \in \S \times \O$ in $\M_\text{O}$.
\end{customprop}
\begin{proof}
    For a given history $h_l = (s_0,a_0,s_1,a_1, \ldots, s_l) \in \S \times (\S \times \A)^l$ in $\M_f$, with $0 \le l \le H-1$, consider the mapping defined below that associates $h_l$ to a given state $\{s,\vec{o}\} \in \S \times \O$ for $\M_\text{O}$ by letting
    \begin{equation}
        \label{eq:history_to_occupancy_mdp_mapping}
         s = s_l \quad \text{ and } \quad o(s,a) = \sum_{t=0}^{l-1} \gamma^{t} \mathbf{1}(s_{t} = s, a_{t} = a), \; \forall s \in \S, a \in \A.
    \end{equation}
    We aim to show that the mapping above is a bijection between the set of possible histories in $\M_f$ and the discrete state space $\O$ in $\M_\text{O}$. Clearly, from the mapping above defined, each history $h_l$ in $\M_f$ is associated with a unique state in $\M_\text{O}$. Thus, what remains is to show that any two states $\{s_1, \vec{o}_1\}$ and $\{s_2, \vec{o}_2\}$ for $\M_\text{O}$ are equal under mapping \eqref{eq:history_to_occupancy_mdp_mapping} if and only if their associated histories $h^1$ and $h^2$ are equal. We now make two observations. First, for a given state $\{s,\vec{o}\}$ in $\M_\text{O}$, component $s$ is directly related, through mapping \eqref{eq:history_to_occupancy_mdp_mapping}, to the last state in the history $h_l$. Second, each history $h_l = (s_0,a_0,s_1,a_1, \ldots, s_l)$ will yield through mapping \eqref{eq:history_to_occupancy_mdp_mapping} a running occupancy $\vec{o}$ satisfying $\sum_{s,a} o(s,a) = \frac{1-\gamma^l}{1-\gamma}$; thus, histories $h_l$ with different lengths will yield different $\vec{o}$-vectors. Hence, we only need to show that two running occupancies $\vec{o}_1$ and $\vec{o}_2$, associated with histories $h^1$ and $h^2$ (both of length $l$), respectively, are the same if and only if their histories up to timestep $l-1$ are the same:
    \begin{itemize}
        \item If two histories $h^1$ and $h^2$ of length $l$ are the same, then it should be clear that their respective running occupancies, as defined through \eqref{eq:history_to_occupancy_mdp_mapping}, are also the same.
        \item If two running occupancies $\vec{o}_1$ and $\vec{o}_2$ are the same, then their associated histories are also the same. To prove this implication, we focus our attention to a given entry $(s,a)$ of the vectors $\vec{o}_1$ and $\vec{o}_2$. Running occupancy $\vec{o}_1$ is associated with an arbitrary history $h^1 = (s_{0}^1, a_{0}^1, s_{1}^1, a_{1}^1, \ldots,s_{l}^1)$; running occupancy $\vec{o}_2$ is associated with an arbitrary history $h^2 = (s_{0}^2, a_{0}^2, s_{1}^2, a_{1}^2, \ldots,s_{l}^2)$. If $\vec{o}_1 = \vec{o}_2$ then, for any $s \in \S, a \in \A$,
        \begingroup
        \allowdisplaybreaks
        \begin{align*}
            o_1(s,a) - o_2(s,a) &= 0\\
            \Leftrightarrow \quad \sum_{t=0}^{l-1} \gamma^t \mathbf{1}(s_{t}^1=s, a_t^1 = a) - \sum_{t=0}^{l-1} \gamma^t \mathbf{1}(s_{t}^2=s, a_t^2 = a) &=0 \\
            \Leftrightarrow \quad \sum_{t=0}^{l-1} \gamma^t \left( \mathbf{1}(s_{t}^1=s, a_t^1 = a) - \mathbf{1}(s_{t}^2=s, a_t^2 = a) \right) &= 0 \\
            \overset{\text{(a)}}{ \Leftrightarrow} \quad \sum_{t=0}^{l-1} \gamma^t c_t &= 0,
        \end{align*}
        \endgroup
        where in (a) we let $c_t \in \{-1, 0, 1\}$. Now, the only solution to the last equation above is $c_0 = c_1 = \ldots = c_{l-1} = 0$, which implies that $\mathbf{1}(s_{t}^1=s, a_t^1 = a) = \mathbf{1}(s_{t}^2=s, a_t^2 = a)$ for all $o \le t \le l -1$ and hence, the histories are the same. The fact that  $c_0 = c_1 = \ldots = c_{l-1} = 0$ is the only solution to the equation above follows from Lemma \ref{lemma:linear_independence_of_exp_funcs} by letting $L=l$, $x=1$, and $\lambda_t = \ln(\gamma)t$ (which implies that all $\lambda_t$ are distinct for $\gamma \in (0,1)$).
    \end{itemize}

    Thus, we conclude that there exists a one-to-one mapping, as defined in \eqref{eq:history_to_occupancy_mdp_mapping}, between every possible history in $\M_f$ up to timestep $H-1$ and states in $\M_\text{O}$.
\end{proof}

\subsection{Proof of Theorem~\ref{theo:gumdp_equivalence_occupancy_MDP}}
\label{appendix:B:equivalence_occupancy_MDP_proof}

\begin{customthm}{\ref{theo:gumdp_equivalence_occupancy_MDP}}[Solving $\M_f$ is ``equivalent'' to solving $\M_\text{O}$]
    The problem of finding a policy $\pi \in \Pi_\textnormal{NM}$ satisfying $\mathrm{OptGap}_H(\pi) \le \epsilon$, for any $\epsilon \in \mathbb{R}_0^+$, can be reduced to the problem of finding a policy $\pi_\text{O} \in \Pi_\textnormal{S}$ satisfying $ J_\text{O}(\pi_\text{O}) -  J_\text{O}^* \le \epsilon$. In particular, if $\pi^*_\text{O} = \argmin_{\pi_\text{O} \in \Pi_\textnormal{S}} J_\text{O}(\pi_\text{O})$, then the corresponding non-Markovian policy $\pi$ in $\M_f$ satisfies $\mathrm{OptGap}_H(\pi) = 0$. Finally, it holds that
    $\mathrm{OptGap}_H(\pi) = J_\text{O}(\pi_\text{O}) - J^*_\text{O}$,
    where $\pi_O$ is the stationary policy for $\M_\text{O}$ associated with the non-Markovian policy $\pi$ for $\M_f$.
\end{customthm}
\begin{proof}
    We start by showing that, for any horizon $H \in \mathbb{N}$ and policy $\pi \in \Pi_\text{NM}$, it holds that
    \begin{equation*}
        F_{1,H}(\pi) = J_\text{O}(\pi_\text{O}),
    \end{equation*}
    for $F_{1,H}(\pi)$ as defined in \eqref{eq:single_trial_truncated_objective} and $J_\text{O}(\pi_\text{O})$ as defined in \eqref{eq:occupancy_MDP_objective}, where $\pi_\text{O}$ is the stationary policy for $\M_\text{O}$ associated with the non-Markovian policy $\pi$ for $\M_f$.
    
    For any $H \in \mathbb{N}$, finite-horizon random trajectories $(\rvar{s}_0, \rvar{a}_0, \rvar{s}_1, \rvar{a}_1, \ldots, \rvar{s}_{H-1}, \rvar{a}_{H-1})$ in $\M_f$ are associated with the probability space  $(\Omega, \F, \mathbb{P}_\pi)$. We write specific trajectories as $\omega \in \Omega$, with $\omega = (s_0, a_0, s_1, a_1, \ldots, s_{H-1}, a_{H-1})$. We highlight that the probability of a given trajectory $\omega \in \Omega$ under policy $\pi \in \Pi_\text{NM}$ can be calculated as $\PP[\pi]{\omega} = p_0(s_0) \cdot \pi(a_0|h_0) \cdot P^{a_0}(s_0, s_1) \cdot \pi(a_1|h_1) \cdot P^{a_1}(s_1, s_2) \ldots P^{a_{H-2}}(s_{H-2}, s_{H-1}) \cdot \pi(a_{H-1}|h_{H-1})$. On the other hand, random trajectories $(\{\rvar{s}_0,\rvec{o}_0\}, \rvar{a}_0, \{\rvar{s}_1,\rvec{o}_1\}, \rvar{a}_1, \ldots, \{\rvar{s}_{H}, \rvec{o}_H\})$ in $\M_\text{O}$ are associated with probability space $(\Omega_\text{O}, \F_\text{O}, \mathbb{P}_{\pi_\text{O}}^O)$. We write specific trajectories as $\omega_\text{O} \in \Omega_\text{O}$, with $\omega_\text{O} = (\{s_0,\vec{o}_0\}, a_0, \{s_1,\vec{o}_1\}, a_1, \ldots, \{s_{H}, \vec{o}_H\})$.

    We start by noting that, for any trajectory $\omega_\text{O} = (\{s_0,\vec{o}_0\}, a_0, \{s_1,\vec{o}_1\}, a_1, \ldots, \{s_{H}, \vec{o}_H\}) \in \Omega_\text{O}$,
    \begingroup
    \allowdisplaybreaks
    \begin{align*}
        \mathbb{P}_{\pi_\text{O}}^\text{O}[\omega_\text{O}] &=  p_{0,\text{O}}(\{s_0,\vec{o}_0\}) \cdot \pi_\text{O}(a_0|\{s_0,\vec{o}_0\}) \cdot P_\text{O}^{a_0}(\{\rvar{s}_0,\rvec{o}_0\}, \{\rvar{s}_1,\rvec{o}_1\}) \cdot \ldots \\
        &\qquad \qquad \cdot \pi_\text{O}(a_{H-1}|\{s_{H-1},\vec{o}_{H-1}\}) \cdot P_\text{O}^{a_{H-1}}(\{s_{H-1},\vec{o}_{H-1}\}, \{s_{H},\vec{o}_{H}\}).\\
        &\overset{(a)}{=} p_{0}(s_0) \cdot \mathbf{1}(\vec{o}_0 = [0,\ldots,0]) \cdot \pi_\text{O}(a_0|\{s_0,\vec{o}_0\}) \cdot P^{a_0}(s_0,s_1) \cdot \mathbf{1}(\vec{o}_1 = \sigma(s_0,\vec{o}_0,a_0)) \cdot \ldots\\
        &\qquad \qquad \cdot \pi_\text{O}(a_{H-1}|\{s_{H-1},\vec{o}_{H-1}\}) \cdot P^{a_{H-1}}(s_{H-1}, s_{H}) \cdot \mathbf{1}(\vec{o}_H = \sigma(s_{H-1}, \vec{o}_{H-1}, a_{H-1}))\\
        &\overset{(b)}{=} p_{0}(s_0) \cdot \mathbf{1}(\vec{o}_0 = [0,\ldots,0]) \cdot \pi(a_0|h_0) \cdot P^{a_0}(s_0,s_1) \cdot \mathbf{1}(\vec{o}_1 = \sigma(s_0,\vec{o}_0,a_0)) \cdot \ldots\\
        &\qquad \qquad \cdot \pi(a_{H-1}|h_{H-1}) \cdot P^{a_{H-1}}(s_{H-1}, s_{H}) \cdot \mathbf{1}(\vec{o}_H = \sigma(s_{H-1}, \vec{o}_{H-1}, a_{H-1})) \\
        &\overset{(c)}{=} \PP[\pi]{\omega} \cdot P^{a_{H-1}}(s_{H-1}, s_{H}) \cdot \mathbf{1}(\vec{o}_0 = [0,\ldots,0]) \cdot \mathbf{1}(\vec{o}_1 = \sigma(s_0,\vec{o}_0,a_0)) \cdot \ldots \\
        &\qquad \qquad \cdot \mathbf{1}(\vec{o}_H = \sigma(s_{H-1}, \vec{o}_{H-1}, a_{H-1})),
    \end{align*}
    \endgroup
    where in (a) we note that component $\vec{o}$ of the state is initialized as a zero vector and then deterministically evolves according to $\sigma$; any sequence of $\vec{o}$-vectors that does not evolve according to $\sigma$ has zero probability under probability measure $\mathbb{P}_{\pi_\text{O}}^\text{O}$. In (b) we used the fact that any stationary policy $\pi_\text{O} \in \Pi_\text{S}$ for $\M_\text{O}$ can be mapped to a particular non-Markovian policy $\pi \in \Pi_\text{NM}$ in $\M_f$. In (c) we recall that, for $\omega = (s_0, a_0, s_1, a_1, \ldots, s_{H-1}, a_{H-1})$,
    $\PP[\pi]{\omega} = p_0(s_0) \cdot \pi(a_0|h_0) \cdot P^{a_0}(s_0, s_1) \cdot \pi(a_1|h_1) \cdot P^{a_1}(s_1, s_2) \ldots P^{a_{H-2}}(s_{H-2}, s_{H-1}) \cdot \pi(a_{H-1}|h_{H-1})$.

    Now, for any stationary policy $\pi_\text{O} \in \Pi_\text{S}$, it holds that
    \begingroup
    \allowdisplaybreaks
    \begin{align*}
        J_\text{O}(\pi_\text{O}) &= \EE{c_\text{O}(\{\rvar{s}_H, \rvec{o}_H\})} \\
        &= \sum_{\omega_\text{O} \in \Omega_\text{O}} \mathbb{P}_{\pi_\text{O}}^\text{O}[\omega_\text{O}] c_\text{O}(\{s_H, \vec{o}_H\}) \\
        &= \sum_{\omega_\text{O} \in \Omega_\text{O}} \PP[\pi]{\omega} \cdot P^{a_{H-1}}(s_{H-1}, s_{H}) \cdot \mathbf{1}(\vec{o}_0 = [0,\ldots,0]) \cdot \mathbf{1}(\vec{o}_1 = \sigma(s_0,\vec{o}_0,a_0)) \cdot \ldots \\
        &\qquad \qquad \cdot \mathbf{1}(\vec{o}_H = \sigma(s_{H-1}, \vec{o}_{H-1}, a_{H-1})) f\left(\frac{1-\gamma}{1-\gamma^H} \vec{o}_H\right) \\
        &\overset{(a)}{=} \sum_{\omega_\text{O} \in \Omega_\text{O}} \PP[\pi]{\omega} \cdot P^{a_{H-1}}(s_{H-1}, s_{H}) \cdot \mathbf{1}(\vec{o}_0 = [0,\ldots,0]) \cdot \mathbf{1}(\vec{o}_1 = \sigma(s_0,\vec{o}_0,a_0)) \cdot \ldots \\
        &\qquad \qquad \cdot \mathbf{1}(\vec{o}_H = \sigma(s_{H-1}, \vec{o}_{H-1}, a_{H-1})) f\left(\rvec{d}_\omega^{\pi,H}\right) \\
        &\overset{(b)}{=} \sum_{\omega \in \Omega} \PP[\pi]{\omega} f\left(\rvec{d}^{\pi,H}_\omega \right) \sum_{\vec{o}_0, \vec{o}_1, \ldots, \vec{o}_H \in \O} \sum_{s_H \in \S} P^{a_{H-1}}(s_{H-1}, s_{H}) \cdot \\
        &\qquad \qquad \mathbf{1}(\vec{o}_0 = [0,\ldots,0]) \cdot \mathbf{1}(\vec{o}_1 = \sigma(s_0,\vec{o}_0,a_0)) \cdot \ldots \cdot \mathbf{1}(\vec{o}_H = \sigma(s_{H-1}, \vec{o}_{H-1}, a_{H-1})) \\
        &\overset{(c)}{=} \sum_{\omega \in \Omega} \PP[\pi]{\omega} f\left(\rvec{d}^{\pi,H}_\omega \right) \\
        &= \mathbb{E}\left[ f(\rvec{d}^{\pi,H} )\right] \\
        &= F_{1,H}(\pi),
    \end{align*}
    \endgroup
    where in (a) we noted that, for any $\omega_\text{O} \in \Omega_\text{O}$, $f\left(\frac{1-\gamma}{1-\gamma^H} \vec{o}_H\right) = f\left(\rvec{d}_\omega^{\pi,H}\right)$. In (b), we split the sum over $\omega_\text{O} \in \Omega_\text{O}$ as a sum over $\omega \in \Omega$, a sum over each possible vector $\vec{o} \in \O$ across all timesteps, and a sum over the final state $s_H \in \S$ (not included in $\omega$). We also rearranged the sums by noting that some terms do not depend on some of the sums. In (c) we note that the inner sums over the $\vec{o}$-vectors and $s_H$ equal one.

    Hence, we have proven that, for any $H \in \mathbb{N}$ and $\pi \in \Pi_\text{NM}$, $F_{1,H}(\pi) = J_\text{O}(\pi_\text{O})$ holds, where, in light of Prop.~\ref{proposition:one_to_one_mapping_histories_states}, $\pi_\text{O}$ is the stationary policy for $\M_\text{O}$ associated with the non-Markovian policy $\pi$ for $\M_f$. Given this result, and due to the one-to-one mapping between non-Markovian policies for $\M_f$ and stationary policies for $\M_\text{O}$, it holds for any $\pi \in \Pi_\text{NM}$ that
    \begin{equation*}
        \mathrm{OptGap}_H(\pi) = F_{1,H}(\pi) - \min_{\pi_H \in \Pi_\textnormal{NM}} \left\{F_{1,H}(\pi_H) \right\} = J_\text{O}(\pi_\text{O}) - \min_{\pi'_\text{O} \in \Pi_\text{S}} J_\text{O}(\pi'_\text{O}),
    \end{equation*}
    and the conclusion follows.
\end{proof}

\subsection{Proof of Theorem~\ref{theo:np-hard-result}}
\label{appendix:B:np_hard_result_proof}

\begin{customthm}{\ref{theo:np-hard-result}}[NP-Hardness of policy optimization in the single-trial regime]
    Given a GUMDP with objective $F_{1,H}$ and a threshold value $\lambda \in \mathbb{R}$, it is NP-Hard to determine whether there exists a policy $\pi \in \Pi_\textnormal{NM}^\textnormal{D}$ satisfying $F_{1,H}(\pi) \le \lambda$.
\end{customthm}
\begin{proof}
    We reduce the subset sum problem to the policy existence problem in GUMDPs with objective $F_{1,H}$. The subset sum problem asks, given a set $\N = \{n_0, n_1, \ldots, n_{N-1}\}$ of $N$ non-negative integer numbers and a target sum $k \in \mathbb{N}$, whether there exists a subset of the numbers such that the sum of the elements in the set is $k$. The policy existence problem is: given a GUMDP with objective $F_{1,H}$ and a threshold value $\lambda \in \mathbb{R}$, does there exist a policy $\pi \in \Pi_\text{NM}^\text{D}$ such that $F_{1,H}(\pi) \le \lambda$.

    We map every instance of the subset sum problem as a GUMDP as follows: (i) the state space is $\S = \{s_0,s_1, \ldots, s_N\}$; (ii) the action space is $\A = \{a_{\text{include}}, a_\text{not-include}\}$; (iii) $P^a(s_{i+1}|s_{i}) = 1$ and zero otherwise for any $a \in \A$ and $i \in \{0, \ldots, N-1\}$, and $P^a(s_{N}|s_{N}) = 1$ for any $a \in \A$; (iv) $p_0(s_0) = 1$ and zero otherwise. We provide an illustration of the GUMDP in Fig.~\ref{fig:np_hardness_gumdp_appendix}. We describe a discounted occupancy for GUMDP above defined with the vector
    \begin{align*}
        \vec{d} &= [d(s_0, a_{\text{include}}), d(s_0, a_{\text{not-include}}), d(s_1, a_{\text{include}}), d(s_1, a_{\text{not-include}}), \ldots, \\
        &\qquad \qquad d(s_{N-1}, a_{\text{include}}), d(s_{N-1}, a_{\text{not-include}}), d(s_{N}, a_{\text{include}}), d(s_{N}, a_{\text{not-include}})].
    \end{align*}
    Then, we set $H \ge N$ and let $f(\vec{d}) = (\vec{n}^\top\vec{d} - k)^2$, where
    \begin{equation*}
        \vec{n} = \Bigg[\frac{1-\gamma^H}{1-\gamma}n_0 , 0, \frac{1-\gamma^H}{(1-\gamma)\gamma}n_1 , 0, \ldots, \frac{1-\gamma^H}{(1-\gamma)\gamma^{N-1}}n_{N-1}, 0, 0, 0\Bigg] .
    \end{equation*}
    It holds that
    \begin{align*}
        \min_{\pi \in \Pi_\text{NM}^\text{D}} F_{1,H}(\pi) = \min_{\pi \in \Pi_\text{NM}^\text{D}} \mathbb{E}\left[ f(\rvec{d}^{\pi,H}) \right] = \sum_{\omega \in \Omega} \mathbb{P}[\omega] f(\rvec{d}^{\pi,H}_\omega).
    \end{align*}

    For a given policy $\pi \in \Pi_\text{NM}^\text{D}$, only one trajectory $\omega \in \Omega$ has non-zero probability. The vector $\rvec{d}^{\pi,H}_\omega$ associated with such a trajectory can be described as follows, for any $s_i \in \{0, \ldots, N-1\}$: (i) if at state $s_i$, $\pi$ selects $a_{\text{include}}$ then entry $\rvar{d}^{\pi,H}_\omega(s_i,a_{\text{include}}) = \frac{1-\gamma}{1-\gamma^H}\gamma^i$ and $\rvar{d}^{\pi,H}_\omega(s_i,a_{\text{not-include}}) = 0$; (ii) if at state $s_i$, $\pi$ selects $a_{\text{not-include}}$ then entry  $\rvar{d}^{\pi,H}_\omega(s_i,a_{\text{include}}) = 0$ and entry $\rvar{d}^{\pi,H}_\omega(s_i,a_{\text{not-include}}) = \frac{1-\gamma}{1-\gamma^H}\gamma^i$. The action selected at $s_N$ is irrelevant since it does not affect the objective value. The intuition behind the GUMDP above defined is that, at each state $s_i$ for $i \in \{0, \ldots, N-1\}$, the policy needs to decide on whether to select action $a_{\text{include}}$ and, therefore, include term $n_i$ in the sum, or to select action $a_{\text{not-include}}$ and, therefore, not include term $n_i$ in the sum. We build the vector $\vec{n}$ to reflect such behavior, where each entry in $\vec{n}$ associated with the state $s_i$ and action $a_{\text{include}}$ has a normalizing constant of $\frac{1-\gamma^H}{(1-\gamma)\gamma^i}$ to account for the fact that the occupancy is discounted, as introduced in \eqref{eq:estimator_discounted_truncated}. Thus, it can be seen that every policy $\pi \in \Pi_\text{NM}^\text{D}$ will induce a particular trajectory $\omega \in \Omega$ with probability one and the sum of the numbers selected by the policy is given by $\vec{n}^\top \vec{d}^{\pi,H}_\omega$. Finally, the objective is such that $f(\vec{d}) = 0$ if and only if the sum of the selected numbers equals $k$. The policy existence problem then asks whether there exists a policy $\pi \in \Pi_\text{NM}^\text{D}$ such that $F_{1,H}(\pi) \le \lambda$. By setting $\lambda=0$ we are asking whether there exists a policy such that $F_{1,H}(\pi) \le 0$. Since $f(\vec{d}) = 0$ if and only if the sum of the selected numbers equals $k$, we completed our reduction from the subset problem to the policy existence problem in GUMDPs with objective $F_{1,H}$.
    \begin{figure}[h]
        \centering
        \includegraphics[width=0.65\linewidth]{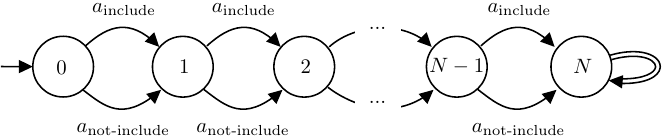}
        \caption{GUMDP instance used in the NP-Hardness proof.}
        \label{fig:np_hardness_gumdp_appendix}
    \end{figure}
\end{proof}

\section{Supplementary materials for Sec.~\ref{sec:exp_results}}
\label{appendix:C}

\subsection{Tasks and environments}
We consider three tasks: (i) maximum state entropy exploration \citep{hazan_2019}, where the objective is to visit all state-action pairs as uniformly as possible; imitation learning \citep{abbeel_2004}, where the objective is to imitate a given behaviour policy; and (iii) adversarial MDPs \citep{rosenberg_2019}, where an adversary player selects the cost function that yields the highest cost. We refer to Sec.~\ref{sec:background:GUMDPs} for the exact definition of the objective function for each of these tasks. We normalize all objective functions to lie in the $[0,1]$ interval. We consider two sets of environments. The first set corresponds to the illustrative GUMDPs depicted in Fig.~\ref{fig:illustrative_gumdps}, each associated with one of the tasks. The second set of environments come from the OpenAI Gym library \citep{openai_gym}. We consider the FrozenLake (FL), the Taxi, and the Mountaincar (MC) environments. For the MC environment, we partitioned the original state space using equally spaced bins (we consider 10 bins per dimension). For the FL, Taxi and MC environments, the task of imitation learning consists in imitating and approximately optimal policy.

\subsection{Experimental methodology, baselines, and hyperparameters}
We perform 10 runs per experimental setting and report the 90\% bootstrapped confidence interval. We let $\gamma = 0.9$. We consider two baselines. The first baseline is the random policy, $\pi_\text{Random}$. The second baseline is the optimal policy for the infinite trials formulation \eqref{eq:gumdp_objective}, $\pi_\text{Solver}^*$, that we calculate by solving a constrained optimization problem with objective $f$ using the Gurobi optimizer \citep{gurobi}. More precisely, to compute $\pi_\text{Solver}^*$ we first solve the following optimization problem:
\begin{equation*}
    \vec{d}^* = \argmin_{\vec{d} \in \D} f(\vec{d}),
\end{equation*}
\begin{equation*}
    \D = \{\vec{d} \in \mathbb{R}^{|\S||\A|}: d(s,a) \ge 0 \; \forall s, a, \sum_a d(s,a) = (1-\gamma) p_0(s) + \gamma \sum_{s',a} P^a(s|s') d(s',a) \; \forall s\}
\end{equation*}
Then, we let $\pi^*_\text{Solver}(a|s) = d^*(s,a) / \sum_{a'} d^*(s,a')$. For the illustrative GUMDPs we directly use the respective initial states distribution, $p_0$, and the transition probablity matrix, $P^a$. For the case of the OpenAI Gym environments we run a samling procedure to first estimate $p_0$ and $P^a$, and then we feed the estimated quantities to the optimization solver.

We denote the policy induced by our MCTS algorithm as $\pi_\text{MCTS}$. We use 4000 as the default number of iterations of the MCTS algorithm, but we also provide results for $10, 20, 50, 100, 500, 1 000, 2 000, 3 000, 4000$ iteration steps. We submit our code in the zip file with our submission. 

Our experiments required modest computational resources, with each experimental setting running in under an hour on a CPU cluster.

\subsection{Complete experimental results}

\subsubsection{Maximum state entropy exploration, $\M_{f,1}$}
\begin{figure*}[h]
    \centering
    \begin{subfigure}[b]{0.32\textwidth}
        \centering
        \includegraphics[height=3.5cm]{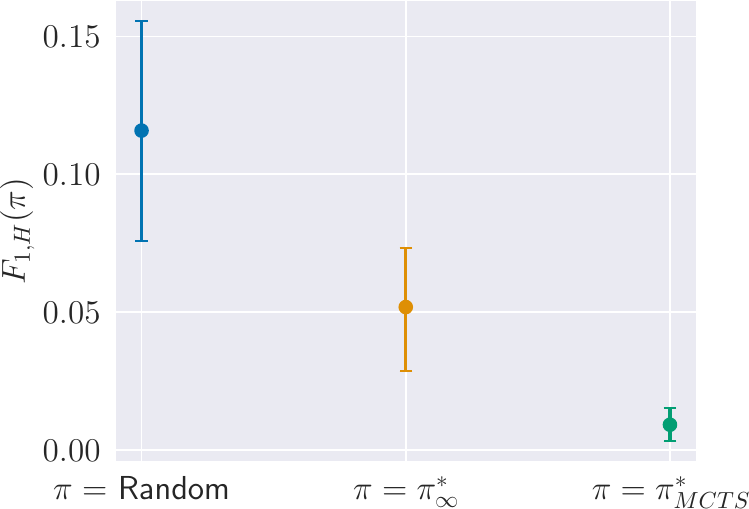}
        \caption{}
    \end{subfigure}
    \hfill
    \begin{subfigure}[b]{0.32\textwidth}
        \centering
        \includegraphics[height=3.5cm]{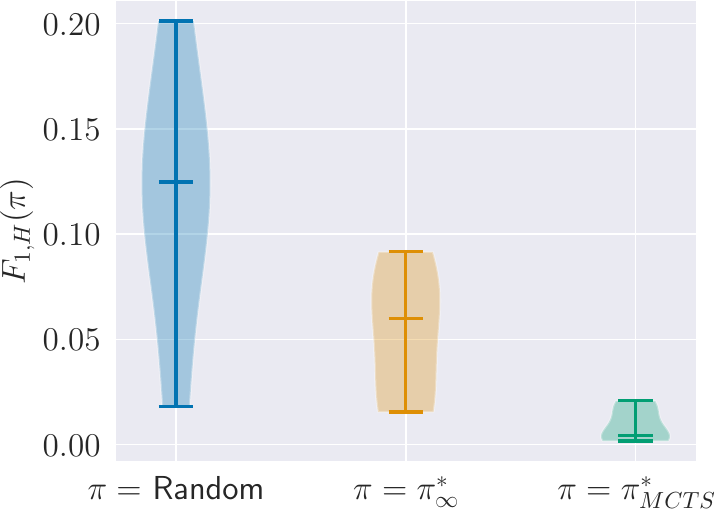}
        \caption{}
    \end{subfigure}
    \hfill
    \begin{subfigure}[b]{0.32\textwidth}
        \centering
        \includegraphics[height=3.5cm]{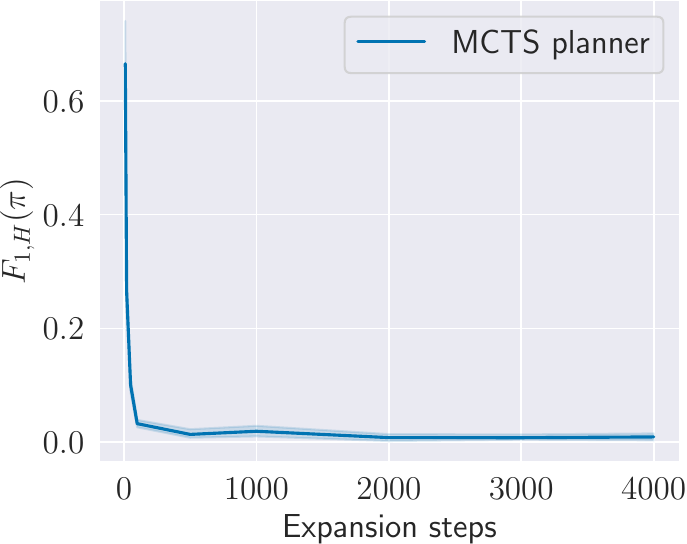}
        \caption{}
    \end{subfigure}
    \caption{Maximum state entropy exploration, $\M_{f,1}$: (a) - Mean single-trial objective $F_{1,H}(\pi)$ obtained by different policies. Error bars correspond to the $90\%$ mean confidence interval. (b) - Distribution of the single-trial objective $F_{1,H}(\pi)$ obtained by different policies. (c) - Mean single-trial objective $F_{1,H}(\pi)$ obtained by the MCTS-based algorithm as a function of the number of expansion steps. Shaded areas correspond to the $90\%$ mean confidence interval. Across all plots, lower is better.}
\end{figure*}%

\clearpage
\subsubsection{Maximum state entropy exploration, FrozenLake}
\begin{figure*}[h]
    \centering
    \begin{subfigure}[b]{0.32\textwidth}
        \centering
        \includegraphics[height=3.5cm]{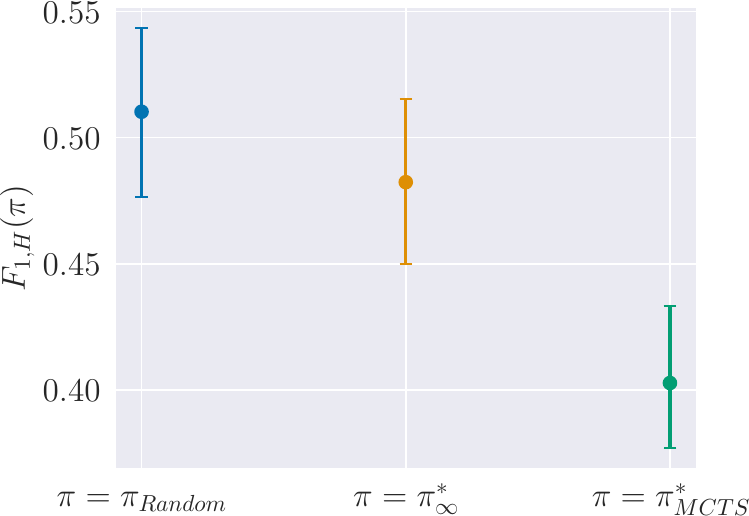}
        \caption{}
    \end{subfigure}
    \hfill
    \begin{subfigure}[b]{0.32\textwidth}
        \centering
        \includegraphics[height=3.5cm]{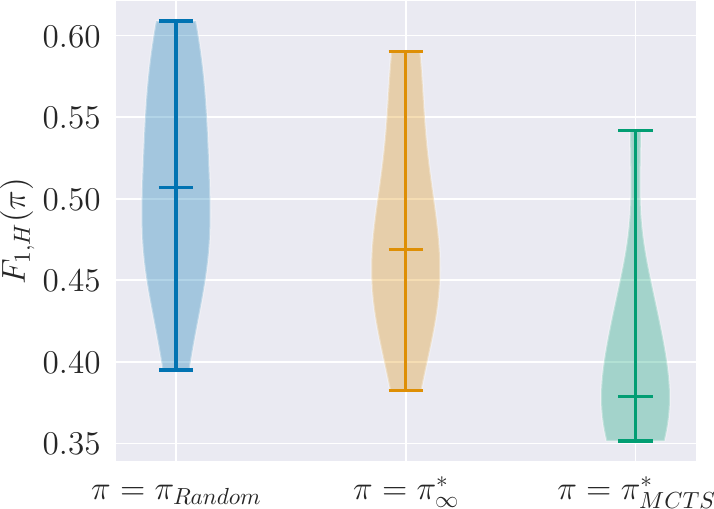}
        \caption{}
    \end{subfigure}
    \hfill
    \begin{subfigure}[b]{0.32\textwidth}
        \centering
        \includegraphics[height=3.5cm]{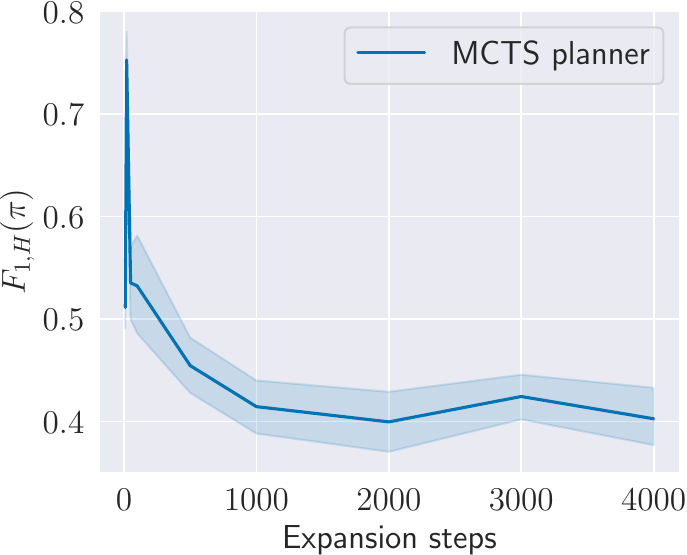}
        \caption{}
    \end{subfigure}
    \caption{Maximum state entropy exploration, FrozenLake: (a) - Mean single-trial objective $F_{1,H}(\pi)$ obtained by different policies. Error bars correspond to the $90\%$ mean confidence interval. (b) - Distribution of the single-trial objective $F_{1,H}(\pi)$ obtained by different policies. (c) - Mean single-trial objective $F_{1,H}(\pi)$ obtained by the MCTS-based algorithm as a function of the number of expansion steps. Shaded areas correspond to the $90\%$ mean confidence interval. Across all plots, lower is better.}
\end{figure*}%

\subsubsection{Maximum state entropy exploration, Taxi}
\begin{figure*}[h]
    \centering
    \begin{subfigure}[b]{0.32\textwidth}
        \centering
        \includegraphics[height=3.5cm]{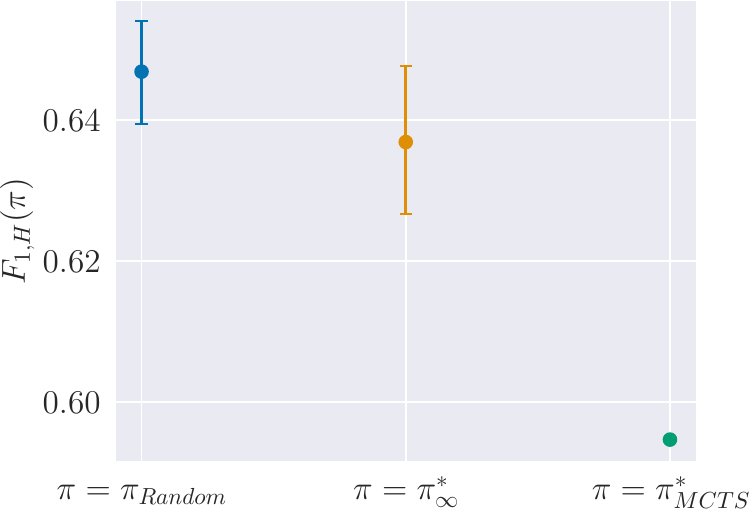}
        \caption{}
    \end{subfigure}
    \hfill
    \begin{subfigure}[b]{0.32\textwidth}
        \centering
        \includegraphics[height=3.5cm]{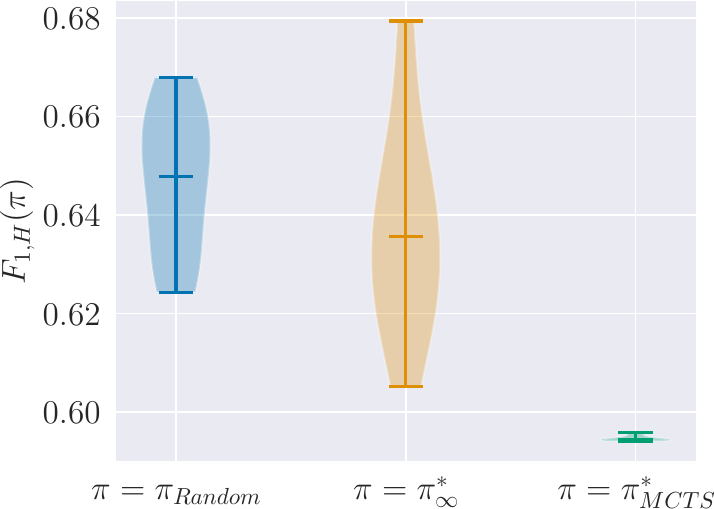}
        \caption{}
    \end{subfigure}
    \hfill
    \begin{subfigure}[b]{0.32\textwidth}
        \centering
        \includegraphics[height=3.5cm]{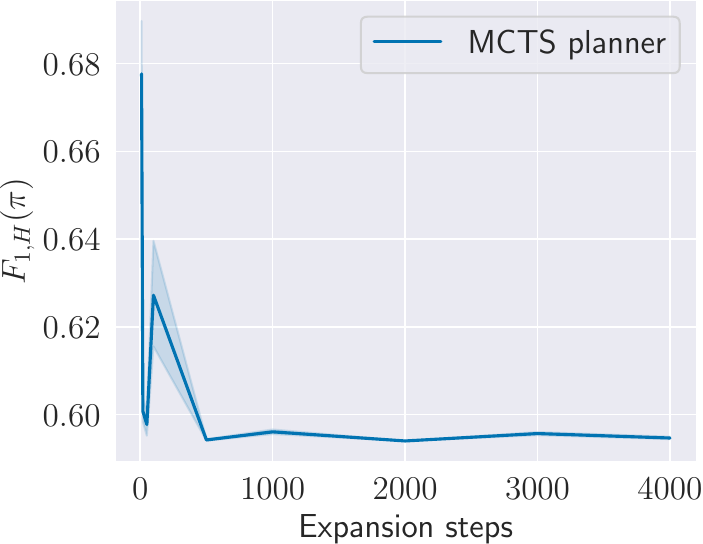}
        \caption{}
    \end{subfigure}
    \caption{Maximum state entropy exploration, Taxi: (a) - Mean single-trial objective $F_{1,H}(\pi)$ obtained by different policies. Error bars correspond to the $90\%$ mean confidence interval. (b) - Distribution of the single-trial objective $F_{1,H}(\pi)$ obtained by different policies. (c) - Mean single-trial objective $F_{1,H}(\pi)$ obtained by the MCTS-based algorithm as a function of the number of expansion steps. Shaded areas correspond to the $90\%$ mean confidence interval. Across all plots, lower is better.}
\end{figure*}%

\clearpage
\subsubsection{Maximum state entropy exploration, Mountaincar}
\begin{figure*}[h]
    \centering
    \begin{subfigure}[b]{0.32\textwidth}
        \centering
        \includegraphics[height=3.5cm]{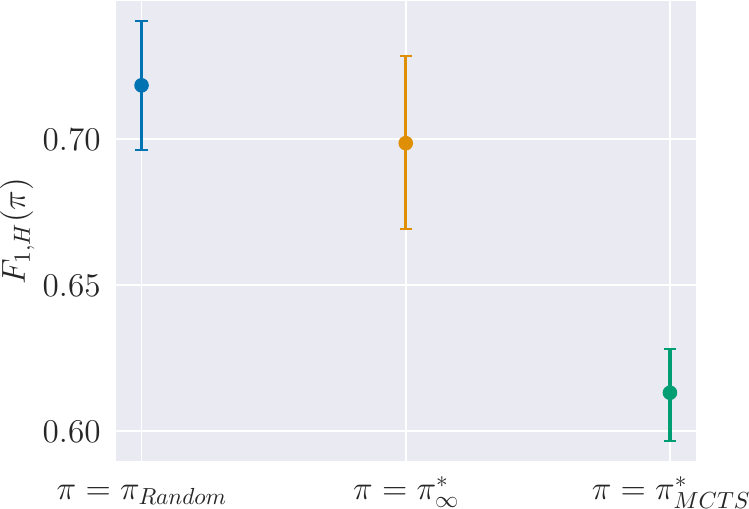}
        \caption{}
    \end{subfigure}
    \hfill
    \begin{subfigure}[b]{0.32\textwidth}
        \centering
        \includegraphics[height=3.5cm]{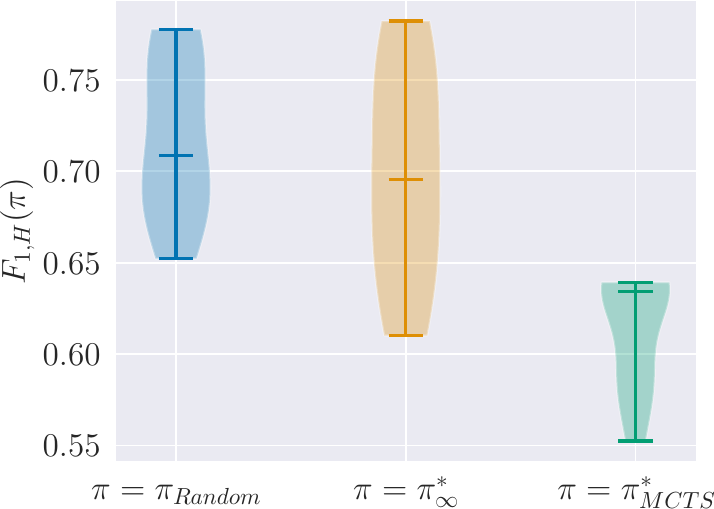}
        \caption{}
    \end{subfigure}
    \hfill
    \begin{subfigure}[b]{0.32\textwidth}
        \centering
        \includegraphics[height=3.5cm]{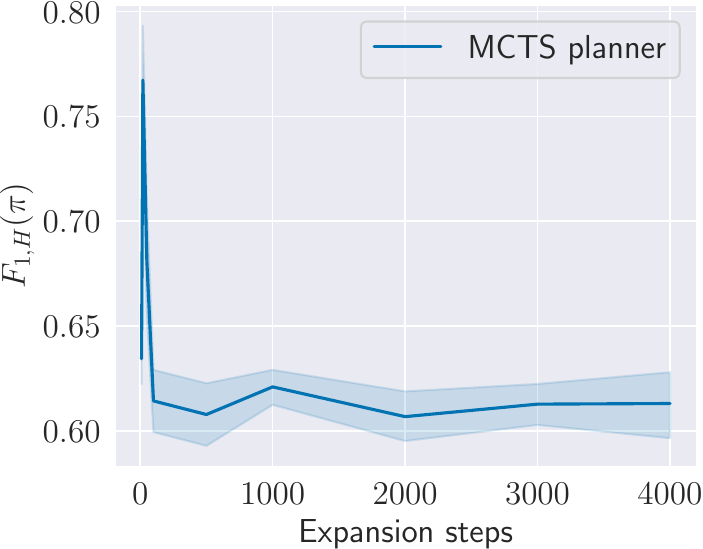}
        \caption{}
    \end{subfigure}
    \caption{Maximum state entropy exploration, Mountaincar: (a) - Mean single-trial objective $F_{1,H}(\pi)$ obtained by different policies. Error bars correspond to the $90\%$ mean confidence interval. (b) - Distribution of the single-trial objective $F_{1,H}(\pi)$ obtained by different policies. (c) - Mean single-trial objective $F_{1,H}(\pi)$ obtained by the MCTS-based algorithm as a function of the number of expansion steps. Shaded areas correspond to the $90\%$ mean confidence interval. Across all plots, lower is better.}
\end{figure*}%

\subsubsection{Imitation learning, $\M_{f,2}$}
\begin{figure*}[h]
    \centering
    \begin{subfigure}[b]{0.32\textwidth}
        \centering
        \includegraphics[height=3.5cm]{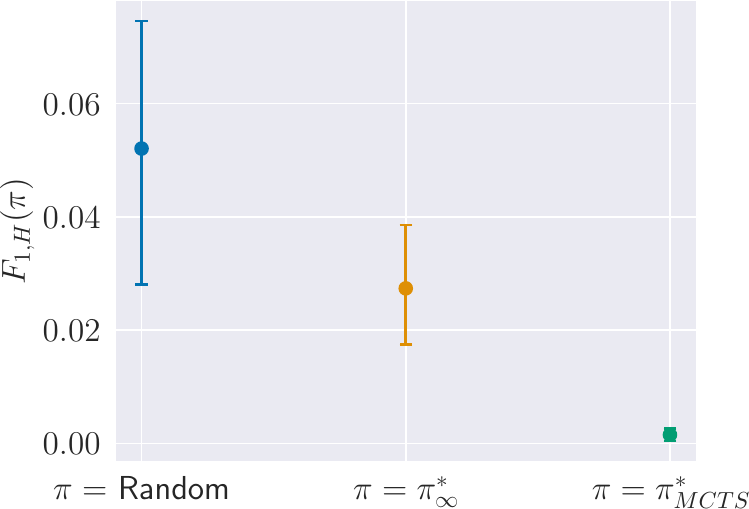}
        \caption{}
    \end{subfigure}
    \hfill
    \begin{subfigure}[b]{0.32\textwidth}
        \centering
        \includegraphics[height=3.5cm]{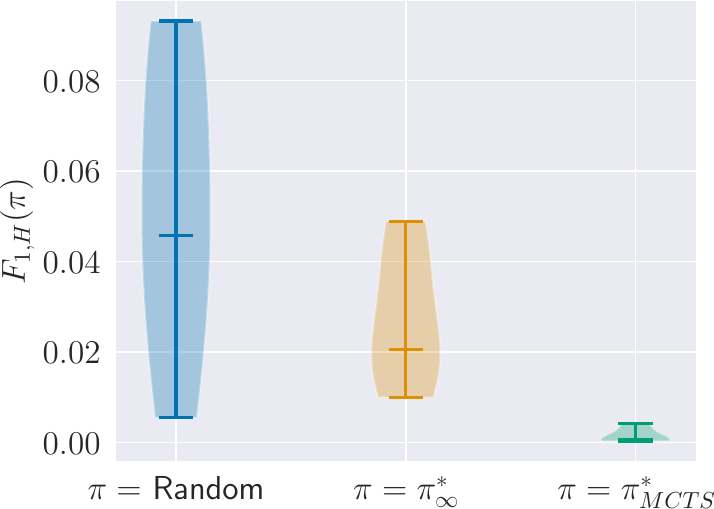}
        \caption{}
    \end{subfigure}
    \hfill
    \begin{subfigure}[b]{0.32\textwidth}
        \centering
        \includegraphics[height=3.5cm]{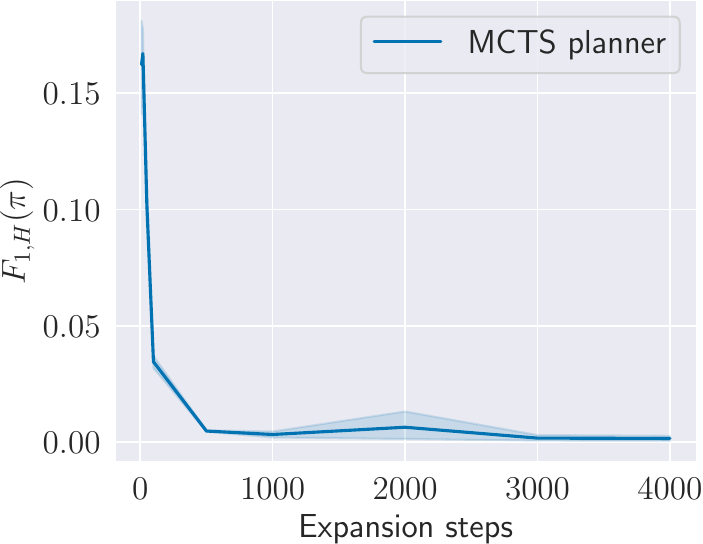}
        \caption{}
    \end{subfigure}
    \caption{Imitation learning, $\M_{f,2}$: (a) - Mean single-trial objective $F_{1,H}(\pi)$ obtained by different policies. Error bars correspond to the $90\%$ mean confidence interval. (b) - Distribution of the single-trial objective $F_{1,H}(\pi)$ obtained by different policies. (c) - Mean single-trial objective $F_{1,H}(\pi)$ obtained by the MCTS-based algorithm as a function of the number of expansion steps. Shaded areas correspond to the $90\%$ mean confidence interval. Across all plots, lower is better.}
\end{figure*}%

\clearpage
\subsubsection{Imitation learning, FrozenLake}
\begin{figure*}[h]
    \centering
    \begin{subfigure}[b]{0.32\textwidth}
        \centering
        \includegraphics[height=3.5cm]{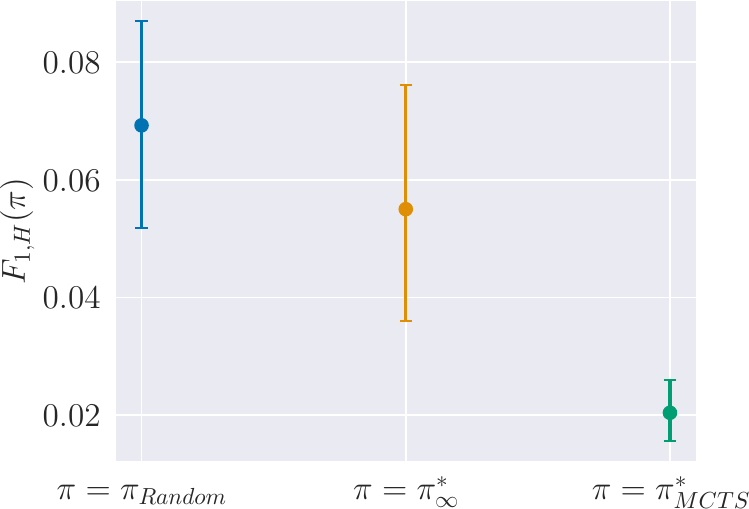}
        \caption{}
    \end{subfigure}
    \hfill
    \begin{subfigure}[b]{0.32\textwidth}
        \centering
        \includegraphics[height=3.5cm]{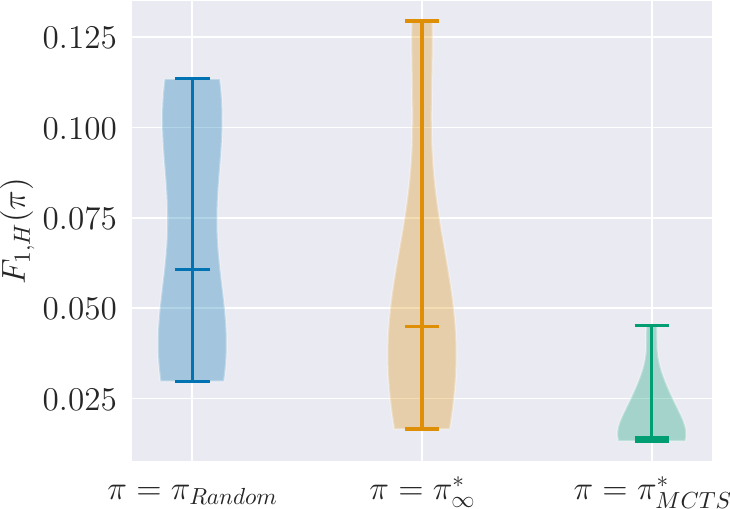}
        \caption{}
    \end{subfigure}
    \hfill
    \begin{subfigure}[b]{0.32\textwidth}
        \centering
        \includegraphics[height=3.5cm]{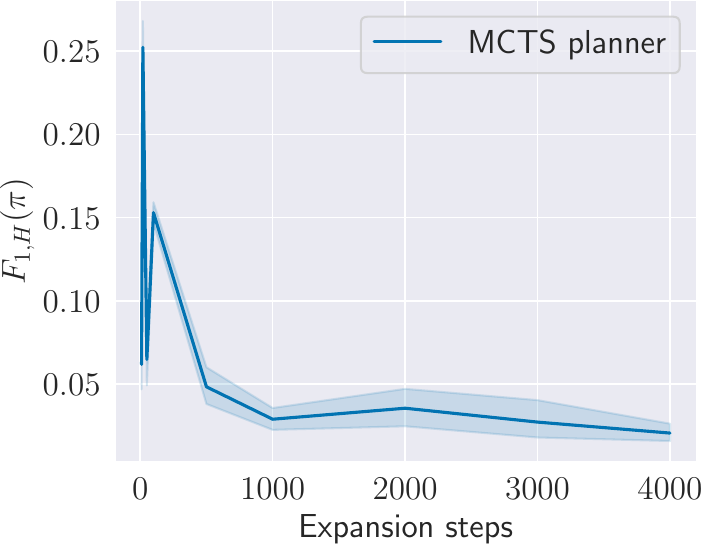}
        \caption{}
    \end{subfigure}
    \caption{Imitation learning, FrozenLake: (a) - Mean single-trial objective $F_{1,H}(\pi)$ obtained by different policies. Error bars correspond to the $90\%$ mean confidence interval. (b) - Distribution of the single-trial objective $F_{1,H}(\pi)$ obtained by different policies. (c) - Mean single-trial objective $F_{1,H}(\pi)$ obtained by the MCTS-based algorithm as a function of the number of expansion steps. Shaded areas correspond to the $90\%$ mean confidence interval. Across all plots, lower is better.}
\end{figure*}%

\subsubsection{Imitation learning, Taxi}
\begin{figure*}[h]
    \centering
    \begin{subfigure}[b]{0.32\textwidth}
        \centering
        \includegraphics[height=3.5cm]{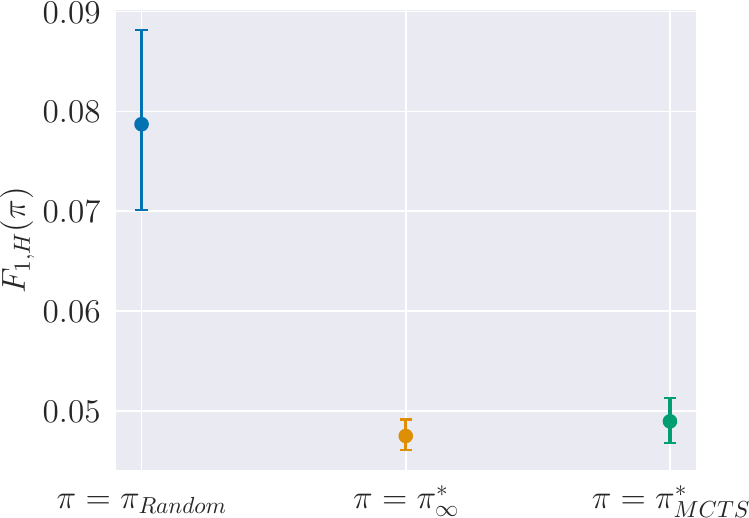}
        \caption{}
    \end{subfigure}
    \hfill
    \begin{subfigure}[b]{0.32\textwidth}
        \centering
        \includegraphics[height=3.5cm]{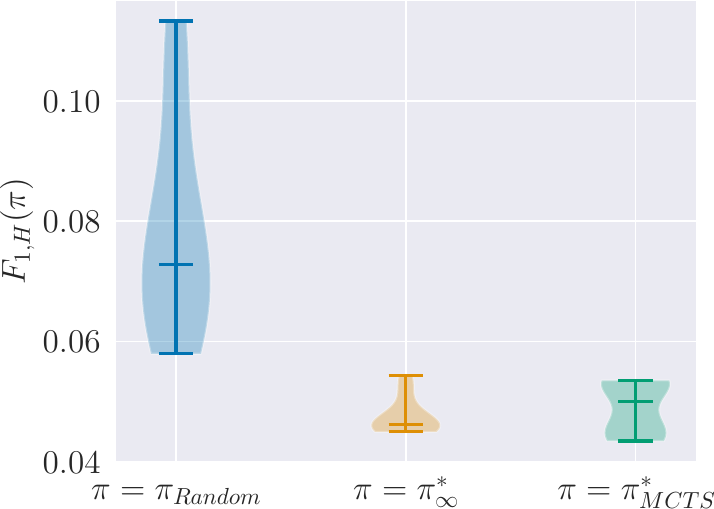}
        \caption{}
    \end{subfigure}
    \hfill
    \begin{subfigure}[b]{0.32\textwidth}
        \centering
        \includegraphics[height=3.5cm]{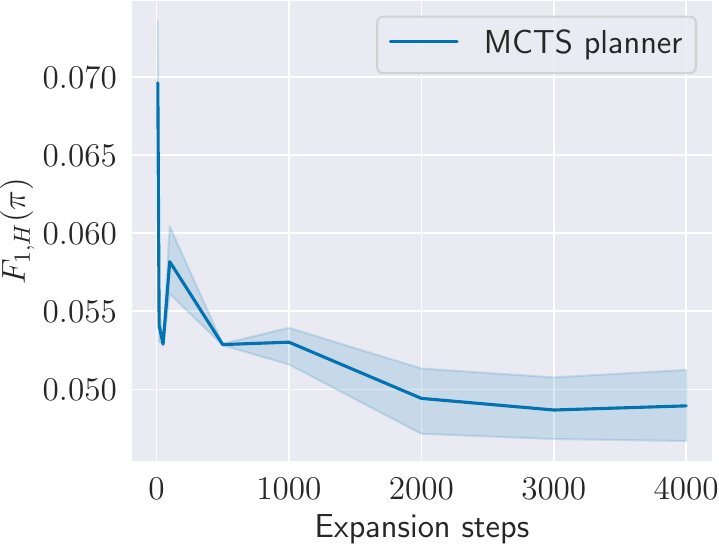}
        \caption{}
    \end{subfigure}
    \caption{Imitation learning, Taxi: (a) - Mean single-trial objective $F_{1,H}(\pi)$ obtained by different policies. Error bars correspond to the $90\%$ mean confidence interval. (b) - Distribution of the single-trial objective $F_{1,H}(\pi)$ obtained by different policies. (c) - Mean single-trial objective $F_{1,H}(\pi)$ obtained by the MCTS-based algorithm as a function of the number of expansion steps. Shaded areas correspond to the $90\%$ mean confidence interval. Across all plots, lower is better.}
\end{figure*}%

\clearpage
\subsubsection{Imitation learning, Mountaincar}
\begin{figure*}[h]
    \centering
    \begin{subfigure}[b]{0.32\textwidth}
        \centering
        \includegraphics[height=3.5cm]{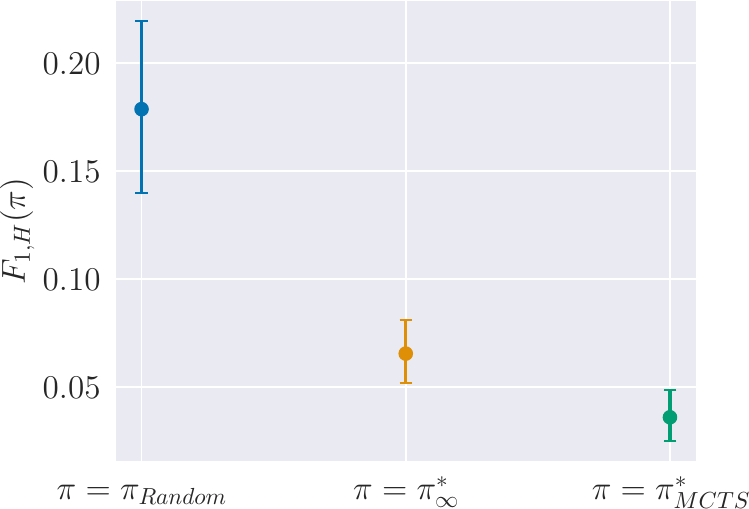}
        \caption{}
    \end{subfigure}
    \hfill
    \begin{subfigure}[b]{0.32\textwidth}
        \centering
        \includegraphics[height=3.5cm]{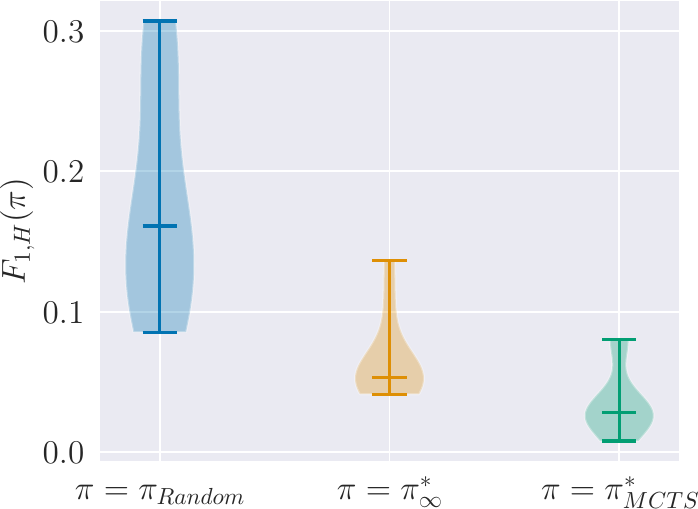}
        \caption{}
    \end{subfigure}
    \hfill
    \begin{subfigure}[b]{0.32\textwidth}
        \centering
        \includegraphics[height=3.5cm]{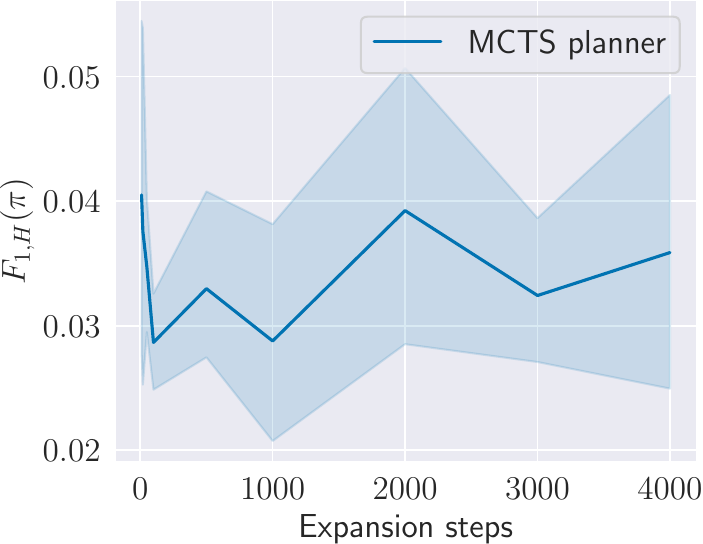}
        \caption{}
    \end{subfigure}
    \caption{Imitation learning, MountainCar: (a) - Mean single-trial objective $F_{1,H}(\pi)$ obtained by different policies. Error bars correspond to the $90\%$ mean confidence interval. (b) - Distribution of the single-trial objective $F_{1,H}(\pi)$ obtained by different policies. (c) - Mean single-trial objective $F_{1,H}(\pi)$ obtained by the MCTS-based algorithm as a function of the number of expansion steps. Shaded areas correspond to the $90\%$ mean confidence interval. Across all plots, lower is better.}
\end{figure*}%

\subsubsection{Adversarial MDP}
\begin{figure*}[h]
    \centering
    \begin{subfigure}[b]{0.32\textwidth}
        \centering
        \includegraphics[height=3.5cm]{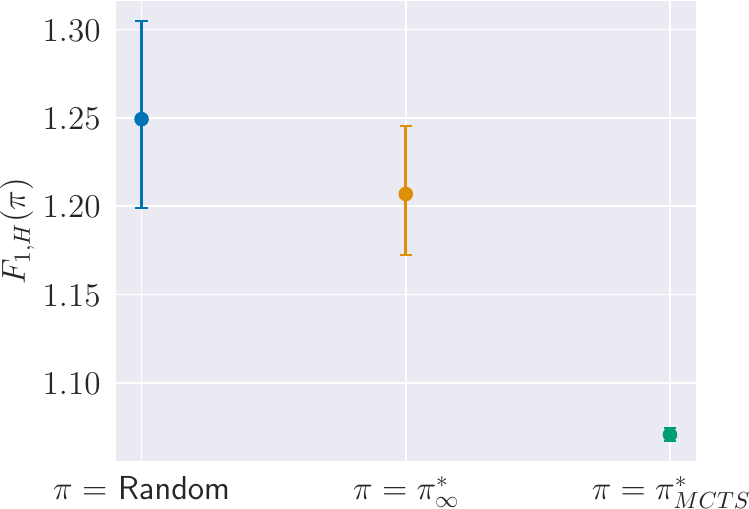}
        \caption{}
    \end{subfigure}
    \hfill
    \begin{subfigure}[b]{0.32\textwidth}
        \centering
        \includegraphics[height=3.5cm]{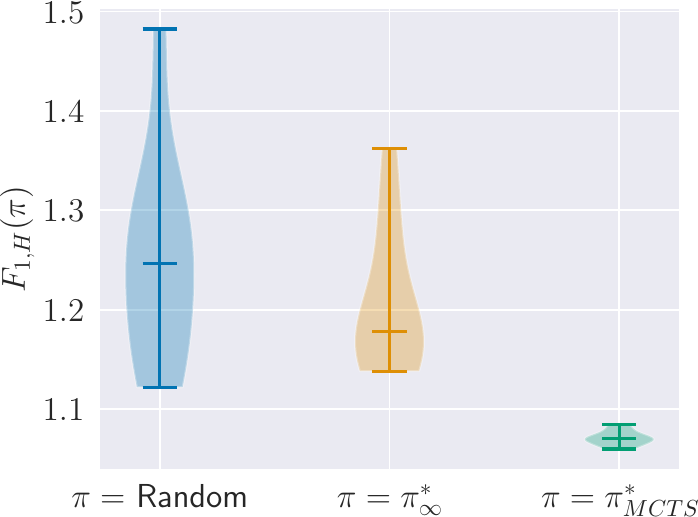}
        \caption{}
    \end{subfigure}
    \hfill
    \begin{subfigure}[b]{0.32\textwidth}
        \centering
        \includegraphics[height=3.5cm]{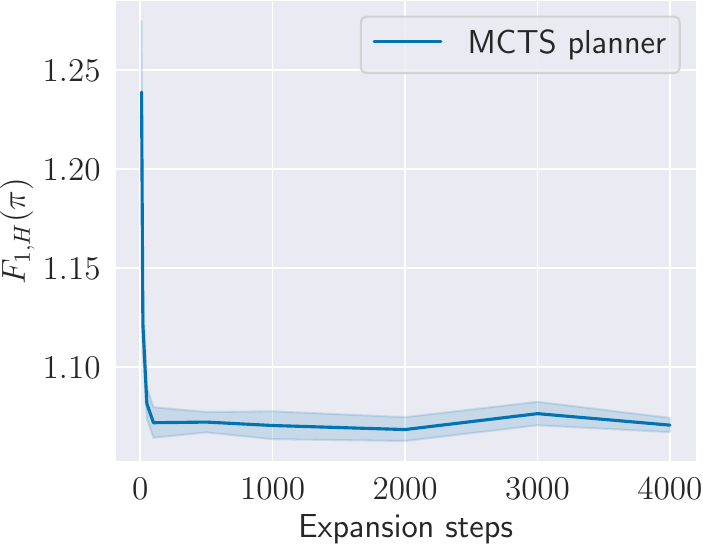}
        \caption{}
    \end{subfigure}
    \caption{Adversarial MDP: (a) - Mean single-trial objective $F_{1,H}(\pi)$ obtained by different policies. Error bars correspond to the $90\%$ mean confidence interval. (b) - Distribution of the single-trial objective $F_{1,H}(\pi)$ obtained by different policies. (c) - Mean single-trial objective $F_{1,H}(\pi)$ obtained by the MCTS-based algorithm as a function of the number of expansion steps. Shaded areas correspond to the $90\%$ mean confidence interval. Across all plots, lower is better.}
\end{figure*}%

\end{document}